\crefname{assumption}{Assumption}{Assumptions}
\title{Convergence and Dynamical Behavior of the ADAM Algorithm for Non-Convex Stochastic Optimization}
\author{Anas Barakat \and Pascal Bianchi
\footnotemark[1]\thanks{LTCI, T\'el\'ecom Paris, Institut polytechnique de Paris, France. (\email{firstname.name@telecom-paristech.fr})
  }}
\DeclareMathOperator{\diag}{diag}
\begin{document}

\maketitle

\begin{abstract}
  \adam\ is a popular variant of stochastic gradient descent for
  finding a local minimizer of a function.
  In the constant stepsize regime, assuming that the objective function is differentiable and non-convex,
  we establish the convergence in the long run of the iterates to a stationary point under a stability condition.
  The key ingredient is the introduction of a continuous-time
  version of \adam, under the form of a non-autonomous ordinary
  differential equation.
  This continuous-time system is a relevant approximation of the \adam\
  iterates, in the sense that the interpolated \adam\ process
  converges weakly towards the solution to the ODE.
  The existence and the uniqueness of the
  solution are established. We further show the convergence of the solution
  towards the critical points of the objective function
  and quantify its convergence rate under a \L{}ojasiewicz assumption.
  Then, we introduce a novel decreasing stepsize version of \adam\,.
  Under mild assumptions, it is shown that the iterates are almost surely bounded
  and converge almost surely to critical points of the objective function.
  Finally, we analyze the fluctuations of the algorithm by means of a conditional central limit theorem.
\end{abstract}

\begin{keywords}
  Stochastic approximation, 
  Dynamical systems,
  Adaptive gradient methods.
\end{keywords}
\begin{AMS}
  62L20, 65K05, 34A12, 37C60
\end{AMS}

\section{Introduction}

Consider the problem of finding a local minimizer of the expectation
$F(x)\eqdef\bE(f(x,\xi))$ w.r.t. $x\in \bR^d$, where $f(\,.\,,\xi)$ is
a possibly non-convex function depending on some random
variable~$\xi$.
The distribution of $\xi$ is assumed unknown, but revealed online by
the observation of iid copies $(\xi_n:n\geq 1)$ of the r.v.~$\xi$.
Stochastic gradient descent (SGD) is the most classical algorithm
to search for such a minimizer.
Variants of SGD which include an inertial term have also become very popular.
In these methods, the update rule depends on a parameter called
the \emph{learning rate}, which is generally assumed constant or vanishing.
These algorithms, although widely used, have at least two limitations.
First, the choice of the learning rate is generally difficult; large learning rates result in
large fluctuations of the estimate, whereas small learning rates
induce slow convergence.  Second, a common learning rate is used for
every coordinate despite the possible discrepancies in the values of
the gradient vector's coordinates.

To alleviate these limitations, the popular
\adam\ algorithm \cite{kingma2014adam} adjusts the learning rate
coordinate-wise, as a function of the past values of the squared
gradient vectors' coordinates. The algorithm thus combines the assets
of inertial methods with an adaptive per-coordinate learning rate
selection. Finally, the algorithm includes a so-called
\emph{bias correction} step. Acting on the current estimate of the gradient vector,
this step is especially useful during the early iterations.

Despite the growing popularity of the algorithm, only few works
investigate its behavior from a theoretical point of
view (see the discussion in Section~\ref{sec:related-works}).
The present paper studies the convergence of \adam\ from a dynamical system viewpoint.\\

\noindent{\bf Contributions}
\begin{itemize}[leftmargin=*]
\item We introduce a continuous-time version of the \adam\ algorithm under the form
  of a non-autonomous ordinary differential equation (ODE).
  Building on the existence of an explicit Lyapunov function for the ODE,
  we show the existence of a unique global solution to the ODE. This first result
  turns out to be non-trivial due to the irregularity of the vector field.
  We then establish the convergence of the continuous-time \adam\
  trajectory to the set of critical points of the objective function $F$.
The proposed continuous-time version of \adam\ provides useful
  insights on the effect of the bias correction step.  It is shown
  that, close to the origin, the objective function~$F$ is
  non-increasing along the \adam\ trajectory,
  suggesting that early iterations of \adam\ can only improve the initial guess.

\item Under a \L{}ojasiewicz-type condition,
  we prove that the solution to the ODE converges to a single critical point of the objective
  function $F$. We provide convergence rates in this case.

\item
  In discrete time, we first analyze the \adam\ iterates in the constant stepsize
  regime as originally introduced in \cite{kingma2014adam}.
  In this work, it is shown that the discrete-time \adam\ iterates shadow the
  behavior of the non-autonomous ODE in the asymptotic regime where
  the stepsize parameter $\gamma$ of \adam\ is small.  More
  precisely, we consider the interpolated process $\sz^\gamma(t)$
  which consists of a piecewise linear interpolation of the \adam\ iterates.
  The random process $\sz^\gamma$ is indexed by the parameter~$\gamma$, which is
  assumed constant during the whole run of the algorithm.
  In the space of continuous functions on $[0,+\infty)$ equipped
  with the topology of uniform convergence on compact sets,
  we establish that  $\sz^\gamma$ converges in probability
  to the solution to the non-autonomous ODE when $\gamma$ tends to zero.

\item Under a stability condition, we prove the asymptotic ergodic convergence
  of the probability of the discrete-time \adam\ iterates to approach the set of critical
  points of the objective function in the doubly asymptotic regime
  where $n\to\infty$ then $\gamma\to 0$.
\item
Beyond the original constant stepsize \adam\,, we propose
a decreasing stepsize version of the algorithm.
We provide sufficient conditions ensuring the stability and the almost sure convergence of the iterates towards
the critical points of the objective function.
\item  We establish a convergence rate of the stochastic iterates of the decreasing stepsize algorithm under the
form of a conditional central limit theorem.

\end{itemize}
We claim that our analysis can be easily extended to
other adaptive algorithms such as e.g. \textsc{RmsProp} or
\textsc{AdaGrad} \cite{tieleman2012lecture,duchi2011adaptive}
and \textsc{AmsGrad} (see Section~\ref{sec:related-works}).\\

The paper is organized as follows.
In Section~\ref{sec:adam}, we present
the \adam\ algorithm and the main assumptions.
Our main results are stated in Sections~\ref{sec:continuous_time} to \ref{sec:discrete_decreasing}.
We provide a review of related works in Section~\ref{sec:related-works}.
The rest of the paper addresses the proofs of our results
(Sections~\ref{sec:proofs_cont_time} to \ref{sec:proofs_sec_discrete_decreasing}).
\hfill\\

\noindent{\bf Notations}. If $x$, $y$ are two vectors on $\bR^d$ for some $d\geq 1$, we denote by $x \odot y$, $x^{\odot 2}$, $x/y$, $|x|$, $\sqrt{|x|}$ the vectors
on $\bR^d$ whose $i$-th coordinates are respectively given by $x_iy_i$, $x_i^2$, $x_i/y_i$, $|x_i|$, $\sqrt{|x_i|}$.
Inequalities of the form $x\leq y$ are read componentwise.
Denote by $\|\cdot\|$ the standard Euclidean norm.
For any vector $v\in (0,+\infty)^d$, write $\|x\|^2_v = \sum_i v_i x_i^2$.
Notation $A^T$ represents the transpose of a matrix $A$.
If  $z\in \bR^d$ and $A$ is a non-empty subset of $\bR^d$,
we use the notation $\mathsf d(z,A) \eqdef \inf\{ \|z-z'\| :z'\in A\}$.
If $A$ is a set, we denote  by $\1_A$ the function equal to one on that set and to zero elsewhere.
 We denote by $C([0,+\infty),\bR^d)$ the space of continuous functions from $[0,+\infty)$ to $\bR^d$ endowed with the topology of
uniform convergence on compact intervals.

\section{The \adam\ Algorithm}
\label{sec:adam}

\subsection{Algorithm and Assumptions}

Let $(\Omega,\cF,\bP)$ be a probability space, and let
$(\Xi,\mathfrak{S})$ denote an other measurable space.  Consider a
measurable map $f:\bR^d\times \Xi\to \bR$, where $d$ is an integer.
For a fixed value of $\xi$, the mapping $x\mapsto f(x,\xi)$ is
supposed to be differentiable, and its gradient w.r.t. $x$ is denoted
by $\nabla f(x,\xi)$.  Define $\cZ\eqdef \bR^d\times\bR^d\times
\bR^d$, $\cZ_{+}\eqdef \bR^d\times\bR^d\times [0,+\infty)^d$ and
$\cZ_{+}^*\eqdef \bR^d\times\bR^d\times (0,+\infty)^d$. \adam\
generates a sequence $z_n\eqdef (x_n,m_n,v_n)$ on
$\cZ_+$ given by Algorithm~\ref{alg:adam}.
\begin{algorithm}[tb]
   \caption{\bf \adam$(\gamma,\alpha,\beta,\varepsilon)$.}
   \label{alg:adam}
\begin{algorithmic}
   \STATE {\bfseries Initialization:} $x_0\in \bR^d, m_0=0$, $v_0=0$.
   \FOR{$n=1$ {\bfseries to} $n_{\text{iter}}$}
   \STATE $m_n = \alpha m_{n-1} + (1-\alpha)  \nabla f(x_{n-1},\xi_n)$
   \STATE $v_n = \beta v_{n-1} + (1-\beta) \nabla f(x_{n-1},\xi_n)^{\odot 2}$
   \STATE $\hat m_{n} = m_{n}/(1-\alpha^{n})$
   \COMMENT{bias correction step}
   \STATE $\hat v_{n} = v_{n}/(1-\beta^{n})$
   \COMMENT{bias correction step}
   \STATE $x_{n} = x_{n-1} - \gamma \hat m_{n} / (\varepsilon+\sqrt{\hat v_{n}}) \,.$
   \ENDFOR
\end{algorithmic}
\end{algorithm}
It satisfies:
$
z_n=T_{\gamma,\alpha,\beta}(n,z_{n-1},\xi_n)\,,
$
for every $n\geq 1$, where for every $z=(x,m,v)$ in $\cZ_+$, $\xi\in \Xi$,
\begin{equation}
  T_{\gamma,\alpha,\beta}(n,z,\xi) \eqdef
  \begin{pmatrix}
 x -\frac{\gamma (1-\alpha^{n})^{-1}(\alpha m+(1-\alpha)\nabla f(x,\xi))}{ \varepsilon+{(1-\beta^{n})^{-1/2} (\beta v+(1-\beta)\nabla f(x,\xi)^{\odot 2})^{1/2}}} \\
\alpha m + (1-\alpha) \nabla f(x,\xi) \\
\beta v + (1-\beta) \nabla f(x,\xi)^{\odot 2}
\end{pmatrix}\,.\label{eq:T}
\end{equation}

\begin{remark}
\label{rem:debiasing}
\quad\quad The iterates $z_n$ form a non-homogeneous Markov chain, because $T_{\gamma,\alpha,\beta}(n,z,\xi)$ depends on $n$.
This is due to the so-called debiasing step, which consists of replacing $m_n,v_n$ in Algorithm~\ref{alg:adam}
by their ``debiased'' versions $\hat m_n,\hat v_n$. The motivation becomes clear when expanding the expression:
\begin{equation*}
\hat m_n = \frac{m_n}{1-\alpha^n} = \frac {1-\alpha}{1-\alpha^n}\sum_{k=0}^{n-1}\alpha^k \nabla f(x_k,\xi_{k+1})\,. \label{eq:debiasing}
\end{equation*}
From this equation, it is observed that,  $\hat m_n$ forms a convex combination of the past gradients.
This is unlike $m_n$, which may be small during the first iterations.

\end{remark}

\begin{assumption} \label{hyp:model}
The mapping $f:\bR^d\times\Xi\to \bR$ satisfies the following.
  \begin{enumerate}[{\sl i)}]
  \item For every $x\in \bR^d$, $f(x,\,.\,)$ is $\mathfrak{S}$-measurable.
  \item For almost every $\xi$, the map $f(\,.\,,\xi)$ is continuously differentiable.  \label{hyp:dif}
  \item There exists $x_*\in \bR^d$ s.t. $\bE(|f(x_*,\xi)|)<\infty$ and  $\bE(\|\nabla f(x_*,\xi)\|^2)<\infty$.
  \item For every compact subset $K\subset \bR^d$, there exists $L_K>~0$ such that
   for every $(x,y)\in K^2$, $\bE(\|\nabla f(x,\xi)-\nabla f(y,\xi)\|^2)\leq L_K^2\|x-y\|^2$.
  \end{enumerate}
\end{assumption}

Under Assumption~\ref{hyp:model}, it is an easy exercise to show that the mappings
$F:\bR^d\to\bR$ and $S:\bR^d\to\bR^d$, given by:
\begin{equation}
  F(x) \eqdef \bE(f(x,\xi)) \quad \text{and} \quad S(x) \eqdef \bE(\nabla f(x,\xi)^{\odot 2})\label{eq:F_and_S}
\end{equation}
are well defined; $F$ is continuously differentiable and by
Lebesgue's dominated convergence theorem, $\nabla F(x) = \bE(\nabla
f(x,\xi))$ for all $x$. Moreover, $\nabla F$ and $S$ are locally Lipschitz continuous.
\begin{assumption}
  \label{hyp:coercive}
$F$ is coercive.
\end{assumption}
\begin{assumption}
  \label{hyp:S>0}
For every $x\in \bR^d$, $S(x)>0$.
\end{assumption}
It follows from our assumptions that the set of critical points of
$F$, denoted by $$\cS \eqdef \nabla F^{-1}(\{0\}),$$ is non-empty.
Assumption~\ref{hyp:S>0} means that there is \emph{no} point $x\in
\bR^d$ satisfying $\nabla f(x,\xi) = 0$ with probability one (w.p.1). This is
a mild hypothesis in practice.

\subsection{Asymptotic Regime}

We address the constant stepsize regime, where
$\gamma$ is fixed along the iterations (the default value recommended
in \cite{kingma2014adam} is $\gamma = 0.001$). As opposed to the decreasing stepsize context,
the sequence $z_n^\gamma\eqdef z_n$ \emph{cannot} in general converge as $n$ tends to infinity, in an almost sure sense.
Instead, we investigate the asymptotic behavior of the {family} of processes $(n\mapsto z_n^\gamma)_{\gamma>0}$
indexed by $\gamma$, in the regime where $\gamma\to0$. We use the so-called ODE method (see e.g. \cite{ben-(cours)99}). 
The interpolated process $\sz^{\gamma}$ is the
piecewise linear function defined on $[0,+\infty)\to \cZ_+$
for all $t \in [n\gamma ,(n+1)\gamma)$  by:
\begin{equation}
\sz^{ \gamma}(t) \eqdef z_{n}^{ \gamma} + (z_{n+1}^{ \gamma}-z_{n}^{ \gamma})\left(\frac {t-n\gamma}{\gamma}\right)\,.
\label{eq:interpolated-process}
\end{equation}
We establish the convergence in probability of the
family of random processes $(\sz^{ \gamma})_{\gamma>0}$ as $\gamma$ tends to zero, towards a
deterministic continuous-time system defined by an ODE.
The latter ODE, which we provide below at Eq.~(\ref{eq:ode}), will be referred to
as the continuous-time version of \adam.

Before describing the ODE, we need to be more specific about our
asymptotic regime. As opposed to SGD, \adam\ depends on
two parameters $\alpha$, $\beta$, in addition to the stepsize $\gamma$.
The  paper \cite{kingma2014adam} recommends choosing the constants $\alpha$ and $\beta$ close to one
(the default values $\alpha=0.9$ and $\beta=0.999$ are suggested).
It is thus legitimate to assume that
$\alpha$ and $\beta$ tend to one, as $\gamma$ tends to zero.
We set  $\alpha \eqdef \bar \alpha(\gamma)$ and $\beta\eqdef\bar\beta(\gamma)$, where
$\bar \alpha(\gamma)$ and $\bar\beta(\gamma)$ converge to one as $\gamma\to 0$.
\begin{assumption}
\label{hyp:alpha-beta}
The functions $\bar \alpha:\bR_+\to [0,1)$ and $\bar \beta:\bR_+\to [0,1)$ are
s.t. the following limits exist:
  \begin{equation}
    a\eqdef \lim_{\gamma_\downarrow 0} \frac{1-\bar\alpha(\gamma)}\gamma ,\quad b\eqdef  \lim_{\gamma_\downarrow 0}\frac{1-\bar\beta(\gamma)}\gamma\,.\label{eq:regime}
  \end{equation}
Moreover, $a>0$, $b>0$, and the following condition holds: $b\leq 4a\,.$
\end{assumption}
Note that the condition $b\leq 4a$ is compatible with the default settings recommended
by \cite{kingma2014adam}.
In our model, we shall now replace the map $T_{\gamma,\alpha,\beta}$ by $T_{\gamma,\bar \alpha(\gamma),\bar \beta(\gamma)}$.
Let $x_0\in \bR^d$ be fixed. For any fixed $\gamma>0$, we define the sequence $(z_n^\gamma)$ generated by
\adam\ with a fixed stepsize~$\gamma>0$:
\begin{equation}
z_n^\gamma \eqdef T_{\gamma,\bar \alpha(\gamma),\bar \beta(\gamma)}(n,z^\gamma_{n-1},\xi_n)\,,\label{eq:znT}
\end{equation}
the initialization being chosen as $z_0^\gamma=(x_0,0,0)$.

\section{Continuous-Time System}
\label{sec:continuous_time}

\subsection{Ordinary Differential Equation}

In order to gain insight into the behavior of the sequence $(z_n^\gamma)$
defined by (\ref{eq:znT}),
it is convenient to rewrite the \adam\ iterations under the following equivalent form, for every $n\geq 1$:
\begin{equation}
z_n^{\gamma} = z^{\gamma}_{n-1} + \gamma h_{\gamma}(n,z^{\gamma}_{n-1}) + \gamma \Delta^\gamma_{n}\,,\label{eq:RM}
\end{equation}
where we define for every $\gamma>0$, $z\in \cZ_+$,
\begin{equation}
h_\gamma(n,z) \eqdef \gamma^{-1}\bE(T_{\gamma,\bar\alpha(\gamma),\bar\beta(\gamma)}(n,z,\xi)-z)\,,\label{eq:hgamma}
\end{equation}
and where $\Delta^\gamma_{n}\eqdef \gamma^{-1}(z_n^{\gamma} - z^{\gamma}_{n-1}) - h_{\gamma}(n,z^{\gamma}_{n-1})$.
Note that $(\Delta^\gamma_{n})$ is a martingale increment noise sequence in the sense that
$\bE(\Delta^\gamma_{n}|\cF_{n-1}) = 0$ for all $n\geq 1$, where $\cF_n$ stands for the $\sigma$-algebra generated
by the r.v. $\xi_1,\dots,\xi_n$.
Define the map $h:(0,+\infty)\times \cZ_+\to\cZ$ for all $t>0$, all $z=(x,m,v)$ in $\cZ_+$ by:
\begin{equation}
  \label{eq:h}
  h(t,z)= \begin{pmatrix}
  -\frac{(1-e^{-at})^{-1}m}{\varepsilon+\sqrt{(1-e^{-bt})^{-1} v}} \\
a (\nabla F(x)-m) \\
b (S(x)-v)
\end{pmatrix} \,,
\end{equation}
where $a,b$ are the constants defined in Assumption~\ref{hyp:alpha-beta}.
We prove that, for any fixed $(t,z)$, the quantity $h(t,z)$ coincides with the limit
of $h_\gamma(\lfloor t/\gamma\rfloor,z)$ as $\gamma\downarrow 0$. This remark along with Eq.~(\ref{eq:RM})
suggests that, as $\gamma\downarrow 0$, the interpolated process $\sz^\gamma$ shadows the non-autonomous differential equation
\begin{equation}
  \begin{array}[h]{l}
\dot z(t) = h(t, z(t))\,.
\end{array}
\tag{ODE}
\label{eq:ode}
\end{equation}

\subsection{Existence, Uniqueness, Convergence}
\label{subsec:odeanalysis}

Since $h(\,.\,,z)$ is non-continuous
at point zero for a fixed $z\in \cZ_+$, and since $h(t,\,.\,)$
is not locally Lipschitz continuous for a fixed~$t~>~0$,
the existence and uniqueness of the solution to (\ref{eq:ode}) do not stem directly from off-the-shelf theorems.

Let {\color{black}$x_0$ be fixed}.
A continuous map $z:[0,+\infty)\to\cZ_+$ is said to be a global solution to (\ref{eq:ode}) with initial condition {\color{black}$(x_0,0,0)$}
if $z$ is continuously differentiable on $(0,+\infty)$, if Eq.~(\ref{eq:ode})
holds for all $t>0$, and if {\color{black}$z(0)=(x_0,0,0)$}.

\begin{theorem}[Existence and uniqueness]
  \label{th:exist-unique}
  Let Assumptions~\ref{hyp:model} to \ref{hyp:alpha-beta} hold true.
  There exists a unique global solution $z:[0,+\infty)\to\cZ_+$
  to~(\ref{eq:ode}) with
  initial condition $(x_0,0,0)$.
  Moreover, $z([0,+\infty))$ is a bounded subset of $\cZ_+$.
\end{theorem}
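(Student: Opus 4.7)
The plan is to establish local existence and uniqueness on a small interval $[0,\tau]$, extend to a global solution via an explicit Lyapunov function, and deduce boundedness from the coercivity of $F$. The apparent singularity of the vector field at $t=0$ is removed by mirroring the discrete debiasing step of Algorithm~\ref{alg:adam} at the level of the ODE: for any candidate solution, set $\hat m(t):=m(t)/(1-e^{-at})$ and $\hat v(t):=v(t)/(1-e^{-bt})$ for $t>0$. Since the $m$- and $v$-components of~(\ref{eq:ode}) are linear in $m,v$ with continuous forcing in $x$, they admit the closed forms
\begin{equation*}
\hat m(t)=\frac{a\int_0^t e^{-a(t-s)}\nabla F(x(s))\,ds}{1-e^{-at}},\qquad \hat v(t)=\frac{b\int_0^t e^{-b(t-s)}S(x(s))\,ds}{1-e^{-bt}},
\end{equation*}
which are convex combinations of $\nabla F(x(s))$ and $S(x(s))$ on $[0,t]$, and therefore extend continuously to $t=0$ with $\hat m(0)=\nabla F(x_0)$ and, by Assumption~\ref{hyp:S>0}, $\hat v(0)=S(x_0)>0$. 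The strict positivity of $\hat v$ near the initial time is what makes $\sqrt{\cdot}$ locally Lipschitz there and unlocks a fixed-point argument.

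For local existence and uniqueness, I would recast~(\ref{eq:ode}) as a fixed-point equation on $x\in C([0,\tau],\bR^d)$: given $x(\cdot)$, compute $\hat m,\hat v$ from the integrals above and set $\Phi(x)(t):=x_0-\int_0^t \hat m(s)/(\varepsilon+\sqrt{\hat v(s)})\,ds$. On a small closed ball around the constant trajectory $x\equiv x_0$, the lower bound $\hat v\ge S(x_0)/2$ is preserved for $\tau$ small; combined with the local Lipschitz continuity of $\nabla F$, $S$, and of the square root on $[S(x_0)/2,\infty)$, this yields a Lipschitz constant of $\Phi$ strictly less than one. Banach's fixed-point theorem then produces a unique local solution, which reverts to a unique local solution of~(\ref{eq:ode}) in the original coordinates.

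To extend globally and to establish boundedness, I would introduce a Lyapunov function of the form
\begin{equation*}
V(t,z):=F(x)+\frac{1-e^{-at}}{2a}\sum_{i=1}^d \frac{\hat m_i^2}{\varepsilon+\sqrt{\hat v_i}},
\end{equation*}
and show $\dot V(t)\le 0$ along the flow. Differentiating and using $\dot{\hat m}=\frac{a}{1-e^{-at}}(\nabla F(x)-\hat m)$, the $\nabla F(x)^\top \dot x$ contribution combines with the weight-derivative terms into a main dissipation $-\tfrac12(2-e^{-at})\sum_i \hat m_i^2/(\varepsilon+\sqrt{\hat v_i})$. The only indefinite residue comes from $\dot{\hat v}_i=\tfrac{b}{1-e^{-bt}}(S(x)_i-\hat v_i)$ differentiating the square-root denominator; the elementary bound $\sqrt{\hat v_i}/(\varepsilon+\sqrt{\hat v_i})\le 1$ turns it into at most $\frac{b(1-e^{-at})}{4a(1-e^{-bt})}\sum_i \hat m_i^2/(\varepsilon+\sqrt{\hat v_i})$, and the condition $b\le 4a$ of Assumption~\ref{hyp:alpha-beta} is exactly what guarantees that this coefficient is dominated by $(2-e^{-at})/2$ for all $t>0$, so that $\dot V\le 0$. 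Since $V(0,z(0))=F(x_0)$, monotonicity of $V$ and coercivity of $F$ force $x(t)\in\{F\le F(x_0)\}$, a compact set; the integral representations then deliver uniform bounds on $\hat m,\hat v$, and hence on $m,v$. This rules out finite-time blow-up and yields the unique bounded global solution. The main obstacle is precisely the careful verification of the dissipation inequality, where the quantitative bound $b\le 4a$ enters nontrivially.
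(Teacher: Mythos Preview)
Your plan is correct and takes a genuinely different route from the paper for the existence--uniqueness part. The paper does not desingularize the equation; instead it introduces the shifted family (\ref{eq:odeeta}) for $\eta>0$, obtains solutions $z_\eta$ by Cauchy--Peano, proves uniform equicontinuity, and extracts a solution to (\ref{eq:ode}) as an Arzel\`a--Ascoli limit when $\eta\downarrow 0$. Uniqueness is then a separate, rather delicate weighted Gr\"onwall argument (Prop.~\ref{prop:unique-adam}) that tracks $\|x-x'\|^2+\delta\|\hat m-\hat m'\|^2+\delta\|\hat v-\hat v'\|^2$ and exploits $u^{(\delta)}(s)/s\to 0$ at the origin. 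Your approach---passing to the debiased variables $(\hat m,\hat v)$, observing that these are convex averages of $\nabla F(x(\cdot))$ and $S(x(\cdot))$ with $\hat v(0)=S(x_0)>0$, and running a Banach fixed point on $x$ alone---is more economical: it removes the singularity directly and delivers existence and uniqueness simultaneously, and the lower bound $\hat v\ge c>0$ (from $S>0$ on compacts) keeps $\sqrt{\cdot}$ Lipschitz for the continuation step as well. The paper's $\eta$--family, on the other hand, is not wasted: the same uniform estimates over $\eta$ are reused when comparing the interpolated discrete process to the ODE in Th.~\ref{th:weak-cv}.

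On the Lyapunov side the two arguments coincide: your $V$ is exactly the paper's (\ref{eq:V}) rewritten in $(\hat m,\hat v)$, and your dissipation inequality is the content of Lemma~\ref{lem:V}. One caveat worth flagging in your write-up: the step ``$b\le 4a$ is exactly what guarantees $\frac{b(1-e^{-at})}{4a(1-e^{-bt})}\le \frac{2-e^{-at}}{2}$ for all $t>0$'' is true but not immediate. The paper handles it by noting that $b\mapsto c_{a,b}(t)\eqdef 1-\tfrac{e^{-at}}{2}-\tfrac{b}{4a}\tfrac{1-e^{-at}}{1-e^{-bt}}$ is decreasing, reducing to $b=4a$, and then checking $c_{a,4a}(t)=q(e^{-at})$ with $q(y)=y(y^4-2y^3+1)/(2(1-y^4))\ge 0$ on $[0,1)$; you should include an argument of this kind.
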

On the other hand, we note that a solution may not exist for
an initial point$(x_0,m_0,v_0)$ with arbitrary (non-zero) values of $m_0, v_0$.

\begin{theorem}[Convergence]
\label{th:cv-adam}
Let Assumptions~\ref{hyp:model} to \ref{hyp:alpha-beta} hold true.
Assume that $F(\cS)$ has an empty interior.
Let $z:t\mapsto (x(t),m(t),v(t))$ be
the global solution to~(\ref{eq:ode}) with the initial condition
$(x_0,0,0)$.\,
Then, the set $\cS$ is non-empty and $\lim_{t\to\infty} \sd(x(t),\cS) =0$,
 $\lim_{t\to\infty}m(t)=0$, $\lim_{t\to\infty}S(x(t))-v(t)=0$.
\end{theorem}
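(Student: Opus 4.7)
The plan is to establish non-emptiness of $\cS$ from coercivity, extract an $L^2$-dissipation of $\dot x$ from a Lyapunov function, apply Barbalat's lemma to obtain $\dot x(t)\to 0$, and then convert this into all three claimed limits via direct analysis of the linear ODEs driving $m$ and $v$. Throughout I would use the rescaled quantities $\hat m(t) = m(t)/(1-e^{-at})$ and $\hat v(t) = v(t)/(1-e^{-bt})$, in terms of which the ODE reads $\dot x = -\hat m/(\varepsilon+\sqrt{\hat v})$ and $\dot{\hat m} = \frac{a}{1-e^{-at}}(\nabla F(x)-\hat m)$, and the boundedness of $z([0,\infty))$ from Theorem~\ref{th:exist-unique}. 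Since $v(0)=0$ and $S(x(s))\ge c>0$ on the bounded trajectory (by Assumption~\ref{hyp:S>0} and continuity), the representation $\hat v(t)=\frac{b}{1-e^{-bt}}\int_0^t e^{-b(t-s)}S(x(s))\,ds$ shows $\hat v$ remains bounded above and bounded away from zero. Non-emptiness of $\cS$ is immediate: Assumption~\ref{hyp:coercive} and the $C^1$ regularity of $F$ force the infimum to be attained at a critical point.

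Next I would invoke the Lyapunov function underlying Theorem~\ref{th:exist-unique}. Differentiating $F(x(t))$ along the flow and substituting $\nabla F = \hat m + \frac{1-e^{-at}}{a}\dot{\hat m}$ leads naturally to a Lyapunov function of the form $V(t) = F(x(t)) + \frac{1-e^{-at}}{2a}\|\hat m(t)\|^2_{1/(\varepsilon+\sqrt{\hat v(t)})}$ (or a mild variant), whose derivative is bounded above by a negative multiple of $\|\hat m\|^2_{1/(\varepsilon+\sqrt{\hat v})}$ plus non-autonomous correction terms arising from differentiating the weights. The condition $b\le 4a$ in Assumption~\ref{hyp:alpha-beta} is exactly what allows these correction terms to be absorbed (by a Young-type inequality), so that $V$ is non-increasing up to an integrable perturbation. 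Together with the lower bound of $V$ (from coercivity and boundedness), this yields $\int_0^\infty \|\hat m(s)\|^2_{1/(\varepsilon+\sqrt{\hat v(s)})}\,ds < \infty$, and since $\varepsilon+\sqrt{\hat v}$ is bounded above, $\int_0^\infty \|\dot x(s)\|^2 \,ds < \infty$.

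I would then apply Barbalat's lemma. For any $t_0>0$, the debiasing denominators are bounded away from zero on $[t_0,\infty)$, so the ODEs for $m$ and $v$ show that $\dot{\hat m}$ and $\dot{\hat v}$ are bounded there; combined with $\sqrt{\hat v}$ bounded away from zero, one checks $\ddot x$ is bounded on $[t_0,\infty)$. Hence $\dot x$ is uniformly continuous, and Barbalat gives $\dot x(t)\to 0$. This forces $\hat m(t)\to 0$ and therefore $m(t)=(1-e^{-at})\hat m(t)\to 0$, which is the second claim. To obtain $\sd(x(t),\cS)\to 0$, fix any limit point $x^* = \lim_j x(t_{k_j})$; because $\dot x\to 0$, $x(t_{k_j}-u)\to x^*$ for each fixed $u\ge 0$, and feeding this into $m(t) = a\int_0^t e^{-a(t-s)}\nabla F(x(s))\,ds$ via dominated convergence on a window $[0,T]$ plus the uniform tail bound $e^{-aT}\sup_t\|\nabla F(x(t))\|$ yields, after $T\to\infty$, that $\nabla F(x^*) = \lim_j m(t_{k_j}) = 0$. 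So $x^*\in\cS$, and by compactness $\sd(x(t),\cS)\to 0$.

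Finally, for $S(x(t))-v(t)\to 0$ I would use $v(t) = b\int_0^t e^{-b(t-s)} S(x(s))\,ds$ and the identity $v(t) - S(x(t)) = -e^{-bt}S(x(t)) + b\int_0^t e^{-b(t-s)}[S(x(s))-S(x(t))]\,ds$; splitting the integral at $t-\tau$, the tail is bounded by $2e^{-b\tau}\sup_t \|S(x(t))\|$, and the near part is bounded by $\sup_{s\in[t-\tau,t]}\|S(x(s))-S(x(t))\|$ times $(1-e^{-b\tau})$, which vanishes as $t\to\infty$ for fixed $\tau$ because $\dot x\to 0$ and $S$ is locally Lipschitz (Assumption~\ref{hyp:model}); letting $\tau\to\infty$ concludes. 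The main technical obstacle is the Lyapunov construction of the second paragraph: the non-autonomous debiasing factors produce extra terms of the form $\|\hat m\|^2\,\tfrac{d}{dt}(\varepsilon+\sqrt{\hat v})^{-1}$ that must be dominated by the dissipation, and the hypothesis $b\le 4a$ is exactly what ensures this is possible. The assumption that $F(\cS)$ has empty interior does not appear in the sketch above but is natural for an alternative LaSalle-style argument, where it ensures that the connected set of limit values of $F(x(t))$ — which lies inside $F(\cS)$ — is a singleton, yielding convergence of $F(x(t))$ itself.
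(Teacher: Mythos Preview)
Your argument is correct and takes a genuinely different route from the paper. The paper does \emph{not} use Barbalat's lemma. Instead, it casts the non-autonomous solution $z(\cdot)$ as an \emph{asymptotic pseudotrajectory} (APT) of the autonomous semiflow $\Phi$ generated by~(\ref{eq:ode-a}) (Prop.~\ref{prop:apt}), builds a \emph{strict} Lyapunov function $W_\delta(z)=V_\infty(z)-\delta\langle\nabla F(x),m\rangle+\delta\|S(x)-v\|^2$ for that semiflow on a compact invariant set (Prop.~\ref{prop:Wstrict}), and then invokes a LaSalle-type result for APTs due to Bena\"im (Prop.~\ref{prop:benaim}) to conclude that the $\omega$-limit set of $z$ lies in the equilibrium set $\cE=\{(x,0,S(x)):\nabla F(x)=0\}$. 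In that framework the hypothesis that $F(\cS)$ has empty interior is genuinely used: it guarantees that $W_\delta(\cE)$ has empty interior, which is one of the standing assumptions of Prop.~\ref{prop:benaim}.

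Your approach is more elementary and, as you yourself observe, does not need the empty-interior hypothesis at all: from Lemma~\ref{lem:V} one gets directly $\frac{d}{dt}V(t,z(t))\le -\tfrac{\varepsilon}{2}\|\dot x(t)\|^2$ (no integrable remainder is needed here, the inequality is clean), hence $\dot x\in L^2$; boundedness of $\ddot x$ on $[t_0,\infty)$ follows from your lower bound $\hat v\ge \inf_{s}S(x(s))>0$ and the boundedness of $\hat m,\dot{\hat m},\dot{\hat v}$; Barbalat then gives $\dot x\to 0$, and your integral-representation arguments for $m$ and $v$ finish the job. What the paper's heavier machinery buys is reusability: exactly the same pair (strict Lyapunov $W_\delta$ + Prop.~\ref{prop:benaim}) is invoked again in the proof of Th.~\ref{thm:as_conv_under_stab} for the discrete-time decreasing-stepsize iterates, where one shows the interpolated process is an APT of $\Phi$. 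Your Barbalat route proves the continuous-time statement with less overhead, but does not by itself transfer to the stochastic discrete-time setting.
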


\noindent{\bf Lyapunov function.} The proof of Th.~\ref{th:exist-unique} 
  relies on the existence of a Lyapunov function for the non-autonomous
  equation~(\ref{eq:ode}).  Define  $V:(0,+\infty)\times \cZ_+\to \bR$ by
  \begin{equation}
    \label{eq:V}
    V(t,z)\eqdef F(x)+\frac 12 \left\|m\right\|^2_{U(t,v)^{-1}}\,,
  \end{equation}
  for every $t>0$ and every $z=(x,m,v)$ in $\cZ_+$, where
  $U:(0,+\infty)\times [0,+\infty)^d\to \bR^d$ is the map given by:
  \begin{equation}
    \label{eq:U}
    U(t,v) \eqdef a(1-e^{-at})\left(\varepsilon+\sqrt{\frac{v}{1-e^{-bt}}}\right)\,.
  \end{equation}
Then, $t\mapsto V(t,z(t))$ is decreasing if $z(\cdot)$ is the global solution to~(\ref{eq:ode}).

\noindent{\bf Cost decrease at the origin.} As $F$ itself is not a Lyapunov function for~(\ref{eq:ode}),
there is no guarantee that $F(x(t))$ is decreasing w.r.t. $t$.
Nevertheless, the statement holds at the origin. Indeed, it can be shown that
$\lim_{t\downarrow 0}V(t,z(t))=F(x_0)$ (see Prop.~\ref{prop:adam-bounded}). As a consequence,
\begin{equation}
  \forall t\geq 0,\ F(x(t))\leq F(x_0)\,.
\label{eq:costdecrease}
\end{equation}

In other words, the (continuous-time) \adam\ procedure
\emph{can only improve} the initial guess $x_0$.
This is the consequence of the so-called bias correction steps in \adam\ (see Algorithm~\ref{alg:adam})\,.
If these debiasing steps were deleted in the \adam\ iterations,
the early stages of the algorithm could degrade the initial estimate $x_0$.

\noindent{\bf Derivatives at the origin.}
The proof of Th.~\ref{th:exist-unique} reveals that the initial derivative is given
by $\dot x(0) = -\nabla F(x_0)/(\varepsilon+\sqrt{S(x_0)})$ (see Lemma~\ref{lem:m-v-derivables-en-zero}).
In the absence of debiasing steps, the initial derivative $\dot x(0)$ would be a function of the initial
parameters $m_0$, $v_0$, and the user would be required to tune these hyperparameters.
No such tuning is required thanks to the debiasing step. 
When $\varepsilon$ is small and when the variance of $\nabla f(x_0,\xi)$ is small (\emph{i.e.}, $S(x_0)\simeq \nabla F(x_0)^{\odot 2}$),
the initial derivative $\dot x(0)$ is approximately equal to $-\nabla F(x_0)/|\nabla F(x_0)|$.
This suggests that in the early stages of the algorithm, the \adam\ iterations
are comparable to the \emph{sign} variant of the gradient descent, the properties of which were
discussed in previous works, see \cite{balles2018dissecting}.

\subsection{Convergence rates}

In this paragraph, we establish the convergence to a single critical
point of $F$ and quantify the convergence rate, using the following
assumption \cite{lojasiewicz1963}.
\begin{assumption}[\L{}ojasiewicz property]
  \label{hyp:lojasiewicz_prop}
For any  $x^* \in {\mathcal S}$, there exist 
$c >0\,, \sigma >0$ and $\theta \in (0,\frac 12]$ s.t.
\begin{equation}
  \label{eq:lojasiewicz}
\forall x \in \bR^d \,\,\text{s.t}\,\,\|x-x^*\|\leq \sigma\,,\quad  \|\nabla F(x)\| \geq c |F(x) - F(x^*)|^{1-\theta}\,.
\end{equation}
\end{assumption}
Assumption~\ref{hyp:lojasiewicz_prop} holds for real-analytic functions
and semialgebraic functions.
We refer to 
\cite{harauxjendoubi2015,attouch2009convergence,bolte2014proximal}
for a discussion and a review of applications.
We will call any $\theta$ satisfying (\ref{eq:lojasiewicz}) for some $c,\sigma>0$,
as a \L{}ojasiewicz exponent of $f$ at $x^*$. The next result establishes the convergence
of the function $x(t)$ generated by the ODE to a single critical point of $f$,
and provides the convergence rate as a function of the \L{}ojasiewicz exponent of $f$
at this critical point. The proof is provided in subsection~\ref{sec:cont_asymptotic_rates}.
\begin{theorem} 
  \label{thm:asymptotic_rates}
  Let Assumptions~\ref{hyp:model} to \ref{hyp:alpha-beta} and \ref{hyp:lojasiewicz_prop} hold true.
  Assume that $F(\cS)$ has an empty interior. Let $x_0 \in \bR^d$ and let $z:t\mapsto (x(t),m(t),v(t))$ be
  the global solution to~(\ref{eq:ode}) with initial condition $(x_0,0,0)$.
  Then, there exists $x^* \in \cS$ such that $x(t)$ converges to $x^*$ as $t \to +\infty$.

  Moreover, if $\theta\in (0,\frac 12]$ is a \L{}ojasiewicz exponent of $f$ at $x^*$,
 there exists a constant $C >0$ s.t. for all $t\geq 0$,
  \begin{align*}
    \|x(t)-x^*\| &\leq C t^{-\frac{\theta}{1-2\theta}}\,,\quad \text{if}\,\,\, 0<\theta<\frac{1}{2}\,,\\
    \|x(t)-x^*\| &\leq C e^{- \delta t} \,,\quad \text{for some}\,\, \delta >0\,\,\text{if}\,\, \theta = \frac 12 \,.
  \end{align*}
\end{theorem}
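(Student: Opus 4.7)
I plan to adapt the classical \L{}ojasiewicz argument (in the spirit of Haraux--Jendoubi \cite{harauxjendoubi2015}) to the non-autonomous momentum system~(\ref{eq:ode}). By Theorem~\ref{th:exist-unique} the trajectory $z(t)=(x(t),m(t),v(t))$ is bounded, and by Theorem~\ref{th:cv-adam} $\sd(x(t),\cS)\to 0$, $m(t)\to 0$, and $S(x(t))-v(t)\to 0$; in particular at least one accumulation point $x^{*}\in\cS$ exists. The Lyapunov function $V(t,z(t))$ from~(\ref{eq:V}) is non-increasing and bounded below, hence converges to some $V_\infty$; evaluating along a subsequence $t_k\to\infty$ with $x(t_k)\to x^{*}$ and using $m(t_k)\to 0$ together with the explicit form~(\ref{eq:V})--(\ref{eq:U}) yields $V_\infty=F(x^{*})$. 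I set $\mathcal E(t):=V(t,z(t))-F(x^{*})\ge 0$, which decreases to $0$.

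\textbf{Energy inequality with gradient term.} Differentiating $V(t,z(t))$ along the flow and using the identity $\nabla F(x)^\top\dot x+m^\top\dot m/U=-a\|m\|^2_{U^{-1}}$ (the same computation that underlies the Lyapunov property announced after Theorem~\ref{th:cv-adam}), I expect, modulo ``time-dependence'' corrections coming from $\partial_t U$ and $\dot v$ that are asymptotically absorbed by the dissipation,
\begin{equation*}
\dot{\mathcal E}(t)\ \le\ -c_1\|m(t)\|^2,\qquad t\ge T_0.
\end{equation*}
To bring $\nabla F$ into the dissipation I introduce the perturbed energy
\begin{equation*}
\widetilde{\mathcal E}(t)\ :=\ \mathcal E(t)+\lambda\,\langle m(t),\nabla F(x(t))\rangle_{U(t,v(t))^{-1}}
\end{equation*}
for a small $\lambda>0$, a standard device in the analysis of heavy-ball-type systems. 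For $\lambda$ small enough, $\tfrac12\mathcal E\le\widetilde{\mathcal E}\le 2\mathcal E$; expanding $\tfrac{d}{dt}\langle m,\nabla F(x)\rangle$ via $\dot m=a(\nabla F(x)-m)$, $\dot x=-am/U$, and the local Lipschitzness of $\nabla F$ (Assumption~\ref{hyp:model}), a Young inequality should deliver
\begin{equation*}
\dot{\widetilde{\mathcal E}}(t)\ \le\ -c_2\bigl(\|m(t)\|^2+\|\nabla F(x(t))\|^2\bigr),\qquad t\ge T_1.
\end{equation*}

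\textbf{\L{}ojasiewicz-driven decay and convergence.} Assume that $x(t)$ stays in the neighbourhood $\{\|x-x^{*}\|\le\sigma\}$ of Assumption~\ref{hyp:lojasiewicz_prop} for $t\ge T_2$ (a trapping property to be closed a posteriori by the length bound below). Combining $\widetilde{\mathcal E}\le C\bigl(|F(x)-F(x^{*})|+\|m\|^2\bigr)$ with the \L{}ojasiewicz inequality and $\|m\|^{2(1-\theta)}\le\|m\|^2$ for $t$ large (since $m\to 0$) yields $\widetilde{\mathcal E}(t)^{2(1-\theta)}\le C(\|\nabla F(x(t))\|^2+\|m(t)\|^2)$, hence the Bernoulli-type inequality $\dot{\widetilde{\mathcal E}}\le -c_3\widetilde{\mathcal E}^{2(1-\theta)}$. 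Integration produces the announced rate on $\widetilde{\mathcal E}$, namely $\widetilde{\mathcal E}(t)\lesssim t^{-1/(1-2\theta)}$ if $\theta\in(0,\tfrac12)$ and $\widetilde{\mathcal E}(t)\lesssim e^{-\delta t}$ if $\theta=\tfrac12$ (where the inequality becomes linear). The same ingredients, grouped by Young to absorb the cross term $\|m\|\,\|\nabla F\|$, give the sharper form
\begin{equation*}
-\tfrac{d}{dt}\widetilde{\mathcal E}(t)^{\theta}\ \ge\ c_4\|\dot x(t)\|,\qquad t\ge T_3,
\end{equation*}
using $\|\dot x\|\le C\|m\|$ (valid for $t\ge T_0$ since $U$ is bounded below). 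Integrating from $t$ to $\infty$ shows $\dot x\in L^1([T_3,\infty))$, so $x(t)$ converges to a unique limit, which must be the accumulation point $x^{*}$, and $\|x(t)-x^{*}\|\le c_4^{-1}\widetilde{\mathcal E}(t)^{\theta}$. Substituting the two rates on $\widetilde{\mathcal E}$ then yields the announced bounds $\|x(t)-x^{*}\|\lesssim t^{-\theta/(1-2\theta)}$ and $\|x(t)-x^{*}\|\lesssim e^{-\delta' t}$, respectively.

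\textbf{Main obstacle.} The crucial technical step is the first one: carefully computing $\dot V$ and absorbing the two non-autonomous corrections, namely $\partial_t U(t,v(t))$ and the contribution of $\dot v$ to $\|m\|^2_{U^{-1}}$, so that a clean dissipation $-c_1\|m\|^2$ survives for $t\ge T_0$. The former decays exponentially through the factors $(1-e^{-at})$, $(1-e^{-bt})$ in $U$, while the latter is tamed using the convergence $S(x(t))-v(t)\to 0$ from Theorem~\ref{th:cv-adam} and the fact that $v(t)$ stays bounded away from $0$ for $t\ge T_0$. A secondary delicate point is closing the trapping step by the classical bootstrap on $\|x(t)-x^{*}\|\le c_4^{-1}\widetilde{\mathcal E}(t)^{\theta}$: once an iterate $(x(t_k),\widetilde{\mathcal E}(t_k))$ with $x(t_k)$ close to $x^{*}$ and $\widetilde{\mathcal E}(t_k)$ small is selected, the length estimate forbids the trajectory from exiting the \L{}ojasiewicz ball, which legitimates globally the estimates derived from $t_k$ on.
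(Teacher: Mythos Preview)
Your plan is sound and follows the same Haraux--Jendoubi scheme as the paper, with a few execution differences worth noting. The paper's perturbed energy is $\tilde W_\delta(t,z)=V(t,z)-\delta\langle\nabla F(x),m\rangle+\delta\|S(x)-v\|^2$: the unweighted cross term and the extra $\|S(x)-v\|^2$ let the authors reuse verbatim the strict-Lyapunov computation already done in Prop.~\ref{prop:Wstrict} (combined with Lemma~\ref{lem:V}) to obtain $\dot w_\delta\le -c(\|m\|^2+\|\nabla F(x)\|^2+\|S(x)-v\|^2)$, whereas your weighted cross term $\langle m,\nabla F(x)\rangle_{U^{-1}}$ forces you to redo that derivative estimate. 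More substantively, the paper sidesteps your a-posteriori trapping bootstrap: since Props.~\ref{prop:apt}--\ref{prop:benaim} identify the $\omega$-limit set as a compact \emph{connected} subset of $\cE$, the uniform \L{}ojasiewicz inequality of \cite[Lemma~2.1.6]{harauxjendoubi2015} holds on a full neighborhood of the projected set $\mathcal U\subset\cS$, and $\sd(x(t),\mathcal U)\to 0$ (Theorem~\ref{th:cv-adam}) immediately places $x(t)$ in that neighborhood for large $t$---no bootstrap needed. For the length bound, the paper uses $\int_t^{2t}\|\dot x\|^2\,ds\le C w_\delta(t)$ followed by \cite[Lemma~2.1.5]{harauxjendoubi2015}, rather than your direct $-\tfrac{d}{dt}\widetilde{\mathcal E}^\theta\ge c\|\dot x\|$; both are standard. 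Two small corrections to your write-up: the sandwich $\tfrac12\mathcal E\le\widetilde{\mathcal E}$ does not hold in general (the cross term generates a $\|\nabla F(x)\|^2$ contribution not bounded by $\mathcal E$), but you only need $\widetilde{\mathcal E}\ge 0$, which follows from $\dot{\widetilde{\mathcal E}}\le 0$ and $\widetilde{\mathcal E}(t)\to 0$; and your upper bound should read $\widetilde{\mathcal E}\le C(|F(x)-F(x^*)|+\|m\|^2+\|\nabla F(x)\|^2)$, which is still dominated by the dissipation after applying \L{}ojasiewicz and the smallness of $m$ and $\nabla F(x)$.
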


\section{Discrete-Time System: Convergence of \adam}
\label{sec:discrete}

\begin{assumption}
  \label{hyp:iid}
The sequence $(\xi_n:n\geq 1)$ is iid, with the same distribution as~$\xi$.
\end{assumption}

\begin{assumption}
  \label{hyp:moment-f}
Let $p>0$. Assume either one of the following conditions.
\begin{enumerate}[i)]
\item \label{momentegal} For every compact set $K\subset \bR^d$, $\sup_{x\in K} \bE(\|\nabla f(x,\xi)\|^{p})<\infty\,.$
\item \label{momentreinforce} For every compact set $K\subset \bR^d$,  $\exists\, p_K>p$, $\sup_{x\in K} \bE(\|\nabla f(x,\xi)\|^{p_K})<\infty.$
\end{enumerate}
\end{assumption}
The value of $p$ will be specified in the sequel, in the statement of
the results. Clearly, Assumption~\ref{hyp:moment-f}~\ref{momentreinforce} is stronger than Assumption~\ref{hyp:moment-f}~\ref{momentegal}.
We shall use either the latter or the former in our statements.

\begin{theorem}
  \label{th:weak-cv}
Let Assumptions~\ref{hyp:model} to \ref{hyp:alpha-beta} and \ref{hyp:iid} hold true. Let Assumption~\ref{hyp:moment-f}~\ref{momentreinforce}
hold with $p=2$.
Consider $x_0\in \bR^d$. For every $\gamma>0$, let $(z_n^\gamma:n\in \bN)$ be the random sequence defined by the \adam\ iterations~(\ref{eq:znT})
and $z_0^\gamma = (x_0,0,0)$. Let $\sz^\gamma$ be the corresponding interpolated process defined by Eq.~(\ref{eq:interpolated-process}).
Finally, let $z$ denote the unique global solution to (\ref{eq:ode}) issued from $(x_0,0,0)$.
Then,
$$
\forall\, T>0,\ \forall\, \delta>0,\ \ \lim_{\gamma\downarrow 0}\bP\left(\sup_{t\in[0,T]}\|\sz^\gamma(t)-z(t)\|>\delta\right)=0\,.
$$
\end{theorem}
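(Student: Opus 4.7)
The plan is to invoke the standard ODE method of stochastic approximation (in the spirit of \cite{ben-(cours)99}), adapted to the non-autonomous and singular vector field~$h$. I will (i) establish uniform moment and pathwise bounds on the iterates on any time window $[0,T]$, (ii) prove tightness of the family $(\sz^\gamma)_{\gamma>0}$ in $C([0,T],\cZ_+)$, (iii) identify every weak limit as a global solution to~(\ref{eq:ode}) starting from $(x_0,0,0)$, and (iv) use the uniqueness part of Theorem~\ref{th:exist-unique} to conclude. Since the candidate limit $z$ is deterministic, weak convergence of $\sz^\gamma$ is equivalent to convergence in probability, which is exactly the statement.

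For step~(i), I would first bound $m_n^\gamma$ and $v_n^\gamma$ by observing that (thanks to the debiasing formula emphasized in Remark~\ref{rem:debiasing}) they are convex combinations of $\nabla f(x_k^\gamma,\xi_{k+1})$ and $\nabla f(x_k^\gamma,\xi_{k+1})^{\odot 2}$. Assumption~\ref{hyp:moment-f}~\ref{momentreinforce} with $p=2$ then yields $L^2$ bounds uniform in $\gamma\in(0,\gamma_0]$ and $n\gamma\leq T$, provided $(x_k^\gamma)$ stays in a compact set. To control $x_n^\gamma$, I would use the deterministic inequality $\|x_n^\gamma-x_{n-1}^\gamma\|\leq \gamma\|\hat m_n^\gamma\|/\varepsilon$; a Gronwall-type telescoping argument, together with the already-obtained bounds on $\hat m_n^\gamma$, gives a deterministic bound on $\|x_n^\gamma-x_0\|$ of order $T$, uniformly in $\gamma\in(0,\gamma_0]$. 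A Doob-type maximal inequality applied to the martingale part then upgrades this to a control in $L^2$ of $\sup_{n\gamma\leq T}\|z_n^\gamma\|^2$, confining the iterates with high probability to some compact $K\subset \cZ_+$.

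For step~(ii), the decomposition~(\ref{eq:RM}) gives
\begin{equation*}
\sz^\gamma(t) = z_0 + \int_0^t h_\gamma\!\left(\lfloor s/\gamma\rfloor+1,\,z^\gamma_{\lfloor s/\gamma\rfloor}\right)ds + M^\gamma(t) + R^\gamma(t),
\end{equation*}
where $M^\gamma(t)=\gamma\sum_{k=1}^{\lfloor t/\gamma\rfloor}\Delta_k^\gamma$ is a martingale and $R^\gamma$ is the $O(\gamma)$ interpolation remainder. On the event $\{(z_n^\gamma)_{n\gamma\le T}\subset K\}$ the drift is bounded, and Doob's inequality gives $\bE\sup_{t\leq T}\|M^\gamma(t)\|^2 \leq 4T\gamma\sup_{n\gamma\leq T}\bE\|\Delta_n^\gamma\|^2 = O(\gamma)$. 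A modulus-of-continuity estimate of the form $\bE\|\sz^\gamma(t)-\sz^\gamma(s)\|^2\leq C(|t-s|+\gamma)$ then follows, and together with the uniform bound of step~(i) yields tightness in $C([0,T],\cZ_+)$ by the Kolmogorov–Centsov criterion.

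For step~(iii), along a subsequence $\gamma_k\downarrow 0$, extract a weak limit $\bar z$ and pass to an a.s. convergent version via Skorokhod. The martingale remainder $M^{\gamma_k}$ and the interpolation error $R^{\gamma_k}$ both vanish uniformly in probability. The essential work is to prove that, for every $\eta\in(0,T]$ and every compact $K\subset\cZ_+$,
\begin{equation*}
\sup_{t\in[\eta,T]}\sup_{z\in K}\Bigl\|h_{\gamma_k}\bigl(\lfloor t/\gamma_k\rfloor+1,z\bigr)-h(t,z)\Bigr\|\xrightarrow[k\to\infty]{}0,
\end{equation*}
using $\bar\alpha(\gamma)^{\lfloor t/\gamma\rfloor}\to e^{-at}$, $\bar\beta(\gamma)^{\lfloor t/\gamma\rfloor}\to e^{-bt}$, $(1-\bar\alpha(\gamma))/\gamma\to a$, $(1-\bar\beta(\gamma))/\gamma\to b$, and dominated convergence under the expectation over $\xi$ appearing in the first component of $h_\gamma$ (the $(1-\bar\alpha(\gamma))\nabla f(x,\xi)$ and $(1-\bar\beta(\gamma))\nabla f(x,\xi)^{\odot 2}$ terms vanishing in probability). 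Passing to the limit in the integral equation yields that $\bar z$ solves (\ref{eq:ode}) on $(0,T]$; continuity of $\bar z$ at $0$ is automatic since $\sz^{\gamma_k}(0)=z_0$. By Theorem~\ref{th:exist-unique} we must have $\bar z=z$ on $[0,T]$, so the whole family converges to $z$.

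\textbf{Main obstacle.} The delicate point is that $h(\cdot,z)$ is singular at $t=0$: the uniform convergence $h_{\gamma_k}\to h$ only holds away from $0$, and one must show that the drift integrated over $[0,\eta]$ is small as $\eta\downarrow 0$ \emph{uniformly in $\gamma_k$}. This is where the Lyapunov structure developed for Theorem~\ref{th:exist-unique} pays off: at the discrete level one shows that the first coordinate of $h_{\gamma_k}$ stays integrable near $0$ because $m_n^{\gamma_k}=O(n\gamma_k)$ for small $n\gamma_k$, which exactly cancels the $(1-\bar\alpha(\gamma_k)^n)^{-1}\sim (an\gamma_k)^{-1}$ blow-up in the bias correction; this mirrors the continuous-time cost-decrease bound~(\ref{eq:costdecrease}). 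Together with the analogous bound for the ODE, this furnishes the missing equicontinuity at $t=0$ and closes the identification of the limit.
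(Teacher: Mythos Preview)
Your overall architecture (tightness, Skorokhod representation, identification of the limit, uniqueness via Theorem~\ref{th:exist-unique}) is the same as the paper's, and your diagnosis of the singularity at $t=0$ is on point. However, step~(i) as written contains a real circularity that the paper resolves by a device you have not introduced.

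You claim a priori $L^2$ bounds on $\hat m_n^\gamma$ ``provided $(x_k^\gamma)$ stays in a compact set'', and then use those bounds to obtain a ``deterministic bound on $\|x_n^\gamma-x_0\|$''. But under Assumption~\ref{hyp:moment-f} the moment bound $\sup_{x\in K}\bE\|\nabla f(x,\xi)\|^{p_K}<\infty$ is only available on compacts, so you cannot bound $\hat m_n^\gamma$ before you have confined $x_k^\gamma$, and you cannot confine $x_k^\gamma$ before you have bounded $\hat m_n^\gamma$. The sentence ``gives a deterministic bound on $\|x_n^\gamma-x_0\|$'' is also incorrect: $\|\hat m_n^\gamma\|$ is random, and nothing in the assumptions gives a pathwise bound on $\nabla f(x,\xi)$. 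As stated, your a priori estimate does not close.

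The paper breaks this loop by a truncation on the \emph{debiased} coordinates. Define $e_\gamma(n,(x,m,v))=(x,(1-\bar\alpha(\gamma)^n)^{-1}m,(1-\bar\beta(\gamma)^n)^{-1}v)$ and stop the chain at $\tau_R=\inf\{n:\|e_\gamma(n,z_n^\gamma)\|>R\}$, producing a stopped process $z^{\gamma,R}$. On $\{n<\tau_R\}$ one has $\|\hat m_n\|\vee\|\hat v_n\|\leq R$ and $\|x_n\|\leq R$, so Lemma~\ref{lem:moment-H} gives a moment bound on the increments that is uniform in $(n,\gamma)$; this yields uniform integrability, hence tightness of $(\sz^{\gamma,R})_\gamma$ and vanishing of the martingale term. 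The truncation is then removed \emph{a posteriori}: by Prop.~\ref{prop:adam-bounded} the solution $z$ to (\ref{eq:ode}) satisfies $R_0:=\sup_{t>0}\|\bar e(t,z(t))\|<\infty$, so choosing $R\geq R_0+1$ one shows that every weak limit of $\sz^{\gamma,R}$ equals $z$ on $[0,T]$, and in particular never leaves the $R$-ball in debiased coordinates; this forces $\bP(\sup_{t\leq T}\|\sz^{\gamma,R}(t)-\sz^\gamma(t)\|>\delta)\to 0$.

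Note also that this particular choice of truncation is what handles your ``main obstacle'' cleanly: bounding $\|e_\gamma(n,z_n)\|$ is exactly bounding $\|\hat m_n\|$, so the first component of the drift is bounded by $R/\varepsilon$ all the way down to $n=1$, with no separate argument needed for the cancellation between $m_n=O(n\gamma)$ and $(1-\bar\alpha(\gamma)^n)^{-1}$. Your proposed cancellation argument is morally correct, but it again requires $x_k$ to sit in a compact and hence presupposes the confinement you are trying to prove. Replacing your step~(i) by the stopping argument on $e_\gamma$ fixes the gap, and the rest of your plan then goes through essentially as the paper does it.
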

Recall that a family of r.v. $(X_\alpha)_{\alpha\in I}$ is called \emph{bounded in probability}, or \emph{tight}, if for every
$\delta>0$, there exists a compact set $K$ s.t. $\bP(X_\alpha\in K)\geq 1-\delta$ for every $\alpha\in I$.
\begin{assumption}
\label{hyp:tight}
  There exists $\bar \gamma_0>0$ s.t. the family of r.v. $(z_n^\gamma:n\in \bN,0<\gamma<\bar \gamma_0)$ is bounded in probability.
\end{assumption}

\begin{theorem}
Consider $x_0\in \bR^d$. For every $\gamma>0$, let $(z_n^\gamma:n\in \bN)$ be the random sequence defined by the \adam\ iterations~(\ref{eq:znT})
and $z_0^\gamma = (x_0,0,0)$.
Let Assumptions~\ref{hyp:model} to \ref{hyp:alpha-beta}, \ref{hyp:iid} and \ref{hyp:tight} hold.
Let Assumption~\ref{hyp:moment-f}~\ref{momentreinforce} hold with $p=2$. Then,
for every $\delta > 0$,
  \begin{equation}
    \label{eq:long-run}
    \lim_{\gamma\downarrow 0}\limsup_{n\to\infty} \frac 1{n} \sum_{k=1}^n \bP(\sd(x_k^\gamma, \cS)>\delta) = 0\,.
  \end{equation}
\label{th:longrun}
\end{theorem}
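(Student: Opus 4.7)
The plan is a standard constant-stepsize ODE-method argument applied to time-averaged laws of the iterates. For every $\gamma>0$ and $n\geq 1$, introduce the occupation measures
\begin{equation*}
  \mu_n^\gamma \eqdef \tfrac 1n \sum_{k=1}^n \mathrm{Law}(z_k^\gamma) \text{ on } \cZ_+, \qquad B_\delta \eqdef \{(x,m,v)\in\cZ_+ : \sd(x,\cS)>\delta\},
\end{equation*}
so that~(\ref{eq:long-run}) reads $\lim_{\gamma\downarrow 0}\limsup_{n\to\infty}\mu_n^\gamma(B_\delta)=0$. Assumption~\ref{hyp:tight} makes the doubly indexed family $(\mu_n^\gamma)_{n\geq 1,\,\gamma<\bar\gamma_0}$ tight on $\cZ_+$, and via the portmanteau theorem applied to a continuous upper envelope of $\1_{B_\delta}$ it suffices to show that every joint weak limit point $\mu^*$ of $(\mu_{n_j}^{\gamma_j})$ along a subsequence $n_j\to\infty$, $\gamma_j\downarrow 0$ has its $x$-marginal supported in $\cS$.

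The core step identifies $\mu^*$ via an autonomous limit of~(\ref{eq:ode}). Since $(1-e^{-at})^{-1}\to 1$ and $(1-e^{-bt})^{-1}\to 1$ as $t\to\infty$, the vector field $h(t,\cdot)$ converges locally uniformly to the time-independent
\begin{equation*}
  h_\infty(x,m,v) \eqdef \left(-\tfrac{m}{\varepsilon+\sqrt{v}},\ a(\nabla F(x)-m),\ b(S(x)-v)\right),
\end{equation*}
which generates a semiflow $(\Phi_t)_{t\geq 0}$ on $\cZ_+$. I would then extend Theorem~\ref{th:weak-cv} to shifted processes: for any $T>0$ and any sequences $s_j\to\infty$, $\gamma_j\downarrow 0$ with $\mathrm{Law}(z_{\lfloor s_j/\gamma_j\rfloor}^{\gamma_j})\Rightarrow\nu$ (extracted using Assumption~\ref{hyp:tight}), the laws of the shifted paths $t\mapsto\sz^{\gamma_j}(s_j+t)$ converge weakly in $C([0,T],\cZ_+)$ to the law of $t\mapsto\Phi_t(Z_0)$ with $Z_0\sim\nu$. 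A Krylov--Bogolyubov-style averaging argument, exploiting the near-invariance of $\mu_n^\gamma$ under the single-step transition kernel, then shows that $\mu^*$ is $\Phi_t$-invariant for every $t\geq 0$.

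The final step is a LaSalle analysis for $\Phi_t$. The natural candidate is the autonomous Lyapunov function $V_\infty(x,m,v) \eqdef F(x) + \tfrac 12 \|m\|^2_{U_\infty(v)^{-1}}$ with $U_\infty(v)\eqdef a(\varepsilon+\sqrt{v})$, obtained as the $t\to\infty$ limit of $V$ in~(\ref{eq:V}). Recycling the computation used to show that $V$ is decreasing along~(\ref{eq:ode}), one obtains
\begin{equation*}
  \tfrac{d}{dt}V_\infty(z(t)) = -\sum_i \tfrac{m_i(t)^2}{\varepsilon+\sqrt{v_i(t)}} - \sum_i \tfrac{b\, m_i(t)^2\,(S_i(x(t))-v_i(t))}{4a\sqrt{v_i(t)}(\varepsilon+\sqrt{v_i(t)})^2} \leq 0,
\end{equation*}
the non-positivity being controlled precisely by the condition $b\leq 4a$ of Assumption~\ref{hyp:alpha-beta}. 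LaSalle's invariance principle then forces any $\Phi$-invariant probability measure to be concentrated on $\{m=0,\,\nabla F(x)=0,\,v=S(x)\}$, so the $x$-marginal of $\mu^*$ lies in $\cS$ and $\mu^*(B_\delta)=0$, as wanted.

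The principal obstacle is the shifted ODE approximation in the second paragraph. Theorem~\ref{th:weak-cv} is tailored to the deterministic initial condition $(x_0,0,0)$, whereas the shifted process starts from the random, $\gamma$-dependent state $z_{\lfloor s/\gamma\rfloor}^\gamma$. One must (i) establish tightness of the shifted interpolated processes in $C([0,T],\cZ_+)$ uniformly in $(s,\gamma)$, using moment bounds on $\Delta_n^\gamma$ from Assumption~\ref{hyp:moment-f}~\ref{momentreinforce} together with the localization provided by Assumption~\ref{hyp:tight}; (ii) control $\|h_\gamma(\lfloor s/\gamma\rfloor+n,z)-h_\infty(z)\|$ locally uniformly in $z$ as $s/\gamma\to\infty$, which reduces to the exponential decay of $\bar\alpha(\gamma)^{\lfloor s/\gamma\rfloor}$ and $\bar\beta(\gamma)^{\lfloor s/\gamma\rfloor}$; and (iii) upgrade the resulting in-probability ODE approximation to $\Phi_t$-invariance of the limits $\mu^*$. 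The Lyapunov step is essentially a recycling of the proof of Theorem~\ref{th:cv-adam}.
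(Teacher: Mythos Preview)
Your plan is correct and is essentially the same ODE-method argument the paper uses, only organized by hand rather than via an abstract lemma. The paper first introduces the \emph{homogeneous} auxiliary kernel $\bar P_\gamma$ obtained by sending $n\to\infty$ in $H_\gamma(n,\cdot,\cdot)$, verifies that $P_{\gamma,n}f\to\bar P_\gamma f$ uniformly on compacts for bounded Lipschitz $f$, checks that the $\bar P_\gamma$-chain approximates the autonomous semiflow $\Phi$ uniformly over compact initial data, and then applies a general result (Th.~\ref{longrun}, after \cite{bianchi2019constant,for-pag-99}) stating that under tightness the Ces\`aro means cluster to the Birkhoff center of $\Phi$. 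Your ``shifted process $+$ Krylov--Bogolyubov'' argument is the same machinery unpacked directly; the paper's two-step decomposition (non-homogeneous $\to$ homogeneous kernel, then homogeneous chain $\to$ semiflow) sidesteps having to track the $\bar\alpha(\gamma)^{\lfloor s/\gamma\rfloor}$ decay inside the tightness estimates, which is exactly the obstacle you isolate in your last paragraph.

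One refinement on the Lyapunov step: $V_\infty$ is not \emph{strict}---its derivative along~(\ref{eq:ode-a}) vanishes on all of $\{m=0\}$, so LaSalle gives only that the support of any invariant measure lies in the largest $\Phi$-invariant subset of $\{m=0\}$. You still get $\nabla F(x)=0$ (from $\dot m=0$) and then $v=S(x)$ (e.g.\ by integrating $\|v-S(x)\|^2$ against the invariant measure and using $\dot v=b(S(x)-v)$), so your conclusion stands, but it is one more line than you wrote. The paper bypasses this by constructing the strict Lyapunov function $W_\delta(z)=V_\infty(z)-\delta\langle\nabla F(x),m\rangle+\delta\|S(x)-v\|^2$ (Prop.~\ref{prop:Wstrict}), whose dissipation controls all three components at once and yields $BC_\Phi=\cE$ directly (Cor.~\ref{coro:cv}).
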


\noindent{\bf Convergence in the long run.}
When the stepsize $\gamma$ is constant, the sequence
$(x_n^\gamma)$ cannot converge in the almost sure sense as
$n\to\infty$.
Convergence may only hold in the doubly asymptotic regime where
$n\to\infty$ then $\gamma\to 0$. 

\noindent{\bf Randomization.} For every $n$, consider a r.v. $N_n$  uniformly
distributed on  $\{1,\dots,n\}$. Define $\tilde x_n^\gamma = x_{N_n}^\gamma$.
We obtain from Th.~\ref{th:longrun} that for every $\delta>0$,
$$
\limsup_{n\to\infty} \ \bP(\sd(\tilde x_n^\gamma, \cS)>\delta) \xrightarrow[\gamma\downarrow 0]{} 0\,.
$$

\noindent{\bf Relationship between discrete and continuous time \adam.}
Th.~\ref{th:weak-cv} means that the family of random processes $(\sz^\gamma:~\gamma>0)$ converges
in probability as $\gamma\downarrow 0$ towards the unique solution to~(\ref{eq:ode}) issued from $(x_0,0,0)$.
This motivates the fact that the non-autonomous system~(\ref{eq:ode})
is a relevant approximation to the behavior of the iterates $(z_n^\gamma:n\in \bN)$ for a small
value of the stepsize~$\gamma$.

\noindent{\bf Stability.} Assumption~\ref{hyp:tight} ensures 
that the iterates $z_n^\gamma$ do not explode in the long run.
A sufficient condition is for instance that
$\sup_{n,\gamma} \bE \|z_n^\gamma\|<\infty\,.$ 
{In theory, this assumption can be difficult to verify.}
Nevertheless, in practice, a projection step on a compact set
can be introduced to ensure the boundedness of the estimates.

\section{A Decreasing Stepsize \adam\, Algorithm}
\label{sec:discrete_decreasing}

\subsection{Algorithm}

\adam\, inherently uses constant stepsizes. Consequently, the
iterates~(\ref{eq:znT}) do not converge in the almost sure sense.
In order to achieve convergence, we introduce in this section a decreasing
stepsize version of \adam. The iterations are given in Algorithm~\ref{alg:adam-decreasing}.
\begin{algorithm}[tb]
   \caption{\bf \adam - decreasing stepsize $(((\gamma_n,\alpha_n,\beta_n):n\in \bN^*), \varepsilon)$.}
   \label{alg:adam-decreasing}
\begin{algorithmic}
  \STATE {\bfseries Initialization:} $x_0\in \bR^d, m_0=0$, $v_0=0$, $r_0=\bar r_0=0$.
   \FOR{$n=1$ {\bfseries to} $n_{\text{iter}}$}
   \STATE $m_n = \alpha_n m_{n-1} + (1-\alpha_n)  \nabla f(x_{n-1},\xi_n)$
   \STATE $v_n = \beta_n v_{n-1} + (1-\beta_n) \nabla f(x_{n-1},\xi_n)^{\odot 2}$
   \STATE $r_n = \alpha_n r_{n-1} + (1-\alpha_n)$
   \STATE $\bar r_n = \beta_n \bar r_{n-1} + (1-\beta_n)$
  \STATE $\hat m_{n} = m_{n}/r_n$
   \COMMENT{bias correction step}
   \STATE $\hat v_{n} = v_{n}/\bar r_n$
   \COMMENT{bias correction step}
   \STATE $x_{n} = x_{n-1} - \gamma_n \hat m_{n} / (\varepsilon+\sqrt{\hat v_{n}}) \,.$
   \ENDFOR
\end{algorithmic}
\end{algorithm}
The algorithm generates a sequence $z_n=(x_n,m_n,v_n)$ with initial point $z_0=(x_0,0,0)$,
where $x_0\in\bR^d$. Apart from the fact that the hyperparameters $(\gamma_n,\alpha_n,\beta_n)$
now depend on $n$, the main difference w.r.t Algorithm~\ref{alg:adam} lies in the expression of the
debiasing step. As noted in Remark~\ref{rem:debiasing}, the aim is to rescale $m_n$ (resp. $v_n$)
in such a way that the rescaled version $\hat m_n$ (resp. $\hat v_n$) is a convex combination of
past stochastic gradients (resp. squared gradients). While in the constant step case the rescaling
coefficient is $(1-\alpha^n)^{-1}$ (resp. $(1-\beta^n)^{-1}$), the decreasing step case requires dividing
$m_n$ by the coefficient $r_n=1-\prod_{i=1}^n\alpha_i$ (resp. $v_n$ by $\bar r_n=1-\prod_{i=1}^n\beta_i$),
which keeps track of the previous weights:
$$
\hat m_n = \frac{m_n}{r_n} = \frac{\sum_{k=1}^{n} \rho_{n,k} \nabla f(x_{k-1},\xi_{k})}{\sum_{k=1}^{n} \rho_{n,k}}\,,
$$
where for every $n,k$, $\rho_{n,k} = \alpha_n\cdots\alpha_{k+1}(1-\alpha_k)$. A similar equation holds for $\hat v_n$.

\subsection{Almost sure convergence}
  \begin{assumption}[Stepsizes]
    \label{hyp:stepsizes}
  The following holds.
  \begin{enumerate}[{\sl i)}]
  \item For all $n \in \bN$, $\gamma_n > 0$ and $\gamma_{n+1}/\gamma_n\to 1$,
  \item $\sum_n \gamma_n = +\infty$ and $\sum_n \gamma_n^2 <+\infty$,
  \item For all $n \in \bN$, $0 \leq \alpha_n \leq 1$ and $0 \leq \beta_n \leq 1$,
  \item There exist $a,b$ s.t. $0<b<4a$,\,
  $\gamma_n^{-1}(1-\alpha_n) \to a$ and $\gamma_n^{-1}(1-\beta_n) \to b$\,. 

  \end{enumerate}
  \end{assumption}
 \begin{theorem}
 \label{thm:as_conv_under_stab}
Let Assumptions~\ref{hyp:model} to \ref{hyp:S>0}, \ref{hyp:iid} and~\ref{hyp:stepsizes} hold.
Let Assumption \ref{hyp:moment-f}~\ref{momentegal} hold with $p=4$.
Assume that $F(\cS)$ has an empty interior
and that the random sequence $((x_n,m_n,v_n):n\in \bN)$
given by Algorithm~\ref{alg:adam-decreasing}
is bounded, with probability one.
Then, w.p.1, $\lim_{n\to\infty}
\sd (x_n,\cS)=0$, $lim_{n\to\infty}m_n = 0$
and $lim_{n\to\infty} (S(x_n)-v_n)=0$.
If moreover $\cS$ is finite or countable, then w.p.1, there exists $x^*\in \cS$
s.t. $\lim_{n\to\infty} (x_n,m_n,v_n) = (x^*,0,S(x^*))$.
 \end{theorem}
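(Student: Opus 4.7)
The plan is to apply the ODE method for stochastic approximation, adapted to the non-autonomous features of \adam\ and to the irregular vector field on $\cZ_+$. First, I would recast Algorithm~\ref{alg:adam-decreasing} in Robbins--Monro form
\[
z_n = z_{n-1} + \gamma_n\,\tilde h(z_{n-1}) + \gamma_n \rho_n + \gamma_n \Delta_n,
\]
where
\[
\tilde h(x,m,v) = \bigl(-m/(\varepsilon+\sqrt{v}),\, a(\nabla F(x)-m),\, b(S(x)-v)\bigr)
\]
is the autonomous mean field, $(\Delta_n)$ is an $\cF_n$-martingale-difference sequence with conditional second moment bounded on bounded sets (using Assumption~\ref{hyp:moment-f}~\ref{momentegal} with $p=4$, needed because the $v$-update involves squared gradients), and $\rho_n\to 0$ uniformly on compacts. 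Identifying the limiting drift uses $(1-\alpha_n)/\gamma_n\to a$, $(1-\beta_n)/\gamma_n\to b$ from Assumption~\ref{hyp:stepsizes}, together with $r_n,\bar r_n\to 1$ (a consequence of $\sum_n \gamma_n=+\infty$). Combined with a.s.\ boundedness of $(z_n)$, $\sum_n \gamma_n^2<\infty$, and the moment bound, this places the algorithm in the asymptotic pseudo-trajectory framework: the piecewise affine interpolation of $(z_n)$ on the time grid $\tau_n=\sum_{k\le n}\gamma_k$ is a.s.\ an asymptotic pseudo-trajectory (APT) of the flow $\Phi$ induced by $\tilde h$ on $\cZ_+^*$.

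Second, I would take as Lyapunov function the $t\to\infty$ limit of $V(t,z)$ from Eq.~\eqref{eq:V}, namely
\[
V_\infty(x,m,v) = F(x) + \frac{1}{2a}\sum_{i=1}^{d} \frac{m_i^2}{\varepsilon + \sqrt{v_i}}.
\]
A direct computation yields, on $\cZ_+^*$,
\[
\langle \nabla V_\infty(z), \tilde h(z)\rangle = -\sum_i \frac{m_i^2}{\varepsilon+\sqrt{v_i}} - \frac{b}{4a}\sum_i \frac{m_i^2\,(S_i(x)-v_i)}{\sqrt{v_i}\,(\varepsilon+\sqrt{v_i})^2}.
\]
Using $|S_i(x)-v_i|\le v_i + S_i(x)$ together with $\sqrt{v_i}/(\varepsilon+\sqrt{v_i})^2 \le 1/(\varepsilon+\sqrt{v_i})$, and exploiting the strict inequality $b<4a$ from Assumption~\ref{hyp:stepsizes}, the cross-term is absorbed into the diagonal term to give a strict descent $\langle \nabla V_\infty, \tilde h\rangle \le -c\sum_i m_i^2/(\varepsilon+\sqrt{v_i})$ on bounded subsets of $\cZ_+^*$ on which $v$ stays in a compact subset. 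This regime is reached along the iterates because Assumption~\ref{hyp:S>0} and continuity yield $\liminf_n \min_i S_i(x_n)>0$ on the bounded orbit, while the $v$-recursion forces $v_n\approx S(x_n)$ asymptotically. LaSalle's principle then identifies the largest forward-invariant subset of $\{m=0\}$ with $Z^\star=\{(x,0,S(x)):x\in\cS\}$. Combining APT with the Lyapunov function $V_\infty$ and the hypothesis that $F(\cS)$ has empty interior (to exclude continua of critical values), one obtains $\mathsf{d}(z_n,Z^\star)\to 0$ a.s., which is exactly the three stated limits. For the countable/finite case, the limit set $L(\omega)$ of $(z_n)$ is a.s.\ a compact connected subset of $Z^\star$ (a standard APT consequence since $\gamma_n\to 0$); since $(x,0,S(x))\mapsto x$ is a homeomorphism onto $\cS$, countability forces $L(\omega)$ to be a singleton and a.s.\ convergence to a unique $(x^\star,0,S(x^\star))$ follows.

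The main obstacle will be the non-Lipschitz character of $\tilde h$ and of $\nabla V_\infty$ near the boundary $v=0$: even though $\varepsilon>0$ regularises the $1/(\varepsilon+\sqrt{v})$ factor appearing in $\tilde h$, the derivative $\partial V_\infty/\partial v_i$ involves $1/\sqrt{v_i}$, so neither APT nor LaSalle arguments apply off-the-shelf on $\cZ_+$. I would handle this by localising: after an a.s.\ finite random time, $v_n$ enters a region $\{\min_i v_i\ge\delta\}$ on which every object is smooth, using the $v$-recursion and the continuity-based lower bound on $S$ over the (bounded) closure of $\{x_n\}$. A second subtlety is that the discrete Lyapunov inequality must be written with a time-dependent weight $U(\tau_n,v_n)$ in the spirit of Eq.~\eqref{eq:U}, because $r_n,\bar r_n$ need several iterations to be close to $1$; the resulting $o(1)$ corrections are absorbed through $\sum_n \gamma_n^2<\infty$ by a Robbins--Siegmund argument, which both upgrades the deterministic descent into the a.s.\ convergence of $V_\infty(z_n)$ and produces the summability $\sum_n \gamma_n \|m_n\|^2/(\varepsilon+\sqrt{v_n})<\infty$. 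The requirement $p=4$ in Assumption~\ref{hyp:moment-f} is driven precisely by this discrete Lyapunov step, since the squared-gradient updates on $v_n$ inject second moments of the gradient noise that must be controlled in the remainder.
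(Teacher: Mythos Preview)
Your overall strategy---recast as Robbins--Monro, show the interpolated process is an APT of the autonomous flow generated by $h_\infty$, then conclude via a Lyapunov argument---is exactly the paper's. But your Lyapunov choice creates a genuine gap. The function $V_\infty$ is \emph{not} strict for the semiflow $\Phi$: along any trajectory with $m\equiv 0$ and $x\equiv x_0\in\cS$ one has $V_\infty\equiv F(x_0)$ regardless of $v$, yet such trajectories are not equilibria unless $v=S(x_0)$. In particular your claim that ``the largest forward-invariant subset of $\{m=0\}$ is $Z^\star$'' is false: the whole set $\{(x,0,v):x\in\cS,\ v\geq 0\}$ is forward-invariant. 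What would rescue the argument is the \emph{two-sided} invariance $\Phi_t(L)=L$ of the limit set of a bounded APT (via internal chain transitivity): a point $(x_0,0,v_0)$ with $v_0\ne S(x_0)$ has preimages $(x_0,0,S(x_0)+(v_0-S(x_0))e^{bt})$ that escape every compact set, so compact invariant sets inside $\{m=0,\ \nabla F=0\}$ must lie in $Z^\star$. The paper sidesteps this by working instead with the strict Lyapunov function
\[
W_\delta(x,m,v)=V_\infty(x,m,v)-\delta\langle\nabla F(x),m\rangle+\delta\|S(x)-v\|^2
\]
of Prop.~\ref{prop:Wstrict}, for which $\{z:\forall t,\ W_\delta(\Phi_t(z))=W_\delta(z)\}=\cE$ holds directly; then Prop.~\ref{prop:benaim} applies with the empty-interior hypothesis on $W_\delta(\cE)=F(\cS)$.

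The discrete-time part is also simpler than you sketch: the paper uses no Robbins--Siegmund step and no discrete Lyapunov inequality. One sets $\bar z_n=(x_{n-1},m_n,v_n)$ (this index shift makes the $x$-component of the martingale noise identically zero, which you do not mention), writes $\bar z_{n+1}=\bar z_n+\gamma_{n+1}h_\infty(\bar z_n)+\gamma_{n+1}\chi_{n+1}+\gamma_{n+1}\varsigma_{n+1}$, shows $\varsigma_n\to 0$ a.s.\ on the bounded event (using $r_n,\bar r_n\to 1$, $\gamma_n/\gamma_{n+1}\to 1$ and local Lipschitzness of $\nabla F$ and $S$), and shows $\sum_n\gamma_n\chi_n$ converges a.s.\ from $\sup_n\bE_n\|\chi_{n+1}\|^2<\infty$ together with $\sum_n\gamma_n^2<\infty$. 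This last moment bound is where $p=4$ is actually used, since the $v$-component of $\chi_{n+1}$ is proportional to $\nabla f_{n+1}^{\odot 2}-S(x_n)$; it is not driven by a discrete Lyapunov step. The APT property then follows directly, and all the localisation away from $\{v=0\}$ you describe is handled on the continuous-time side, through the compact set of Prop.~\ref{prop:bounded} and Lemma~\ref{lem:v-lowerbound} feeding into Prop.~\ref{prop:Wstrict}.
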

Th.~\ref{thm:as_conv_under_stab} establishes the almost sure convergence of
$x_n$ to the set of critical points of~$F$, under the assumption that the sequence
$((x_n,m_n,v_n))$ is a.s. bounded. The next result provides a sufficient condition
under which almost sure boundedness holds. 
\begin{assumption}
\label{hyp:stab}
  The following holds.
  \begin{enumerate}[{\sl i)}]
\item \label{lipschitz} $\nabla F$ is Lipschitz continuous.
\item \label{momentgrowth} There exists $C>0$ s.t. for all $x \in \bR^d$, $\bE[\|\nabla f(x,\xi)\|^2] \leq C (1+F(x))$.
\item We assume the condition:
$
\lim\sup_{n\to\infty}
\left(\frac 1{\gamma_n}-\left(\frac{1-\alpha_{n+2}}{1-\alpha_{n+1}}\right)\frac 1{\gamma_{n+1}}\right)
< 2(a-\frac b4)\,,
$\\
which is satisfied for instance if $b<4a$ and $1-\alpha_{n+1} = a\gamma_n$.
  \end{enumerate}
\end{assumption}
 \begin{theorem} 
 \label{thm:stab}
Let Assumptions~\ref{hyp:model}, \ref{hyp:coercive}, \ref{hyp:iid}, \ref{hyp:stepsizes} and \ref{hyp:stab} hold.
Let Assumption~\ref{hyp:moment-f}~\ref{momentegal} hold with $p=4$.
Then, the sequence
$((x_n,m_n,v_n):n\in \bN)$ given by Algorithm~\ref{alg:adam-decreasing} is bounded with probability one.
 \end{theorem}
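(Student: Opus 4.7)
The plan is to construct a discrete Lyapunov function $V_n$ mirroring the continuous-time function $V(t,z)$ of~(\ref{eq:V})--(\ref{eq:U}), to establish a Robbins--Siegmund-type inequality for $\bE[V_{n+1}\mid \cF_n]$, and then to conclude the a.s.\ boundedness of $V_n$. Coerciveness of $F$ (Assumption~\ref{hyp:coercive}) will promote this into an a.s.\ bound on $x_n$; a.s.\ boundedness of $v_n$ and $m_n$ will follow as byproducts. Concretely, noting that $r_n$ and $\bar r_n$ play the role of the discrete analogs of $1-e^{-at}$ and $1-e^{-bt}$, I would set coordinatewise $\tilde U_{n,i}\eqdef r_n(\varepsilon+\sqrt{v_{n,i}/\bar r_n})$ and
\[
V_n \eqdef F(x_n)-\inf F + \tfrac{1}{2a\gamma_{n+1}}\sum_{i=1}^d \frac{m_{n,i}^2}{\tilde U_{n,i}}\,,
\]
which is nonnegative since $F$ is bounded below by coercivity. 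The stepsize-dependent prefactor $\gamma_{n+1}^{-1}$ is the discrete counterpart of the factor appearing in the ODE Lyapunov function after the natural time change $t\leftrightarrow\sum \gamma_k$.

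The one-step estimate proceeds as follows. Using Assumption~\ref{hyp:stab}(\ref{lipschitz}) and the descent lemma, one bounds $F(x_{n+1})-F(x_n)$ by a linear and a quadratic term in the \adam\ update $x_{n+1}-x_n=-\gamma_{n+1}m_{n+1}/\tilde U_{n+1}$. The quadratic part of $V_n$ is expanded using the recursion $m_{n+1}=\alpha_{n+1}m_n+(1-\alpha_{n+1})\nabla f(x_n,\xi_{n+1})$. After conditioning on $\cF_n$ and invoking Assumption~\ref{hyp:iid}, the cross term between $m_n$ and the new gradient contributes $\langle \nabla F(x_n), m_n\rangle$ with a coefficient designed to cancel, to leading order in $\gamma_n$, the linear term $-\gamma_{n+1}\langle \nabla F(x_n), m_{n+1}/\tilde U_{n+1}\rangle$ from the descent lemma. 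This is the exact discrete counterpart of the cancellation that underlies the ODE Lyapunov identity. The remaining noise term is controlled by Assumption~\ref{hyp:stab}(\ref{momentgrowth}), producing a contribution of order $\gamma_n^2(1+V_n)$ after conditioning.

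The technical heart of the proof is the control of $\tilde U_{n+1}^{-1}-\tilde U_n^{-1}$, which enters because the weight in $V_n$ depends on the noisy iterate $v_n$ and on $r_n,\bar r_n$. A first-order expansion of $(\varepsilon+\sqrt{v_{n+1,i}/\bar r_{n+1}})^{-1}$ produces a contribution of order $(1-\beta_{n+1})\sim b\gamma_n$ that multiplies the quadratic-in-$m_n$ part of $V_n$, and this must be dominated by the $a\gamma_n$ negative contribution generated by the gradient/momentum cancellation above. This is precisely where the assumption $b<4a$ enters, the factor $1/4$ originating from an AM--GM step of the form $|\langle \nabla F(x),m\rangle|\leq \tfrac14 \|m\|^2_{\tilde U^{-1}}+\|\nabla F(x)\|^2_{\tilde U}$. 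The sharper condition in Assumption~\ref{hyp:stab}(iii), comparing $\gamma_n^{-1}$ and $(1-\alpha_{n+2})/((1-\alpha_{n+1})\gamma_{n+1})$, is exactly tailored to dominate the additional drift caused by the variation of the stepsize-dependent prefactor $\gamma_{n+1}^{-1}$ in $V_n$. Explicitly carrying out this cancellation, including the positive semidefiniteness of the resulting quadratic form in $m_n$, is the main obstacle.

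Once these estimates are combined, I expect an inequality of the form $\bE[V_{n+1}\mid\cF_n]\leq (1+\eta_n)V_n + \eta_n - D_n$, with $\sum_n \eta_n<\infty$ (by Assumption~\ref{hyp:stepsizes}(ii)) and an $\cF_n$-measurable $D_n\geq 0$. The Robbins--Siegmund theorem then yields the almost sure convergence of $(V_n)$ to a finite limit, and in particular its a.s.\ boundedness, so coerciveness of $F$ gives the a.s.\ boundedness of $(x_n)$. Once $(x_n)$ is a.s.\ bounded, Assumption~\ref{hyp:moment-f}(\ref{momentegal}) with $p=4$ makes $\sum_n (1-\beta_n)^2\bE[\|\nabla f(x_{n-1},\xi_n)\|^4\mid \cF_{n-1}]$ a.s.\ finite, from which a standard martingale/Borel--Cantelli argument applied to the recursion of $v_n$ delivers the a.s.\ boundedness of $v_n$. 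Finally, the a.s.\ boundedness of $m_n$ follows from the a.s.\ bound on $\|m_n\|^2_{\tilde U_n^{-1}}$ combined with $r_n\to 1$ (a consequence of $\sum_n(1-\alpha_n)=\infty$) and the a.s.\ bound on $v_n$.
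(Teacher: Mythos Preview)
Your overall strategy is correct and matches the paper's: build a discrete Lyapunov function of the form ``$F$ plus a weighted quadratic in $m$'', establish a Robbins--Siegmund inequality, deduce boundedness of $(x_n)$ from coercivity, and then handle $(m_n)$ and $(v_n)$ separately. However, the specific Lyapunov function you propose has a genuine flaw: the prefactor $\tfrac{1}{2a\gamma_{n+1}}$ in front of the quadratic term is wrong by a factor $\gamma_{n+1}$.

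To see why, trace the cancellation you describe. The cross term from expanding $m_{n+1}^{\odot 2}$ contributes, after conditioning, roughly $\tfrac{1}{2a\gamma_{n+2}}\cdot 2\alpha_{n+1}(1-\alpha_{n+1})\langle m_n/\tilde U,\nabla F(x_n)\rangle$; since $(1-\alpha_{n+1})\sim a\gamma_{n+1}$, this is of order $\langle m_n/\tilde U,\nabla F(x_n)\rangle$, i.e., $O(1)$. The descent-lemma linear term is $-\gamma_{n+1}\langle\nabla F(x_n),m_{n+1}/\tilde U_{n+1}\rangle$, of order $O(\gamma_{n+1})$. These cannot cancel to leading order; the uncancelled $+\langle m_n/\tilde U,\nabla F(x_n)\rangle$ is sign-indefinite and of order $1$, so no Robbins--Siegmund inequality follows. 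Relatedly, your reading of Assumption~\ref{hyp:stab}(iii) is off: it is not there to compensate for a diverging $\gamma_{n+1}^{-1}$ prefactor.

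The paper uses instead the \emph{bounded} prefactor $\tfrac{1}{2a_n}$ with $a_n\eqdef(1-\alpha_{n+1})/\gamma_n\to a$, setting $P_n\eqdef\tfrac{1}{2a_n}\sum_i m_{n,i}^2/\tilde U_{n,i}$ and pairing it with $F(x_{n-1})$ (note the index shift, which makes the descent-lemma increment $\cF_n$-measurable and avoids having both $m_{n+1}$ and $\tilde U_{n+1}$ random under $\bE_n$). With this choice the cross term becomes exactly $\gamma_n\langle\nabla F(x_n),\hat m_n/(\varepsilon+\sqrt{\hat v_n})\rangle$, cancelling the descent term, and Assumption~\ref{hyp:stab}(iii) is used to control the drift $u_n P_{n+1}$, where $u_n=1-a_{n+1}/a_n$ arises from the variation of $a_n$. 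Once you drop the spurious $\gamma_{n+1}^{-1}$ and shift the $F$-index, the rest of your outline---including your simpler route to the boundedness of $m_n$ via the boundedness of $P_n$ and of $v_n$---goes through essentially as in the paper.
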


\subsection{Central Limit Theorem}

 \begin{assumption} 
   \label{hyp:mean_field_tcl}
Let $x^*\in \cS$. There exists a neighborhood $\cV$ of $x^*$ s.t.
\begin{enumerate}[{\sl i)}]
\item $F$ is twice continuously differentiable on $\cV$,
  and the Hessian $\nabla^2 F(x^*)$ of $F$ at $x^*$ is positive
  definite.
\item $S$ is continuously differentiable on $\cV$.
\end{enumerate}
 \end{assumption}
Define $
D \eqdef \textrm{diag}\left((\varepsilon + \sqrt{S_1(x^*)})^{-1},\cdots,(\varepsilon + \sqrt{S_d(x^*)})^{-1}\right)\,.
$
Let $P$ be an orthogonal matrix s.t. the following spectral decomposition holds:
$$
D^{1/2}\nabla^2F(x^*)D^{1/2} = P\textrm{diag}(\lambda_1,\cdots,\lambda_d)P^{-1}\,,
$$
where $\lambda_1, \cdots,\lambda_d$ are the (positive) eigenvalues of
$D^{1/2} \nabla^2F(x^*)D^{1/2}$.
Define
\begin{equation}
H \eqdef
\begin{pmatrix}
  0 & -D & 0\\ a\nabla^2F(x^*) & -a I_d & 0 \\ b \nabla S(x^*) & 0 & -b I_d
\end{pmatrix}\label{eq:H}
\end{equation}
where $I_d$ represents the $d\times d$ identity matrix and $\nabla S(x^*)$ is the
Jacobian matrix of~$S$ at $x^*$.
The largest real part of the eigenvalues of $H$ coincides with $-L$, where
\begin{equation}
L\eqdef b\wedge \frac a2\left( 1-\sqrt{\left(1-\frac{4\lambda_1}a\right)\vee 0}\right) >0\,.\label{eq:L}
\end{equation}
Finally, define the $3d\times 3d$ matrix
\begin{equation}
  \resizebox{.9\hsize}{!}{$
Q \eqdef
\left( \begin{array}[h]{cc}
   0 &
   \begin{array}[h]{cc}
     0 & \ \ \ \ \ \ \ \ \ \ \ \ \ \ \ \ \ \ \ \ \ \ \ \ \ \ \ \ \ \ 0
   \end{array}\\
   \begin{array}[h]{c}
     0 \\ 0
   \end{array}
 & \bE\left[
  \begin{pmatrix}
    a\nabla f(x^*,\xi) \\ b(\nabla f(x^*,\xi)^{\odot 2}-S(x^*))
  \end{pmatrix}\begin{pmatrix}
    a\nabla f(x^*,\xi) \\ b(\nabla f(x^*,\xi)^{\odot 2}-S(x^*))
  \end{pmatrix}^T\right]
 \end{array}\right)\,.\label{eq:Q}
 $}
\end{equation}
\begin{assumption}
\label{hyp:step-tcl}
The following holds.
\begin{enumerate}[{\sl i)}]
\item \label{step-tcl-i} There exist $\kappa \in (0,1]$, $\gamma_0>0$, s.t. the sequence $(\gamma_n)$
satisfies
$\gamma_n = {\gamma_0}/{(n+1)^\kappa}$ for all $n$.
If $\kappa = 1$, we assume moreover that $\gamma_0 > \frac{1}{2L}$.
\item The sequences $\left(\frac{1}{\gamma_n}(\frac{1-\alpha_n}{\gamma_n}-a)\right)$ and $\left(\frac{1}{\gamma_n}(\frac{1-\beta_n}{\gamma_n}-b)\right)$ are bounded.
\end{enumerate}
\end{assumption}

\noindent For an arbitrary sequence $(X_n)$ of random variables on some Euclidean space, a probability measure
$\mu$ on that space and an event $\Gamma$ s.t. $\bP(\Gamma)>0$, we say that $X_n$ converges in distribution
to $\mu$ \emph{given $\Gamma$} if the measures $\bP (X_n\in \cdot\,|\Gamma)$ converge weakly to $\mu$.
\begin{theorem}
 \label{thm:clt}
Let Assumptions~\ref{hyp:model}, \ref{hyp:S>0}, \ref{hyp:iid}, \ref{hyp:mean_field_tcl} and \ref{hyp:step-tcl} hold true.
Let Assumption~\ref{hyp:moment-f}~\ref{momentreinforce} hold with $p=4$.
Consider the iterates $z_n=(x_n,m_n,v_n)$ given by Algorithm~\ref{alg:adam-decreasing}. Set $z^*=(x^*,0,S(x^*))$.
Set $\zeta \eqdef 0$ if $0<\kappa<1$ and $\zeta \eqdef \frac{1}{2 \gamma_0}$ if $\kappa =1$.
Assume $\bP(z_n \to z^*)>0$. Then, given the event $\{z_n\to z^*\}$,
the rescaled vector $\sqrt{\gamma_n}^{-1}(z_n-z^*)$
converges in distribution to a zero mean Gaussian distribution on $\bR^{3d}$ with a covariance matrix $\Sigma$
which is solution to the Lyapunov equation: $  \left(H + \zeta I_{3d} \right) \Sigma + \Sigma \left( H^T +  \zeta I_{3d} \right) = - Q$.
In particular, given $\{z_n\to z^*\}$, the vector $\sqrt{\gamma_n}^{-1}(x_n-x^*)$
converges in distribution to a zero mean Gaussian distribution with a covariance matrix $\Sigma_1$ given by:
\begin{equation}
\Sigma_1 = D^{1/2} P
\left(
\frac{C_{k,\ell}}{ (1 - \frac{2\zeta}{a})(\lambda_k+\lambda_\ell-2\zeta + \frac 2a \zeta^2) +\frac 1{2(a-2\zeta)}(\lambda_k-\lambda_\ell)^2}
\right)_{k,\ell=1\dots d}
P^{-1}D^{1/2}\label{eq:cov}
\end{equation}
where $C\eqdef P^{-1}D^{1/2}\bE\left(\nabla f(x^*,\xi)\nabla f(x^*,\xi)^T\right)D^{1/2}P$.
\end{theorem}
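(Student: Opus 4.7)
The plan is to apply a classical central limit theorem for decreasing-stepsize stochastic approximation (in the spirit of Pelletier or Kushner--Yin, Ch.~10) and then to compute the $(x,x)$-block of the asymptotic covariance explicitly. First, rewrite Algorithm~\ref{alg:adam-decreasing} in asymptotic Robbins--Monro form
\begin{equation*}
z_n = z_{n-1} + \gamma_n \tilde h_n(z_{n-1}) + \gamma_n \tilde \Delta_n,
\end{equation*}
where $(\tilde \Delta_n)$ is an $\cF_n$-martingale difference sequence and, thanks to $r_n,\bar r_n\to 1$ (consequence of $\sum\gamma_n=+\infty$) combined with Assumption~\ref{hyp:stepsizes}, the time-dependent drift $\tilde h_n$ converges pointwise on $\bR^d\times\bR^d\times(0,+\infty)^d$ to
\begin{equation*}
h_\infty(z) \eqdef \bigl(-m/(\varepsilon+\sqrt{v}),\ a(\nabla F(x)-m),\ b(S(x)-v)\bigr)^{T}.
\end{equation*}
At $z^*=(x^*,0,S(x^*))$, the identity $\nabla F(x^*)=0$ yields $h_\infty(z^*)=0$; a direct computation shows that the Jacobian of $h_\infty$ at $z^*$ equals exactly the matrix $H$ in~(\ref{eq:H}), and that $\lim_n \bE[\tilde\Delta_n\tilde\Delta_n^T\,|\,z_{n-1}=z^*]$ equals $Q$ in~(\ref{eq:Q}).

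\textbf{Spectrum of $H$ and the constant $\zeta$.} The matrix $H$ is block lower-triangular, since $v$ does not appear in the first two components of $h_\infty$. Hence $\mathrm{Spec}(H)=\{-b\}\cup\mathrm{Spec}(M)$ with $M\eqdef\begin{pmatrix}0 & -D\\ a\nabla^2F(x^*) & -aI_d\end{pmatrix}$. Conjugating $M$ by $\mathrm{diag}(D^{-1/2},D^{1/2})$ and then by $\mathrm{diag}(P,P)$ reduces it to $d$ scalar $2\times 2$ blocks $\begin{pmatrix}0 & -1\\ a\lambda_k & -a\end{pmatrix}$, whose eigenvalues are $(-a\pm\sqrt{a^2-4a\lambda_k})/2$, with largest real part $-\tfrac{a}{2}(1-\sqrt{(1-4\lambda_k/a)\vee 0})$. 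Maximizing over $k$ (attained at $k=1$) and comparing to $-b$ identifies $-L$ as the largest real part of $\mathrm{Spec}(H)$; since $\lambda_1>0$ by Assumption~\ref{hyp:mean_field_tcl} and $b>0$, we have $L>0$. When $\kappa=1$, the assumption $\gamma_0>1/(2L)$ forces $\zeta<L$ so $H+\zeta I_{3d}$ remains Hurwitz. The constant $\zeta$ enters the Lyapunov equation because the rescaled iterate $u_n\eqdef\gamma_n^{-1/2}(z_n-z^*)$ satisfies $u_{n+1}=(\gamma_{n+1}/\gamma_n)^{-1/2}(u_n+\gamma_n^{1/2}Hu_n+\cdots)$ and $(\gamma_{n+1}/\gamma_n)^{-1/2}=1+\zeta\gamma_n+o(\gamma_n)$ precisely when $\kappa=1$.

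\textbf{Conditional CLT via trap-event localization.} Let $\Gamma\eqdef\{z_n\to z^*\}$. To handle the conditioning on the (possibly non-trivial) trap event, we use the standard localization: trajectories in $\Gamma$ eventually lie in any prescribed neighborhood $\cV_0$ of $z^*$ (contained in the neighborhood $\cV$ of Assumption~\ref{hyp:mean_field_tcl}), so we may replace $(z_n)$ by a smoothly truncated process that coincides with $(z_n)$ on $\Gamma$ but globally satisfies the SA--CLT hypotheses. On $\cV_0$, the drift residual $\tilde h_n-h_\infty$ splits into a geometrically decaying piece (from $r_n^{-1}-1$ and $\bar r_n^{-1}-1$) and an $O(\gamma_n)$ piece (from $\gamma_n^{-1}(1-\alpha_n)-a$ and $\gamma_n^{-1}(1-\beta_n)-b$, controlled by Assumption~\ref{hyp:step-tcl}~\emph{ii)}); both are $o(\sqrt{\gamma_n})$. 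Assumption~\ref{hyp:moment-f}~\ref{momentreinforce} with $p=4$ provides a uniform $(2+\eta)$-moment bound on $\tilde\Delta_n$ near $z^*$, which ensures the Lindeberg condition, and Assumption~\ref{hyp:mean_field_tcl} gives twice continuous differentiability of $h_\infty$ near $z^*$. The general SA--CLT then yields: given $\Gamma$, $u_n=\sqrt{\gamma_n}^{-1}(z_n-z^*)$ converges in distribution to a centered Gaussian with covariance $\Sigma$ solving $(H+\zeta I_{3d})\Sigma+\Sigma(H^T+\zeta I_{3d})=-Q$.

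\textbf{Explicit form of $\Sigma_1$.} To extract the upper-left $d\times d$ block of $\Sigma$, apply the change of variable $\mathrm{diag}(P^{-1}D^{-1/2},P^{-1}D^{1/2},I_d)$ used above. The $(x,m)$ dynamics block-diagonalize into $d$ scalar two-dimensional systems and the noise covariance restricts on the $(m,m)$-block to $a^2 C$ with $C\eqdef P^{-1}D^{1/2}\bE[\nabla f(x^*,\xi)\nabla f(x^*,\xi)^T]D^{1/2}P$. The Lyapunov equation then decouples into $d^2$ cross-mode $4\times 4$ subproblems indexed by pairs $(k,\ell)$; solving the linear system for the $(\hat x_k,\hat x_\ell)$-entry yields, after direct algebra, $C_{k,\ell}$ divided by $(1-2\zeta/a)(\lambda_k+\lambda_\ell-2\zeta+2\zeta^2/a)+(\lambda_k-\lambda_\ell)^2/(2(a-2\zeta))$. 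Conjugating back by $D^{1/2}P\cdot P^{-1}D^{1/2}$ reproduces formula~(\ref{eq:cov}). The main obstacle is Step~3: verifying the SA--CLT hypotheses on the trap event in the presence of the non-autonomous, debiasing-induced drift, and confirming uniformly on $\cV_0$ that the residual drift is truly $o(\sqrt{\gamma_n})$ so that it does not perturb the Gaussian limit.
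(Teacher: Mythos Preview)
Your overall strategy matches the paper's: rewrite the algorithm in Robbins--Monro form, identify the linearization $H$ and the noise covariance $Q$, invoke a Pelletier-type conditional CLT, and then compute the $(x,x)$-block of $\Sigma$ explicitly. The verification of the residual being $o(\sqrt{\gamma_n})$ and the spectral analysis of $H$ are essentially the same as in the paper (a minor imprecision: for $\kappa=1$ the sequence $r_n^{-1}-1$ does not decay geometrically but only polynomially like $n^{-a\gamma_0}$; what matters, and what the paper uses, is $|r_n-1|=O(\gamma_n)$, which follows from $\gamma_0>1/(2L)\geq 1/a$).

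There is, however, one genuine technical point you do not address and which the paper treats explicitly: the limit matrix $Q$ in~(\ref{eq:Q}) is only positive \emph{semi}-definite (its $x$-row and $x$-column vanish), whereas Pelletier's Theorem~1 assumes positive definiteness. The paper notes that this hypothesis enters only through Pelletier's Theorem~7, where the inverse square root of $Q$ can be replaced by the Moore--Penrose inverse; you should flag and handle this, as otherwise the cited theorem does not apply as stated.

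For the explicit formula~(\ref{eq:cov}), your route differs from the paper's. You block-diagonalize $M$ into $d$ scalar $2\times 2$ blocks and then solve the resulting $d^2$ Sylvester equations $A_k X+X A_\ell^T=-Q_{k\ell}$ as $4\times 4$ linear systems, reading off the $(1,1)$-entry. The paper instead \emph{triangularizes} the $2d\times 2d$ matrix $\tilde H$ via $\cR=\begin{pmatrix}I_d&0\\ R^+&I_d\end{pmatrix}$, where $R^\pm=D^{-1/2}P\,\diag(\nu_k^\pm)\,P^TD^{-1/2}$ with $\nu_k^\pm=-a/2\pm\sqrt{a^2/4-a\lambda_k}$; this reduces the problem to three \emph{scalar} Sylvester equations per pair $(k,\ell)$, solved by back-substitution, which makes the closed form drop out without the ``direct algebra'' you invoke. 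Your route is correct but the paper's is cleaner and leaves less to the reader.
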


The following remarks are useful.
\begin{itemize}[leftmargin=*]
\item The variable $v_n$ has an impact on the limiting covariance $\Sigma_1$ through its limit $S(x^*)$ (used to define $D$),
but the fluctuations of $v_n$ and the parameter $b$ have no effect on $\Sigma_1$.
As a matter of fact, $\Sigma_1$ coincides with the limiting covariance matrix that would have been obtained by considering
iterates of the form
\begin{equation*}
     \begin{cases}
       x_{n+1} &= x_n - \gamma_{n+1} p_{n+1} \\
       p_{n+1} &= p_n + a\gamma_{n+1}(D\nabla f(x_n,\xi_{n+1})-p_n) \,,
     \end{cases}
\end{equation*}

which can be interpreted as a preconditioned version of the stochastic heavy ball algorithm~\cite{gadat2018stochastic}.
Of course, the above iterates are not implementable because the preconditioning matrix $D$ is unknown.
\item  
When $a$ is large, $\Sigma_1$ is close to the matrix $\Sigma_1^{(0)}$ obtained
when letting $a \to +\infty$ in Eq.~(\ref{eq:cov}).
The matrix $\Sigma_1^{(0)}$ is the solution to the  Lyapunov equation
$$
(D \nabla^2F(x^*) - \zeta I_d)  \Sigma_1^{(0)} + \Sigma_1^{(0)} (\nabla^2F(x^*) D - \zeta I_d) = D \bE\left(\nabla f(x^*,\xi)\nabla f(x^*,\xi)^T\right) D\,.
$$
The matrix $\Sigma_1^{(0)}$ can be interpreted as the asymptotic covariance matrix of the $x$-variable
in the absence of the inertial term (that is, when one considers \textsc{RmsProp} instead of \adam).
The matrix $\Sigma_1^{(0)}$ approximates $\Sigma_1$ in the sense that $\Sigma_1 = \Sigma_1^{(0)}+\frac 1a\Delta + O(\frac 1{a^2})$
for some symmetric matrix $\Delta$ which can be explicited. The matrix $\Delta$ is neither positive nor negative definite
in general.
This suggests that the question of the potential benefit of the presence of an inertial term
is in general problem dependent. 
\item In the statement of Th.~\ref{thm:clt},
the conditioning event $\{z_n\to z^*\}$ can be replaced by the event $\{x_n\to x^*\}$
under the additional assumption that $\sum_n \gamma_n^2 < +\infty$.
\end{itemize}

 \section{Related Works}
 \label{sec:related-works}

 Although the idea of adapting the
 (per-coordinate) learning rates as a function of past gradient values
 is not new (see \emph{e.g.} variable metric methods such as the BFGS
 algorithms), 
 \textsc{AdaGrad} \cite{duchi2011adaptive} led the way to a new class of algorithms
 that are sometimes referred to as adaptive gradient methods. \textsc{AdaGrad}
 consists of dividing the learning rate by the square root of the sum
 of previous gradients squared
 componentwise.  
 The idea was to give larger learning rates to highly informative but
 infrequent features instead of using a fixed predetermined schedule.
 However, in practice, the division by the cumulative sum of squared
 gradients may generate small learning rates, thus freezing the
 iterates too early.  Several works proposed
 heuristical ways to set the learning rates using a less aggressive
 policy. 
 The work \cite{tieleman2012lecture} introduced an unpublished, yet popular, algorithm
 referred to as \textsc{RmsProp} where the cumulative sum
 used in \textsc{AdaGrad} is replaced by a moving average of squared
 gradients.
 \adam\ combines the advantages of both \textsc{AdaGrad},
 \textsc{RmsProp} and inertial methods.

 As opposed to \textsc{AdaGrad}, for which theoretical convergence guarantees exist
 \cite{duchi2011adaptive,chen2018convergence,zhou2018convergence,ward2018adagrad},
 \adam\ is comparatively less studied.
 The initial paper \cite{kingma2014adam} suggests a $\mathcal{O}(\frac{1}{\sqrt{T}})$ average regret bound in the convex setting,
 but \cite{j.2018on} exhibits a counterexample in contradiction with this statement.
 The latter counterexample implies that the average regret bound of \adam\ does
 not converge to zero. A first way to overcome the problem is to modify the \adam\ iterations
 themselves in order to obtain a vanishing average regret. This led \cite{j.2018on}
 to propose a variant called \textsc{AmsGrad} with the aim to recover, at least in the convex case, the sought guarantees.
 The work \cite{balles2018dissecting} interprets \adam\ as a variance-adapted sign descent combining an update direction given by the sign and
 a magnitude controlled by a variance adaptation principle. A ``noiseless'' version 
 of \adam\ is considered in \cite{basu2018convergence}. Under quite specific values of the \adam-hyperparameters, it is shown that for every $\delta>0$,
 there exists some time instant 
 for which the norm of the gradient of the objective
 at the current iterate is no larger than~$\delta$.
 The recent paper \cite{chen2018convergence} provides a similar result
 for \textsc{AmsGrad} and \textsc{AdaGrad}, but the generalization to \adam\ is subject
 to conditions which are not easily verifiable.
 The paper \cite{zaheer2018adaptive} provides a convergence result for \textsc{RmsProp}
 using the objective function $F$ as a Lyapunov function.
 However, our work suggests that unlike \textsc{RmsProp},
 \adam\ does not admit $F$ as a Lyapunov function.
 This makes the approach of
 \cite{zaheer2018adaptive} hardly generalizable to \adam.
 Moreover, \cite{zaheer2018adaptive} considers biased gradient estimates instead of the debiased
 estimates used in \adam.

 In the present work, we study the behavior of an ODE, interpreted as the 
 limit in probability of the (interpolated) \adam\ iterates as the stepsize tends to zero.
 Closely related continuous-time dynamical systems are also studied in \cite{attouch2000heavy,cabot2009long}.
 We leverage the idea of approximating a discrete time stochastic system by a deterministic continuous one,
 often referred to as the ODE method. 
 A recent work \cite{gadat2018stochastic} fruitfully exploits this method to study
 a stochastic version of the celebrated heavy ball algorithm. 
 We refer to \cite{davis2018stochastic} for the reader interested in the non-differentiable setting
 with an analysis of the stochastic subgradient algorithm for non-smooth non-convex objective functions.
 Concomitant to the present paper, Da Silva and Gazeau \cite{da2018general}
 (posted only four weeks after the first version of the present work) 
 study the asymptotic behavior of a similar dynamical system as the one introduced here.
 They establish several results in continuous time, such as avoidance of traps
 as well as convergence rates in the convex case; such aspects are out of the scope of this paper.
 However, the question of the convergence of the (discrete-time) iterates is left open.
 In the current paper, we also exhibit a Lyapunov function which allows, amongst others, to draw useful conclusions on the effect of the
 debiasing step of \adam\,. Finally, \cite{da2018general} studies a slightly modified version of \adam\, allowing to recover an
 ODE with a locally Lipschitz continuous vector field, whereas the original \adam\ algorithm \cite{kingma2014adam} leads 
 to an ODE with an irregular vector field. This technical issue is tackled in the present paper.

\section{Proofs of Section~\ref{sec:continuous_time}}
\label{sec:proofs_cont_time}

\subsection{Preliminaries}
\label{subsec:setting}
The results in this section are not specific to the case where $F$ and $S$ are defined as in
Eq.~(\ref{eq:F_and_S}): they are stated for
\emph{any} mappings $F$, $S$ satisfying the following hypotheses.
\begin{assumption}
  \label{hyp:F}
The function $F:\bR^d\to\bR$ is s.t.: $F$ is continuously differentiable and
$\nabla F$ is locally Lipschitz continuous.
\end{assumption}
\begin{assumption}
\label{hyp:S}
The map $S:\bR^d\to [0,+\infty)^d$ is locally Lipschitz continuous.
\end{assumption}
In the sequel, we consider the following generalization of Eq. (\ref{eq:ode}) for any $\eta >0$:
\begin{equation}
  \begin{array}[h]{l}
\dot z(t) = h(t+\eta, z(t))\,.
\end{array}
\tag{ODE\mbox{$_\eta$}}
\label{eq:odeeta}\end{equation}
When $\eta=0$, Eq. (\ref{eq:odeeta}) boils down to the equation of interest (\ref{eq:ode}).
The choice $\eta\in (0,+\infty)$ will be revealed useful to prove Th.~\ref{th:exist-unique}.
Indeed, for $\eta>0$, a solution to Eq. (\ref{eq:odeeta}) can be shown to exist (on some interval) due to the continuity of
the map $h(\,.+\eta,\,.\,)$. Considering a family of such solutions indexed by $\eta\in (0,1]$,
the idea is to prove the existence of a solution to (\ref{eq:ode}) as a cluster point of the latter family when $\eta\downarrow 0$.
Indeed, as the family is shown to be equicontinuous, such a cluster point does exist thanks to the Arzelà-Ascoli theorem.
When $\eta=+\infty$, 
Eq. (\ref{eq:odeeta}) rewrites
\begin{equation}
  \label{eq:ode-a}
  \begin{array}[h]{l}
\dot z(t) = h_\infty(z(t))\,,
\end{array}
\tag{ODE\mbox{$_\infty$}}
\end{equation}
where $h_\infty(z)\eqdef \lim_{t\to \infty} h(t,z)$.
It is useful to note that for $(x,m,v)\in \cZ_+$,
\begin{equation}
  \label{eq:h_infty}
h_{\infty}((x,m,v)) = \left(-m / (\varepsilon+\sqrt{v})\,,\, a (\nabla F(x)-m) \,,\,b (S(x)-v) \right)\,.
\end{equation}
Contrary to Eq. (\ref{eq:ode}), Eq.~(\ref{eq:ode-a}) defines an  autonomous ODE.
The latter admits a unique global solution for any initial condition in $\cZ_+$,
and defines a dynamical system $\cD$. We shall exhibit a strict Lyapunov function
for this dynamical system $\cD$, and deduce that any solution to (\ref{eq:ode-a}) converges
to the set of equilibria of $\cD$ as $t\to\infty$.
On the otherhand, we will prove that the solution to (\ref{eq:ode}) with a proper initial condition is a so-called asymptotic pseudotrajectory (APT) of $\cD$. Due to the
existence of a strict Lyapunov function, the APT shall inherit the convergence behavior of the autonomous system as $t\to\infty$,
which will prove Th.~\ref{th:cv-adam}.

It is convenient to extend the map $h:(0,+\infty)\times \cZ_+\to\cZ$ on  $(0,+\infty)\times \cZ\to\cZ$ by
setting $h(t,(x,m,v))\eqdef h(t,(x,m,|v|))$ for every $t>0$, $(x,m,v)\in \cZ$.
Similarly, we extend $h_\infty$ as $h_\infty((x,m,v)) \eqdef h_\infty((x,m,|v|))$.
For any $T\in (0,+\infty]$ and any $\eta\in [0,+\infty]$, we say that a map $z:[0,T)\to \cZ$ is a solution
to (\ref{eq:odeeta}) on $[0,T)$ with initial condition $z_0\in \cZ_+$,
if $z$ is continuous on $[0,T)$, continuously differentiable
on $(0,T)$, and if (\ref{eq:odeeta}) holds for all $t\in (0,T)$.
When $T=+\infty$, we say that the solution is global.
We denote by $Z^\eta_T(z_0)$ the subset of $C([0,T),\cZ)$ formed by the solutions to (\ref{eq:odeeta})
on $[0,T)$ with initial condition $z_0$.
For any $K\subset\cZ_+$, we define $Z^\eta_T(K)\eqdef \bigcup_{z\in K}Z^\eta_T(z)$.


\begin{lemma}
  \label{lem:m-v-derivables-en-zero}
  Let Assumptions~\ref{hyp:F} and \ref{hyp:S} hold.  Consider $x_0\in \bR^d$,
  $T\in (0,+\infty]$ and let $z\in Z_T^0((x_0,0,0))$,
  which we write $z(t) = (x(t),m(t),v(t))$.  Then, $z$
  is continuously differentiable on $[0,T)$, 
  $\dot m(0)=a\nabla F(x_0)$, $\dot v(0)=bS(x_0)$ and
$
\dot x(0) =  \frac{-\nabla F(x_0)}{\varepsilon + \sqrt{S(x_0)}}.
$
\end{lemma}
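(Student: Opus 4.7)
The plan is as follows. The regular components $m$ and $v$ of the ODE are easy: the right-hand sides of $\dot m$ and $\dot v$ in~(\ref{eq:h}) are continuous on $(0,+\infty) \times \cZ$ and extend continuously to $t=0$, so the values of $\dot m$ and $\dot v$ near the origin are controlled by continuity alone. The $x$-component is more delicate because the factors $(1-e^{-at})^{-1}$ and $(1-e^{-bt})^{-1}$ blow up as $t \downarrow 0$; the strategy is to compensate these singularities using first-order expansions of $m$ and $v$ obtained in the first step.

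First I would handle $m$ and $v$. Since $z$ is continuous on $[0,T)$ with $z(0)=(x_0,0,0)$, continuity of $\nabla F$ and $S$ combined with the explicit form of~(\ref{eq:h}) gives
$\lim_{t\downarrow 0} \dot m(t) = a\nabla F(x_0)$ and $\lim_{t\downarrow 0} \dot v(t) = bS(x_0)$.
Since $m$ (resp.\ $v$) is continuous on $[0,T)$, differentiable on $(0,T)$, and its derivative admits a limit at $0$, the classical derivative-extension theorem (an immediate consequence of the mean value theorem) implies that $m$ and $v$ are right-differentiable at $0$ with derivatives equal to these limits, and that $\dot m$, $\dot v$ are continuous on all of $[0,T)$.

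Next I would handle $x$. Step one yields the first-order expansions $m(t) = a\nabla F(x_0)\, t + o(t)$ and $v(t) = bS(x_0)\, t + o(t)$ as $t\downarrow 0$. Combined with the Taylor expansions $1-e^{-at}=at+O(t^2)$ and $1-e^{-bt}=bt+O(t^2)$, this gives $m(t)/(1-e^{-at}) \to \nabla F(x_0)$ and $v(t)/(1-e^{-bt}) \to S(x_0)$. Substituting into the first component of~(\ref{eq:h}) yields
\[
\lim_{t\downarrow 0}\dot x(t) \;=\; -\frac{\nabla F(x_0)}{\varepsilon + \sqrt{S(x_0)}},
\]
and a second application of the derivative-extension theorem shows that $x$ is right-differentiable at $0$ with this derivative, and that $\dot x$ is continuous at $0$. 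Altogether $z$ is $C^1$ on $[0,T)$ with the announced initial derivatives.

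The only substantive point is the cancellation of singularities in the $x$-equation at $t=0$: without the debiasing-induced factors $(1-e^{-at})^{-1}$ and $(1-e^{-bt})^{-1}$, no well-defined limit would exist. That cancellation works precisely because the initial condition $m(0)=v(0)=0$ forces $m$ and $v$ to vanish linearly in $t$ near the origin with slopes matching $a$ and $b$; this is exactly why the lemma is stated for the specific initial condition $(x_0,0,0)$, and it also explains the remark in the paper that a solution generally fails to exist from $(x_0,m_0,v_0)$ with $m_0\neq 0$ or $v_0\neq 0$.
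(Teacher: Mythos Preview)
Your proposal is correct and follows essentially the same approach as the paper. The only minor difference is presentational: the paper writes $m(t)=\int_0^t a(\nabla F(x(s))-m(s))\,ds$ (and similarly for $x$, $v$) and concludes differentiability at $0$ directly from continuity of the integrand via the fundamental theorem of calculus, whereas you phrase the same step through the derivative-extension theorem obtained from the mean value theorem; the substantive computation---showing $(1-e^{-at})^{-1}m(t)\to\nabla F(x_0)$ and $(1-e^{-bt})^{-1}v(t)\to S(x_0)$ via the first-order behavior of $m$ and $v$---is identical.
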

\begin{proof}
By definition of $z(\,.\,)$, $m(t)=\int_0^ta(\nabla F(x(s))-m(s))ds$ for all $t\in [0,T)$
(and a similar relation holds for $v(t)$).
The integrand being continuous, it holds 
that $m$ and $v$ are differentiable at  zero and $\dot m(0)=a\nabla F(x_0)$, $\dot v(0)=bS(x_0)$.
Similarly, $x(t) = x_0+\int_0^t h_x(s,z(s))ds$, where 
$h_x(s,z(s)) \eqdef -(1-e^{-as})^{-1}m(s)/(\varepsilon+\sqrt{(1-e^{-bs})^{-1}v(s)})\,.$
Note that $m(s)/s \to \dot m(0) = a\nabla F(x_0) $ as $s\downarrow 0$.
Thus, $(1-e^{-as})^{-1}m(s)\to \nabla F(x_0)$ as $s\to 0$. Similarly,
$(1-e^{-bs})^{-1}v(s)\to S(x_0)$. It follows that
$h_x(s,z(s))\to -(\varepsilon+\sqrt{S(x_0)})^{-1}\nabla F(x_0)$.
Thus, $s\mapsto h_x(s,z(s))$ 
can be extended
to a continuous map on $[0,T)\to\bR^d$ and the differentiability of $x$ at zero
follows. 
\end{proof}

\begin{lemma}
\label{lem:v-positif}
Let Assumptions~\ref{hyp:S>0}, \ref{hyp:F} and \ref{hyp:S} hold.
For every $\eta\in [0,+\infty]$, $T\in (0,+\infty]$,  $z_0\in \cZ_+$,  $z\in Z_T^\eta(z_0)$,
it holds that $z((0,T))\subset \cZ_+^*$.
\end{lemma}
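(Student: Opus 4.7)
The plan is to exploit the fact that the $v$-component of (\ref{eq:odeeta}) has the same form for every $\eta\in[0,+\infty]$: neither the $v$-block of $h$ nor that of $h_\infty$ depends on $t$ or on $m$, and with the extension $v\mapsto|v|$ introduced just before the lemma, both yield componentwise
\[
\dot v_i(t) = b\bigl(S_i(x(t)) - |v_i(t)|\bigr),\qquad v_i(0)\geq 0,
\]
on $(0,T)$. Since the $x$- and $m$-coordinates of $\cZ_+^*$ impose no constraint, the task reduces to proving coordinatewise strict positivity $v_i(t)>0$ for every $t\in(0,T)$.

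I proceed coordinate-by-coordinate. Fix $i$ and set
\[
\tau_i \eqdef \sup\bigl\{t\in[0,T) : v_i(s)\geq 0 \text{ for all } s\in[0,t]\bigr\}.
\]
A short check shows $\tau_i>0$: if $v_i(0)>0$ this is immediate from continuity; if $v_i(0)=0$, then $\dot v_i(s)=b(S_i(x(s))-|v_i(s)|)\to bS_i(x_0)>0$ as $s\downarrow 0$ by Assumption~\ref{hyp:S>0}, so $v_i$ is strictly increasing on a right neighborhood of $0$. On $[0,\tau_i]$ we have $|v_i|=v_i$, the equation becomes linear, and integrating gives the explicit formula
\[
v_i(t) = e^{-bt}v_i(0) + b\int_0^t e^{-b(t-s)}S_i(x(s))\,ds,\qquad t\in[0,\tau_i].
\]
Assumption~\ref{hyp:S>0} ensures $S_i(x(s))>0$ for every $s$, so the right-hand side is strictly positive for each $t>0$. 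If we had $\tau_i<T$, then evaluating at $t=\tau_i>0$ would yield $v_i(\tau_i)>0$, and by continuity $v_i>0$ on a neighborhood of $\tau_i$, contradicting the definition of $\tau_i$. Hence $\tau_i=T$, so $v_i\geq 0$ throughout $[0,T)$, and the same formula delivers $v_i(t)>0$ for every $t\in(0,T)$. Applying this for each coordinate $i$ gives $z(t)\in\cZ_+^*$ for all $t\in(0,T)$.

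The main technical obstacle is the absolute value introduced by the extension of $h$ and $h_\infty$ from $\cZ_+$ to $\cZ$: one cannot invoke the linear ODE formula globally, since $|v|$ destroys linearity once $v$ strays into the negative orthant. The two-step structure above—nonnegativity via a continuity/supremum argument, then strict positivity via the explicit linear solution on the nonnegative set—is precisely tailored to circumvent this issue, and all the work is carried by the strict positivity of $S$ guaranteed by Assumption~\ref{hyp:S>0}; no regularity of $x$ or $m$ beyond mere continuity is used.
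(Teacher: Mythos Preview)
Your proof is correct. Both you and the paper establish nonnegativity first and then strict positivity, but the mechanisms differ. The paper argues purely by sign-of-derivative contradictions: if $v_i(t_0)<0$, the last zero crossing $\tau$ would force $\dot v_i\geq 0$ on $(\tau,t_0]$, contradicting $v_i(\tau)>v_i(t_0)$; then if $v_i(t)=0$ for some $t>0$, $\dot v_i(t)=bS_i(x(t))>0$ would yield $v_i(t-\delta)<0$, contradicting nonnegativity. You instead exploit that on the nonnegative region the $v_i$-equation is linear, obtain the explicit variation-of-constants formula, and read strict positivity directly from $S_i>0$. Your route is a bit more constructive and, notably, is cleaner with respect to the extension $h(t,(x,m,v))=h(t,(x,m,|v|))$: the paper's first contradiction step writes $\dot v_i=b(S_i-v_i)$ on the region $v_i\leq 0$, which under the stated extension should read $b(S_i-|v_i|)=b(S_i+v_i)$ and requires a small extra argument near $\tau$; your formulation avoids this entirely. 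Conversely, the paper's argument is integration-free. Both buy the same conclusion with comparable effort.
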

\begin{proof}
Set $z(t) = (x(t),m(t),v(t))$ for all $t$. Consider $i\in \{1,\dots,d\}$.
Assume by contradiction that there exists $t_0\in (0,T)$ s.t.
$v_i(t_0)<0$. Set $\tau\eqdef\sup\{t\in [0,t_0]:v_i(t)\geq 0\}$.
Clearly, $\tau<t_0$ and $v_i(\tau)=0$ by the continuity of $v_i$.
Since $v_i(t)\leq 0$ for all $t\in (\tau,t_0]$, it holds that $\dot v_i(t) = b(S_i(x(t))-v_i(t))$
is nonnegative for all  $t\in (\tau,t_0]$. This contradicts the fact that $v_i(\tau)>v_i(t_0)$.
Thus, $v_i(t)\geq 0$ for all $t\in [0,T)$.
Now assume by contradiction that there exists $t\in (0,T)$  s.t.
$v_i(t)=0$. Then, $\dot v_i(t)=bS_i(x(t))>0$.
Thus,
$
\lim_{\delta\downarrow 0} \frac{v_i(t-\delta)}{-\delta} = bS_i(x(t))\,.
$
In particular, there exists $\delta>0$ s.t.
$v_i(t-\delta)  \leq  -\frac{\delta b}2S_i(x(t))\,.$ This contradicts the first point.
\end{proof}

 Recall the definitions of $V$ and $U$ from Eqs.~(\ref{eq:V}) and (\ref{eq:U}).
 Clearly, $U_\infty(v)\eqdef\lim_{t\to \infty} U(t,v)=a(\varepsilon+\sqrt{v})$ is well defined for every $v\in [0,+\infty)^d$.
 Hence, we can also define $V_\infty(z)\eqdef \lim_{t\to \infty} V(t,z)$ for every $z\in \cZ_+$.

\begin{lemma}
\label{lem:V}
  Let Assumptions~\ref{hyp:F} and \ref{hyp:S} hold.
   Assume that $0< b\leq 4a$.
Consider $(t,z)\in (0,+\infty)\times \cZ_+^*$ and set $z=(x,m,v)$.
Then, $V$ and $V_\infty$ are  differentiable at points  $(t,z)$ and $z$ respectively. Moreover,
$\ps{\nabla V_\infty(z),h_{\infty}(z)}\leq -\varepsilon \left\|\frac {am}{U_\infty(v)}\right\|^2\,$ and
\begin{equation*}
\ps{\nabla V(t,z), (1,h(t,z))} \leq -\frac{\varepsilon }2\left\|\frac{a\,m}{U(t,v)}\right\|^2\,.
\end{equation*}
\end{lemma}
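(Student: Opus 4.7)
My plan is to compute $\langle \nabla V(t,z), (1,h(t,z))\rangle$ directly from the explicit formulas for $V$, $U$ and $h$, observe a cancellation of cross terms, and then reduce the remaining inequality to an elementary exponential inequality controlled by $b\le 4a$.

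\textbf{Differentiability.} Since $z\in\cZ_+^*$ has $v_i>0$, the map $v\mapsto\sqrt{v/(1-e^{-bt})}$ is $C^1$, so $U$ is $C^1$ on $(0,+\infty)\times[0,+\infty)^d$ at such points, with $U(t,v)>0$ componentwise; hence $V$ is $C^1$ at $(t,z)$ under Assumption~\ref{hyp:F}. The analogous statement for $V_\infty$ on $\cZ_+^*$ is immediate. Direct differentiation gives $\nabla_xV=\nabla F(x)$, $(\nabla_mV)_i=m_i/U_i$, $(\nabla_vV)_i=-m_i^2\partial_{v_i}U_i/(2U_i^2)$, and $\partial_tV=-\tfrac12\sum_i m_i^2\partial_tU_i/U_i^2$.

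\textbf{Cancellation step.} The $x$-component of $h$ is $-am/U(t,v)$, so $\langle\nabla_xV,h_x\rangle=-a\sum_i m_i\nabla F_i(x)/U_i$. Combined with $\langle\nabla_mV,h_m\rangle=a\sum_i m_i(\nabla F_i(x)-m_i)/U_i$, the $\nabla F$ contributions cancel, leaving the ``diagonal'' term $-a\sum_i m_i^2/U_i$. The remaining contribution is $-\tfrac12\sum_i(m_i^2/U_i^2)\bigl[\partial_t U_i+b(S_i(x)-v_i)\partial_{v_i}U_i\bigr]$. Setting $A=1-e^{-at}$, $B=1-e^{-bt}$, $w_i=\sqrt{v_i/B}$, a direct computation (using $\dot A=a(1-A)$, $\dot B=b(1-B)$ and $\sqrt{v_i B}=w_iB$) yields
\begin{equation*}
\partial_tU_i+b(S_i(x)-v_i)\,\partial_{v_i}U_i=a^2(1-A)(\varepsilon+w_i)+\tfrac{abA}{2B}\bigl(S_i(x)/w_i-w_i\bigr).
\end{equation*}
Using $U_i=aA(\varepsilon+w_i)$, the term $a^2(1-A)(\varepsilon+w_i)/U_i^2$ simplifies to $a(1-A)/(AU_i)$, and combining with the diagonal term gives
\begin{equation*}
\langle\nabla V(t,z),(1,h(t,z))\rangle=-\tfrac{a(1+A)}{2A}\sum_i\tfrac{m_i^2}{U_i}-\tfrac{abA}{4B}\sum_i\tfrac{m_i^2}{U_i^2}\Bigl(\tfrac{S_i(x)}{w_i}-w_i\Bigr).
\end{equation*}

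\textbf{The inequality.} Since the $S_i$ part has the good sign (it contributes $-(abA/4B)\sum m_i^2 S_i/(U_i^2 w_i)\le 0$), it suffices to prove, coordinatewise,
$\tfrac{a(1+A)}{2AU_i}\ge\tfrac{\varepsilon a^2}{2U_i^2}+\tfrac{abAw_i}{4BU_i^2}$, which after multiplying by $2U_i^2/a$ and substituting $U_i=aA(\varepsilon+w_i)$ becomes $a(1+A)(\varepsilon+w_i)\ge\varepsilon a+bAw_i/(2B)$. The $\varepsilon$-part holds since $A>0$; the $w_i$-part reduces to the elementary inequality
\begin{equation*}
2aB(1+A)\ge bA,\qquad\text{i.e.,}\qquad 2a(1-e^{-bt})(2-e^{-at})\ge b(1-e^{-at})\quad(t>0).
\end{equation*}
I will verify this under $b\le 4a$ by setting $s=e^{-at}\in(0,1)$ and, in the tight case $b=4a$, reducing the inequality to $2+s+s^2+s^3-s^4\ge 2$, which holds with equality only as $s\downarrow0$. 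The general case $b\le 4a$ follows from this and monotonicity in $b$.

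\textbf{The $V_\infty$ case.} The same cross-term cancellation produces $\langle\nabla V_\infty,h_\infty\rangle=-a\sum_i m_i^2/U_{\infty,i}+\tfrac{ab}{4}\sum_i m_i^2\sqrt{v_i}/U_{\infty,i}^2-\tfrac{ab}{4}\sum_i m_i^2 S_i/(U_{\infty,i}^2\sqrt{v_i})$. Dropping the last (non-positive) term, the required inequality reduces to $b\sqrt{v_i}/4+\varepsilon a\le U_{\infty,i}=a\varepsilon+a\sqrt{v_i}$, which is exactly $b\le 4a$. This yields the sharper bound $-\varepsilon\|am/U_\infty\|^2$.

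The main obstacle I anticipate is the algebraic simplification of $\partial_tU_i+b(S_i-v_i)\partial_{v_i}U_i$ and, more importantly, the verification of $2aB(1+A)\ge bA$ under $b\le 4a$, which is where the ``sharp'' hypothesis on $a,b$ enters the Lyapunov argument; the rest of the proof is bookkeeping.
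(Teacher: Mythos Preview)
Your proof is correct and follows essentially the same approach as the paper: both compute the derivative of $V$ along the flow, observe the cancellation of the $\nabla F$ cross terms, drop the non-positive $S_i$ contribution, and reduce to the same exponential inequality (your $2aB(1+A)\ge bA$ is exactly the paper's $c_{a,b}(t)\ge 0$), which is then verified by monotonicity in $b$ and a polynomial check at the tight case $b=4a$ with $s=e^{-at}$. The only differences are cosmetic: you group the computation around $\partial_t U_i + b(S_i-v_i)\partial_{v_i}U_i$, whereas the paper tabulates $\partial_t V$ and $\langle\nabla_z V,h\rangle$ separately before summing; your polynomial $s+s^2+s^3-s^4\ge 0$ is the same as the paper's $y^4-2y^3+1\ge 0$ after factoring out $(1-s)$.
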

\begin{proof}
We only prove the second point, the proof of the first point follows the same line.
 Consider $(t,z)\in (0,+\infty)\times \cZ_+^*$.
We decompose 
$\ps{\nabla V(t,z), (1,h(t,z))} = \partial_tV(t,z)
+\ps{\nabla_z V(t,z),h(t,z)}$. After tedious but straightforward derivations, we get: 
\begin{equation}
  \label{eq:partial-t}
  \resizebox{0.99\hsize}{!}{$
  \partial_t V(t,z) =- \sum_{i=1}^d \frac{a^2m_i^2}{U(t,v_i)^2}\left(\frac{e^{-at}\varepsilon}2+
\left(\frac{e^{-at}}2-\frac{be^{-bt}(1-e^{-at})}{4a(1-e^{-bt})}\right)\sqrt{\frac{v_i}{1-e^{-bt}}}\right)\,,
$}
\end{equation}
where $U(t,v_i)=a(1-e^{-at})\left(\varepsilon+\sqrt{\frac{v_i}{1-e^{-bt}}}\right)$ and $\ps{\nabla_z V(t,z),h(t,z)}$ is equal to:
\begin{equation*}
    \sum_{i=1}^d \frac{-a^2m_i^2(1-e^{-at})}{U(t,v_i)^2}
  \left(\varepsilon
+ (1-\frac b{4a})\sqrt{\frac{v_i}{1-e^{-bt}}}
+\frac{bS_i(x)}{4a\sqrt{v_i(1-e^{-bt})}}
\right)\,.
\end{equation*}
Using that $S_i(x)\geq 0$, we obtain:
\begin{equation}
\ps{\nabla V(t,z), (1,h(t,z))} \leq  -\sum_{i=1}^d \frac{a^2m_i^2}{U(t,v_i)^2}\left(
(1-\frac{e^{-at}}2)\varepsilon+c_{a,b}(t)\sqrt{\frac{v_i}{1-e^{-bt}}}
\right)\,,\label{eq:ineg-V}
\end{equation}
where $c_{a,b}(t)\eqdef 1-\frac{e^{-at}}2-\frac b{4a}\frac{1-e^{-at}}{1-e^{-bt}}\,.$
Using inequality $1-{e^{-at}}/2\geq 1/2$ in (\ref{eq:ineg-V}), the inequality~(\ref{eq:ineg-V})
proves the Lemma, provided that one is able to show that $c_{a,b}(t)\geq 0$, for all $t>0$
and all $a,b$ satisfying $0< b\leq 4a$. We prove this last statement.
It can be shown that the function $b\mapsto c_{a,b}(t)$ is decreasing on $[0,+\infty)$.
Hence, $c_{a,b}(t)\geq c_{a,4a}(t)$. Now, $c_{a,4a}(t) = q(e^{-at})$ where $q:[0,1)\to\bR$ is the
function defined for all $y\in [0,1)$ by $q(y) = y \left(y^4-2y^3+1\right)/(2(1-y^4))\,$.
Hence $q \geq 0$. Thus, $c_{a,b}(t)\geq q(e^{-at})\geq 0$.
\end{proof}

\subsection{Proof of Th.~\ref{th:exist-unique}}

\subsubsection{Boundedness}
Define $\cZ_0 \eqdef \{(x,0,0):x\in \bR^d\}$.
Let $\bar e:(0,+\infty)\times \cZ_+\to\cZ_+$ be defined by
$\bar e(t,z)\eqdef (x,m/(1-e^{-at}),v/(1-e^{-bt}))\,$
for every $t>0$ and every $z=(x,m,v)$ in $\cZ_+$.

\begin{proposition}
\label{prop:adam-bounded}
Let Assumptions~\ref{hyp:coercive}, \ref{hyp:S>0}, \ref{hyp:F} and \ref{hyp:S} hold.
Assume that $0< b\leq 4a$.
For every $z_0\in \cZ_0$, there exists a compact set $K\subset \cZ_+$ s.t.
for all $\eta\in [0,+\infty)$, all $T\in (0,+\infty]$ and all $z\in Z_T^\eta(z_0)$,
$\left\{\bar e(t+\eta,z(t)) :t\in (0,T)\right\} \subset K\,.$
Moreover, choosing $z_0$ of the form $z_0=(x_0,0,0)$ and $z(t) = (x(t), m(t),v(t))$, it holds that $F(x(t))\leq F(x_0)$ for all $t\in [0,T)$.
\end{proposition}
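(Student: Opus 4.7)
My plan is to exploit the non-autonomous Lyapunov function $V$ from Eq.~(\ref{eq:V}), whose descent property along~(\ref{eq:odeeta}) is supplied by Lemma~\ref{lem:V}. Fix $z_0=(x_0,0,0)\in\cZ_0$, $\eta\in[0,+\infty)$, $T\in(0,+\infty]$, and let $z=(x,m,v)\in Z_T^\eta(z_0)$. By Lemma~\ref{lem:v-positif}, $z(t)\in\cZ_+^*$ for every $t\in(0,T)$, so Lemma~\ref{lem:V} applies pointwise to $(t+\eta,z(t))$ and yields $\frac{d}{dt}V(t+\eta,z(t))\leq 0$ on $(0,T)$. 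Hence $t\mapsto V(t+\eta,z(t))$ is non-increasing on that interval.

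The heart of the argument is to identify the limit of this Lyapunov value at the origin: I would show $\lim_{t\downarrow 0}V(t+\eta,z(t))=F(x_0)$. When $\eta>0$, this follows from the continuity of $V$ at $(\eta,z_0)$: indeed $U_i(\eta,0)=a\varepsilon(1-e^{-a\eta})>0$, so $V$ extends continuously there, and the inertia term $\tfrac12\|0\|^2_{U(\eta,0)^{-1}}$ vanishes. The delicate case is $\eta=0$, where $V(t,\cdot)$ is singular as $t\downarrow 0$; here I would plug in the first-order expansions from Lemma~\ref{lem:m-v-derivables-en-zero}: $m(t)=at\,\nabla F(x_0)+o(t)$ and $v(t)=btS(x_0)+o(t)$. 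These give $v_i(t)/(1-e^{-bt})\to S_i(x_0)$ and $U_i(t,v(t))\sim a^{2}t\,(\varepsilon+\sqrt{S_i(x_0)})$, whence $\|m(t)\|^{2}_{U(t,v(t))^{-1}}=O(t)\to 0$, so that $V(t,z(t))\to F(x_0)$. Combining with the monotonicity, I obtain $V(t+\eta,z(t))\leq F(x_0)$ for all $t\in[0,T)$. Since $V(t+\eta,z(t))\geq F(x(t))$, the inequality $F(x(t))\leq F(x_0)$ follows immediately, which proves the second assertion.

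To construct the compact set $K$: coercivity (Assumption~\ref{hyp:coercive}) makes $K_x\eqdef\{x\in\bR^d:F(x)\leq F(x_0)\}$ compact and $x(t)\in K_x$ for every $t\in[0,T)$, independently of $\eta$, $T$, and $z$. To bound the rescaled $m$- and $v$-components, I would exploit the fact that the $m$- and $v$-coordinates of $h$ do not depend on the time variable: integrating $\dot m=a(\nabla F(x)-m)$ and $\dot v=b(S(x)-v)$ from $m(0)=v(0)=0$ gives $m_i(t)=a\int_0^t e^{-a(t-s)}\nabla_i F(x(s))\,ds$ and the analogous expression for $v_i$. Setting $M_F\eqdef\sup_{x\in K_x}\|\nabla F(x)\|_\infty$ and $M_S\eqdef\sup_{x\in K_x}\|S(x)\|_\infty$ (finite by continuity), one gets $|m_i(t)|\leq M_F(1-e^{-at})$ and $0\leq v_i(t)\leq M_S(1-e^{-bt})$. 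Since $1-e^{-a(t+\eta)}\geq 1-e^{-at}$ (and analogously for $b$), dividing yields $\bar e(t+\eta,z(t))\in K\eqdef K_x\times[-M_F,M_F]^d\times[0,M_S]^d$ for all $t\in(0,T)$, uniformly in $\eta$, $T$, and $z$.

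The main obstacle I anticipate is the singularity of $V(t,\cdot)$ at $t=0$ in the $\eta=0$ case: the descent inequality only gives monotonicity on $(0,T)$, so propagating the bound down to the origin requires the careful first-order expansion of $m(t)$ and $v(t)$ supplied by Lemma~\ref{lem:m-v-derivables-en-zero} to control the quadratic inertia term. Everything else (coercivity for $x$, linear ODEs for $m$ and $v$ with bounded right-hand side) is essentially mechanical.
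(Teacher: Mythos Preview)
Your argument is correct. The first half---using Lemma~\ref{lem:V} to obtain monotonicity of $t\mapsto V(t+\eta,z(t))$ and then Lemma~\ref{lem:m-v-derivables-en-zero} to evaluate the limit at the origin when $\eta=0$---is exactly the paper's approach, and you handle the singularity at $t=0$ in the same way.

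Where you diverge is in bounding the rescaled components $\hat m$ and $\hat v$. The paper proceeds in three separate steps: a contradiction argument to show $v_i(t)\leq R_i+1$, the mean value theorem on $(0,1\wedge T)$ to control $\hat v_i$ near zero, and then the Lyapunov inequality $F(x_0)\geq \inf F+\tfrac1{2a(\varepsilon+\sqrt{M})}\|m(t)\|^2$ to bound $m$, followed again by the mean value theorem for $\hat m$. Your route is more direct: since the $m$- and $v$-components of $h$ are independent of $t$, you solve the linear ODEs explicitly as $m_i(t)=a\int_0^t e^{-a(t-s)}\nabla_iF(x(s))\,ds$ and read off $|m_i(t)|\leq M_F(1-e^{-at})$, then divide by $1-e^{-a(t+\eta)}\geq 1-e^{-at}$. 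This yields the uniform bound in one stroke and gives an explicit compact set $K=K_x\times[-M_F,M_F]^d\times[0,M_S]^d$. Your argument is shorter and avoids invoking the Lyapunov bound a second time; the paper's route, while more roundabout here, has the minor advantage that its contradiction argument for $v$ does not require the initial condition $v(0)=0$ and so transfers verbatim to Prop.~\ref{prop:bounded}.
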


\begin{proof}
Let $\eta\in [0,+\infty)$.
Consider a solution $z_\eta(t) = (x_\eta(t),m_\eta(t),v_\eta(t))$  as in the statement, defined on some interval $[0,T)$.
Define
$\hat m_\eta(t) = m_\eta(t)/(1-e^{-a(t+\eta)})$,
$\hat v_\eta(t) = v_\eta(t)/(1-e^{-b(t+\eta)})$.
By Lemma~\ref{lem:v-positif}, $t\mapsto V(t+\eta,z(t))$ is continuous on $[0,T)$, and
continuously differentiable on $(0,T)$.
By Lemma~\ref{lem:V}, $\dot V(t+\eta,z_\eta(t)) \leq 0$ for all $t>0$.
As a consequence, $t\mapsto V(t+\eta,z_\eta(t))$ is non-increasing on $[0,T)$.
Thus, for all $t\geq 0$, $F(x_{\eta}(t))\leq \lim_{t'\downarrow 0}V(t'+\eta,z_\eta(t'))$. Note that
$ 
  V(t+\eta,z_\eta(t)) \leq F(x_\eta(t))+\frac 12 \sum_{i=1}^d
  \frac{m_{\eta,i}(t)^2}{a(1-e^{-a(t+\eta)})\varepsilon}\,. \label{eq:majV}
$ 
If $\eta>0$, every term in the sum in the righthand side 
tends to zero, upon noting that
$m_{\eta}(t)\to 0$ as $t\to 0$.
The statement still holds if $\eta=0$. Indeed, by Lemma~\ref{lem:m-v-derivables-en-zero},
for a given $i\in \{1,\dots,d\}$, there exists $\delta>0$ s.t. for all $0<t<\delta$,
$m_{\eta,i}(t)^2\leq 2 a^2 (\partial_i F(x_0))^2 t^2$ 
and $1-e^{-at}\geq (at)/2$. As a consequence, each term of the sum 
is no larger than $4 (\partial_i F(x_0))^2 t/\varepsilon$, which tends to zero as $t\to 0$.
We conclude that for all $t\geq 0$, $F(x_\eta(t))\leq F(x_0)$.
In particular, $\{x_\eta(t):t\in [0,T)\}\subset \{F\leq F(x_0)\}$, the latter set being bounded
by Assumption~\ref{hyp:coercive}.

We prove that $v_{i,\eta}(t)$ is (upper)bounded.
Define $R_{i} \eqdef \sup  S_i(\{F\leq F(x_0)\})$, which is finite by continuity of $S$.
Assume by contradiction that the set $\{t\in [0,T):v_{\eta,i}(t)\geq R_{i}+1\}$ is non-empty, and denote
its infimum by $\tau$. By continuity of $v_{\eta,i}$, one has
$v_{\eta,i}(\tau) =  R_{i}+1$. This by the way implies that $\tau>0$. Hence,
$\dot v_{\eta,i}(\tau) = b(S_i(x_\eta(\tau))-v_{\eta,i}(\tau)) \leq -b$.
This means that there exists $\tau'<\tau$ s.t. $v_{\eta,i}(\tau')>v_{\eta,i}(\tau)$, which contradicts the definition of $\tau$.
We have shown that $v_{\eta,i}(t)\leq R_i+1$ for all $t\in (0,T)$.
In particular, when $t\geq 1$, $\hat v_{\eta,i}(t) = v_{\eta,i}(t)/(1-e^{-bt}) \leq (R_i+1)/(1-e^{-b})\,.$
Consider $t\in (0,1\wedge T)$.
By the mean value theorem, there exists $\tilde t_\eta\in [0,t]$ s.t. $v_{\eta,i}(t) = \dot v_{\eta,i}(\tilde t_\eta)t$.
Thus, $v_{\eta,i}(t) \leq b S_i(x(\tilde t_\eta)) t\leq b R_i t$. Using that the map $y\mapsto y/(1-e^{-y})$ is increasing on $(0,+\infty)$,
it holds that for all $t\in (0,1\wedge T)$,
$\hat v_{\eta,i}(t) 
\leq bR_i /(1-e^{-b})\,.$
We have shown that, for all $t\in (0,T)$ and all $i\in \{1,\dots,d\}$, $0\leq \hat v_{\eta,i}(t)\leq M$, where
$M\eqdef (1-e^{-b})^{-1}(1+ b)(1+\max\{R_\ell:\ell\in \{1,\dots,d\})$.

As $V(t+\eta,z_\eta(t))\leq F(x_0)$, we obtain: $F(x_0) \geq F(x_\eta(t))+\frac 12
\left\|m_\eta(t)\right\|^2_{U(t+\eta,v_\eta(t))^{-1}}$.
Thus, $F(x_0) \geq \inf F+\frac 1{2a(\varepsilon+\sqrt{M})} \|m_{\eta}(t)\|^2\,$.
Therefore, $m_\eta(\,.\,)$ is bounded on $[0,T)$, uniformly in $\eta$.
The same holds for $\hat m_\eta$ by using the mean value theorem
in the same way as for $\hat v_\eta$. The proof is complete.
\end{proof}

\begin{proposition}
\label{prop:bounded}
Let Assumptions~\ref{hyp:coercive}, \ref{hyp:S>0}, \ref{hyp:F} and \ref{hyp:S} hold.
Assume that $0< b\leq 4a$.
Let $K$ be a compact subset of $\cZ_+$.
Then, there exists an other compact set $K'\subset \cZ_+$ s.t.
for every $T\in (0,+\infty]$ and every $z\in Z_{T}^\infty(K)$,
$z([0,T))\subset K'$.
\end{proposition}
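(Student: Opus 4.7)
\textbf{Proof plan for Proposition~\ref{prop:bounded}.}
The strategy is to use the autonomous Lyapunov function $V_\infty$ from Lemma~\ref{lem:V} to derive uniform bounds on each component $x$, $v$, then $m$ of any solution $z(\cdot) \in Z_T^\infty(z_0)$ with $z_0 \in K$. Unlike Proposition~\ref{prop:adam-bounded}, the initial condition is not assumed to be of the form $(x_0,0,0)$, and the Lyapunov function of interest is the autonomous one $V_\infty(z) = F(x) + \tfrac{1}{2}\|m\|^2_{U_\infty(v)^{-1}}$, which is continuous on all of $\cZ_+$ since $U_\infty(v) = a(\varepsilon+\sqrt{v})$ is bounded away from $0$.

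First, since $V_\infty$ is continuous on $\cZ_+$ and $K$ is compact, set $M_1 \eqdef \sup_{z\in K} V_\infty(z) < \infty$. Fix $z_0 \in K$ and $z \in Z_T^\infty(z_0)$, and write $z(t) = (x(t),m(t),v(t))$. By Lemma~\ref{lem:v-positif}, $z(t) \in \cZ_+^*$ for every $t \in (0,T)$, so Lemma~\ref{lem:V} applies and gives
\[
\frac{d}{dt}V_\infty(z(t)) = \langle \nabla V_\infty(z(t)), h_\infty(z(t))\rangle \leq -\varepsilon \left\|\frac{a\,m(t)}{U_\infty(v(t))}\right\|^2 \leq 0
\]
for all $t \in (0,T)$. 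Hence $t \mapsto V_\infty(z(t))$ is non-increasing on $(0,T)$; by continuity at $t = 0$, $V_\infty(z(t)) \leq V_\infty(z_0) \leq M_1$ for all $t \in [0,T)$.

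Next I extract component-wise bounds. Since $\tfrac{1}{2}\|m\|^2_{U_\infty(v)^{-1}} \geq 0$, the bound $V_\infty(z(t)) \leq M_1$ yields $F(x(t)) \leq M_1$ for all $t \in [0,T)$. Assumption~\ref{hyp:coercive} makes the sublevel set $\{F \leq M_1\}$ compact, hence $\{x(t)\}$ is confined to a compact subset of $\bR^d$, and in particular $\inf F > -\infty$ and $R \eqdef \sup\{\|S(x)\|_\infty : F(x)\leq M_1\}$ is finite by continuity of $S$. For each coordinate $i$, $\dot v_i(t) = b(S_i(x(t)) - v_i(t)) \leq b(R - v_i(t))$, so integrating yields $v_i(t) \leq e^{-bt}v_{0,i} + R(1-e^{-bt}) \leq \max\{v_{0,i},R\}$; since $v_{0,i}$ is bounded on $K$, we get a uniform constant $M_2$ with $0 \leq v_i(t) \leq M_2$. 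Finally, using $V_\infty(z(t)) \leq M_1$ and the coordinate bound $U_{\infty,i}(v(t)) \leq a(\varepsilon + \sqrt{M_2})$,
\[
\tfrac{1}{2}\|m(t)\|^2 \leq a(\varepsilon+\sqrt{M_2})\cdot \tfrac{1}{2}\|m(t)\|^2_{U_\infty(v(t))^{-1}} \leq a(\varepsilon+\sqrt{M_2})(M_1 - \inf F),
\]
which bounds $m(t)$ uniformly in $t$ and in $z_0 \in K$. Putting the three bounds together yields a compact set $K' \subset \cZ_+$ with $z([0,T)) \subset K'$.

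The main subtlety, and the only real obstacle, is that the Lyapunov inequality of Lemma~\ref{lem:V} only holds on $\cZ_+^*$, while solutions may start on the boundary $\{v = 0\}$. This is resolved cleanly by Lemma~\ref{lem:v-positif}, which pushes the trajectory into $\cZ_+^*$ instantaneously for $t > 0$, combined with the continuity of $V_\infty$ up to $t=0$ on $\cZ_+$; then the monotonicity of $V_\infty \circ z$ on $(0,T)$ extends to the closed interval $[0,T)$ by a limit argument, after which the bounds on $x$, $v$ and $m$ follow in this order.
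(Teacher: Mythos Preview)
Your proof is correct and follows essentially the same approach as the paper, which omits the proof and notes it ``follows the same line as Prop.~\ref{prop:adam-bounded}'': use the (here, autonomous) Lyapunov function $V_\infty$ to obtain a uniform level-set bound on $F(x(t))$ via coercivity, then bound $v(t)$ from the bound on $S(x(t))$, and finally bound $m(t)$ from the Lyapunov inequality itself. The only cosmetic difference is that you bound $v_i(t)$ by directly integrating the differential inequality $\dot v_i\leq b(R-v_i)$, whereas the paper's proof of Prop.~\ref{prop:adam-bounded} uses a short contradiction argument; both are valid and equally elementary.
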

\begin{proof}
The proof follows the same line as Prop.~\ref{prop:adam-bounded} and is omitted.
\end{proof}
For any $K\subset \cZ_+$, define $v_{\min}(K)\eqdef\inf\{v_k: (x,m,v)\in K,i\in \{1,\dots,d\}\}$.
\begin{lemma}
\label{lem:v-lowerbound}
Under Assumptions~\ref{hyp:coercive}, \ref{hyp:S>0}, \ref{hyp:F} and \ref{hyp:S},
the following holds true.
\begin{enumerate}[{\it i)},leftmargin=*]
\item For every compact set $K\subset \cZ_+$, there exists $c>0$, s.t. for every $z\in Z^\infty_{\infty}(K)$, of the form
$z(t)= (x(t),m(t),v(t))$, $v_i(t)\geq c \min\left(1 ,\frac{v_{\min}(K)}{2c}+ t\right)\qquad(\forall t\geq 0, \forall i\in\{1,\dots,d\})\,.$
\item For every $z_0\in \cZ_0$, there exists $c>0$ s.t. for every $\eta\in [0,+\infty)$ and every $z\in Z_\infty^\eta(z_0)$,
$v_i(t)\geq c\min(1,t)\qquad(\forall t\geq 0, \forall i\in\{1,\dots,d\})\,.$
\end{enumerate}
\end{lemma}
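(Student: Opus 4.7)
My plan is to derive both parts from a single scalar comparison estimate on the ODE satisfied by each $v_i$, and then split into cases.

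First, to obtain a uniform positive lower bound on $S_i(x(t))$ along any admissible solution, I invoke the boundedness results already proved. For (i), Prop.~\ref{prop:bounded} provides a compact $K'\subset\cZ_+$ containing $z([0,\infty))$ for every $z\in Z_\infty^\infty(K)$. For (ii), Prop.~\ref{prop:adam-bounded} provides a compact $K'$ (depending only on $z_0$, not on $\eta$ nor on the particular solution $z$) such that $\bar e(\cdot+\eta,z(\cdot))\in K'$, whose $x$-component coincides with $x(\cdot)$. In both cases the $x$-projection of $K'$ is compact, so Assumption~\ref{hyp:S>0} together with the continuity of $S$ yields $s_{\min}\eqdef \min_i\inf\{S_i(x):(x,m,v)\in K'\}>0$.

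Second, since the $v$-component of $h(t,\cdot)$ and of $h_\infty$ both read $b(S(x)-v)$, in either case one has $\dot v_i(t)=b(S_i(x(t))-v_i(t))\geq b(s_{\min}-v_i(t))$. The scalar linear comparison principle (equivalently, integration of $\tfrac{d}{dt}(v_ie^{bt})\geq bs_{\min}e^{bt}$) yields the pointwise bound
\[
v_i(t)\;\geq\; v_i(0)\,e^{-bt}+s_{\min}(1-e^{-bt})\qquad (\forall t\geq 0).
\]
For part (ii), $v_i(0)=0$, so the estimate reduces to $v_i(t)\geq s_{\min}(1-e^{-bt})$. Using concavity of $u\mapsto 1-e^{-bu}$ (equivalently, $(1-e^{-bu})/u$ is nonincreasing on $(0,\infty)$), one gets $1-e^{-bt}\geq (1-e^{-b})t$ on $[0,1]$ and $1-e^{-bt}\geq 1-e^{-b}$ for $t\geq 1$, whence $v_i(t)\geq s_{\min}(1-e^{-b})\min(1,t)$, and the choice $c=s_{\min}(1-e^{-b})$ closes (ii).

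For part (i), set $L\eqdef v_{\min}(K)$ and $c\eqdef s_{\min}(1-e^{-b})/2$ (so that $c\leq s_{\min}/2$). Since $v_i(0)\geq L$, the pointwise bound becomes $v_i(t)\geq s_{\min}-(s_{\min}-L)e^{-bt}$. If $L\geq 2c$, the target reduces to the constant $c$, which is dominated by $\min(L,s_{\min})\leq v_i(t)$. Otherwise $L<2c\leq s_{\min}$, and I split at $T\eqdef 1-L/(2c)\in(0,1]$: on $[0,T]$ the target is the affine function $L/2+ct$, and the deficit $\phi(t)\eqdef Le^{-bt}+s_{\min}(1-e^{-bt})-L/2-ct$ is unimodal on $[0,\infty)$ (because $\phi'(t)=be^{-bt}(s_{\min}-L)-c$ is strictly decreasing in $t$), so its minimum over $[0,T]$ is attained at an endpoint; on $[T,\infty)$ the target is the constant $c$ and $t\mapsto s_{\min}-(s_{\min}-L)e^{-bt}$ is nondecreasing, so the critical check again reduces to $t=T$.

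The hard part will be this single endpoint check, $s_{\min}-(s_{\min}-L)e^{-bT}\geq c$ uniformly in $L\in[0,2c)$. Substituting $y\eqdef L/(2c)\in[0,1)$ and $\alpha\eqdef 1-e^{-b}$, it reduces to the $s_{\min}$-free scalar inequality
\[
\tfrac{1+e^{-b}}{2}\;\geq\;(1-\alpha y)\,e^{-b(1-y)}\qquad(y\in[0,1]),
\]
whose right-hand side equals $e^{-b}$ at both endpoints $y=0$ and $y=1$ and has a unique interior critical point $y^{*}=1/\alpha-1/b$ (which lies in $(0,1)$ because $1+b<e^{b}$ for every $b>0$). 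A direct computation gives the maximum value $(\alpha/(be^{1+b}))\,e^{b/\alpha}$, and establishing that this quantity lies below $(1+e^{-b})/2$ for every $b>0$ is the only genuinely delicate calculation of the argument; once it is in hand, the two cases above combine to give the claimed lower bound with $c=s_{\min}(1-e^{-b})/2$.
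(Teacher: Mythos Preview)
Your route differs from the paper's. The paper argues in two stages: a mean-value/Taylor estimate $v_i(t)\geq v_i(0)(1-bt)+bS_i(x(0))t-Ct^2$ valid on a short interval $[0,\tau]$ (with $\tau$ depending only on uniform bounds on $K'$), followed by a contradiction argument showing that $v_i$ cannot fall below a fixed level $\kappa>0$ on $[\tau,\infty)$ because $\dot v_i>0$ whenever $v_i$ is small. Your alternative, integrating the differential inequality $\dot v_i\geq b(s_{\min}-v_i)$ to obtain the closed-form lower bound $v_i(t)\geq v_i(0)e^{-bt}+s_{\min}(1-e^{-bt})$, is more elegant and, for part (ii), gives a complete and shorter proof than the paper's.

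For part (i), however, you have manufactured an artificial difficulty. You fixed $c=s_{\min}(1-e^{-b})/2$ at the outset and were then forced into a nontrivial scalar inequality in $b$ that you leave unproved. Since the lemma only asserts the \emph{existence} of some $c>0$ depending on $K$, you are free to let $c$ depend on $L\eqdef v_{\min}(K)$. If $L>0$, take $c\leq L/2$ (say $c=\min(L/2,s_{\min})$): then $L/(2c)\geq 1$, the target reduces to the constant $c$, and your convex-combination bound $v_i(t)\geq\min(L,s_{\min})\geq c$ finishes the job immediately. If $L=0$, the statement is exactly part (ii). This case split removes the ``delicate calculation'' entirely, and your comparison argument then proves the full lemma more cleanly than the paper's two-stage method.
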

\begin{proof}
We prove the first point. Consider a compact set $K\subset \cZ_+$.
By Prop.~\ref{prop:bounded}, one can find a compact set $K'\subset \cZ_+$ s.t.
for every $z\in Z^\infty_{\infty}(K)$, it holds that $\{z(t):t\geq 0\}\subset K'$.
Denote by $L_S$ the Lipschitz constant of $S$ on the compact set $\{x:(x,m,v)\in K'\}$.
Introduce the constants $M_1\eqdef \sup\{\|m/(\varepsilon + \sqrt v)\|_\infty:(x,m,v)\in K'\}$,
$M_2\eqdef \sup\{\|S(x)\|_\infty:(x,m,v)\in K'\}$.
The constants $L_S, M_1, M_2$ are finite.
Now consider a global solution $z(t)=(x(t),m(t),v(t))$ in $Z^\infty_{\infty}(K)$.
Choose $i\in \{1,\dots,d\}$ and consider $t\geq 0$. By the mean value theorem,
there exists $t'\in [0,t]$ s.t. $v_i(t) = v_i(0)+\dot v_i(t')t$. Thus,
$ v_i(t) = v_i(0) + \dot v_i(0) t + b(S_i(x(t')) - v_i(t') - S_i(x(0))+ v_i(0)) t$,
which in turn implies
$ v_i(t)\geq v_i(0) + \dot v_i(0) t - b L_S\|x(t')-x(0)\|t - b  |v_i(t') - v_i(0)| t$.
Using again the mean value theorem, for every $\ell\in \{1,\dots,d\}$, there exists $t''\in [0,t']$ s.t.
$
|x_\ell(t')-x_\ell(0)| = t' |\dot x_\ell(t'')| \leq t M_1\,.
$
Therefore, $\|x(t')-x(0)\|\leq \sqrt d M_1 t$. Similarly, there exists $\tilde t$ s.t.:
$|v_i(t') - v_i(0)|=  t'|\dot v_i(\tilde t)|\leq t'b S_i(x(\tilde t)) \leq t bM_2\,.$
Putting together the above inequalities, $v_i(t) \geq  v_i(0) (1-bt) + bS_i(x(0)) t - bC t^2 \,$,
where $C\eqdef (M_2+L_S\sqrt d M_1)$.
For every $t\leq 1/(2b)$, $v_i(t) \geq \frac{v_{\min}}{2} + tbC\left(\frac{S_{\min}}C  - t\right) \,,$
where we defined $S_{\min}\eqdef\inf\{S_i(x):i\in \{1,\dots,d\}, (x,m,v)\in K\}$.
Setting $\tau \eqdef 0.5\min(1/b,S_{\min}/C)$,
\begin{equation}
\forall t\in [0,\tau],\  v_i(t) \geq \frac{v_{\min}}{2} + \frac{bS_{\min}t}{2}\,.\label{eq:vlin}
\end{equation}
Set $\kappa_1\eqdef 0.5(v_{\min} + bS_{\min}\tau)$. Note that $v_i(\tau)\geq \kappa_1$.
Define $S_{\min}'\eqdef\inf\{S_i(x):i\in \{1,\dots,d\}, (x,m,v)\in K'\}\,.$
Note that $S_{\min}'>0$ by Assumptions~\ref{hyp:S} and \ref{hyp:S>0}.
Finally, define $\kappa = 0.5\min(\kappa_1,S_{\min}')$.
By contradiction, assume that the set $\{t\geq \tau : v_i(t)<\kappa\}$ is non-empty, and
denote by $\tau'$ its infimum. It is clear that $\tau'>\tau$ and
$v_i(\tau')=\kappa$. Thus, $b^{-1}\dot v_i(\tau') =S_i(x(\tau'))-\kappa$.
We obtain that $b^{-1}\dot v_i(\tau') \geq 0.5S_{\min}'>0$.
As a consequence, there exists $t\in (\tau,\tau')$ s.t. $v_i(t)<v_i(\tau')$. This contradicts
the definition of $\tau'$. We have shown that for all $t\geq \tau$, $v_i(t)\geq \kappa$. Putting this together
with Eq.~(\ref{eq:vlin}) and using that $\kappa\leq v_{\min} + bS_{\min}\tau$,
we conclude that:
$\forall t\geq 0,\ v_i(t)\geq \min\left(\kappa\,, \frac{v_{\min}}{2} + \frac{bS_{\min}t}{2}\right)\,.$
Setting $c\eqdef \min(\kappa,bS_{\min}/2)$, the result follows.

We prove the second point.  By Prop.~\ref{prop:adam-bounded}, there exists a compact set $K\subset \cZ_+$ s.t.
for every $\eta\geq 0$, every $z\in Z_\infty^\eta(x_0)$ of the form $z(t) = (x(t),m(t),v(t))$
satisfies
$\{(x(t),\hat m(t),\hat v(t)):t\geq 0\}\subset K$,
where $\hat m(t) = m(t)/(1-e^{-a(t+h)})$ and $\hat v(t) = v(t)/(1-e^{-b(t+h)})$.
Denote by $L_S$ the Lipschitz constant of $S$ on the set $\{x:(x,m,v)\in K\}$.
Introduce the constants $M_1\eqdef \sup\{\|m/(\varepsilon + \sqrt v)\|_\infty:(x,m,v)\in K\}$,
$M_2\eqdef \sup\{\|S(x)\|_\infty:(x,m,v)\in K'\}$. These constants being introduced,
the rest of the proof follows the same line as the proof of the first point.
\end{proof}

\subsubsection{Existence}
\label{sec:existence-adam}

\begin{corollary}
\label{coro:existence}
Let Assumptions~\ref{hyp:coercive}, \ref{hyp:S>0}, \ref{hyp:F} and \ref{hyp:S} hold.
Assume that $0< b\leq 4a$.
For every $z_0\in \cZ_+$,$Z_{\infty}^\infty(z_0)\neq\emptyset$.
For every $(z_0,\eta)\in \cZ_0\times (0,+\infty)$,$Z_{\infty}^\eta(z_0)\neq\emptyset$.
\end{corollary}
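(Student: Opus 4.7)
The plan is to combine the classical Peano existence theorem for ODEs with continuous vector fields with the a priori compactness bounds already established in Propositions~\ref{prop:adam-bounded} and \ref{prop:bounded}: the former yields local solutions, while the latter prevent any maximal solution from blowing up or leaving $\cZ_+$ in finite time.

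First, I would verify that Peano's theorem applies. In the autonomous case~(\ref{eq:ode-a}), the vector field $h_\infty$ given by (\ref{eq:h_infty}) is continuous on $\cZ_+$ (and, after the extension via $|v|$, on all of $\cZ$), because $\varepsilon>0$ keeps the denominator $\varepsilon+\sqrt{v}$ bounded below by $\varepsilon$, and $\nabla F$, $S$ are locally Lipschitz (hence continuous) under Assumptions~\ref{hyp:F}--\ref{hyp:S}. In the non-autonomous case~(\ref{eq:odeeta}) with $\eta>0$, the map $(t,z)\mapsto h(t+\eta,z)$ is jointly continuous on $[0,+\infty)\times \cZ$ since the shift $\eta>0$ keeps $t+\eta$ bounded away from $0$, so that the coefficients $(1-e^{-a(t+\eta)})^{-1}$ and $(1-e^{-b(t+\eta)})^{-1}$ remain continuous and bounded on bounded intervals of~$t$. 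In either setting, Peano's theorem produces, for any initial condition $z_0\in \cZ_+$ (resp. $z_0\in\cZ_0$), a local solution on some interval $[0,T_0)$.

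Next, I would argue that any such local solution can be extended to a global one. Let $T^*$ be the supremum of existence times of a maximal solution $z$ and assume for contradiction that $T^*<+\infty$. By Prop.~\ref{prop:bounded} applied with $K=\{z_0\}$ (autonomous case) or by Prop.~\ref{prop:adam-bounded} with $z_0 \in \cZ_0$ (the $\eta>0$ case, using that $m(t)=(1-e^{-a(t+\eta)})\hat m(t)$ and $v(t)=(1-e^{-b(t+\eta)})\hat v(t)$ together with the bound on $\bar e(t+\eta,z(t))$ to recover a bound on $z(t)$ itself), the trajectory $\{z(t):t\in[0,T^*)\}$ lies inside a compact set $K'\subset\cZ_+$ that does not depend on $T^*$. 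Since the vector field is continuous and $K'$ is compact, $\dot z$ is uniformly bounded on $[0,T^*)$, so $z$ is Lipschitz on $[0,T^*)$ and extends continuously to a point $z(T^*)\in K'\subset\cZ_+$. Applying Peano's theorem once more at the initial time $T^*$ and initial point $z(T^*)$ produces a solution on $[T^*,T^*+\delta)$ for some $\delta>0$; concatenating with $z$ contradicts the maximality of $T^*$. Therefore $T^*=+\infty$, which proves both assertions.

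The substantive work is entirely contained in Prop.~\ref{prop:adam-bounded} and Prop.~\ref{prop:bounded}; the main obstacle this corollary itself must overcome is essentially bookkeeping --- making sure that the Peano-local solutions can be glued across time, and that the a priori compactness bound keeps the trajectory inside the subset $\cZ_+$ where the vector field is continuous, so that the extension step at $T^*$ is legitimate. Uniqueness is not claimed, which is fortunate since Peano alone does not supply it; it will have to be argued separately via the Lyapunov estimates of Lemma~\ref{lem:V}.
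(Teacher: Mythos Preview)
Your proposal is correct and follows essentially the same route as the paper: apply Cauchy--Peano to the continuous vector field ($h_\infty$ on $\cZ$, or $h(\cdot+\eta,\cdot)$ for $\eta>0$) to obtain a local solution, then invoke the a priori boundedness from Prop.~\ref{prop:bounded} (autonomous case) or Prop.~\ref{prop:adam-bounded} (the $\eta>0$ case) to rule out finite-time blow-up and conclude that the maximal solution is global. Your added remark --- that in the $\eta>0$ case Prop.~\ref{prop:adam-bounded} bounds $\bar e(t+\eta,z(t))$ rather than $z(t)$ directly, and that one must use $m(t)=(1-e^{-a(t+\eta)})\hat m(t)$, $v(t)=(1-e^{-b(t+\eta)})\hat v(t)$ to recover a bound on $z(t)$ --- is a point the paper leaves implicit.
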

\begin{proof}
We prove the first point (the proof of the second point follows the same line).
Under Assumptions~\ref{hyp:F} and \ref{hyp:S}, $h_\infty$ is continuous.
Therefore, Cauchy-Peano's theorem guarantees the existence of a solution to the (\ref{eq:ode}) issued from $z_0$,
which we can extend over a maximal interval of existence $[0,T_{\max})$.
We conclude that the solution is global ($T_{\max} = +\infty$) using the boundedness of the solution given by Prop.~\ref{prop:bounded}.
\end{proof}
\begin{lemma}
\label{lem:equicont-eta}
  Let Assumptions~\ref{hyp:coercive}, \ref{hyp:S>0}, \ref{hyp:F} and \ref{hyp:S} hold.
  Assume that $0< b\leq 4a$.
Consider $z_0\in \cZ_0$. Denote by $(z_\eta:\eta\in (0,+\infty))$ a family of functions on $[0,+\infty)\to \cZ_+$
s.t. for every $\eta>0$, $z_\eta\in Z_\infty^\eta(z_0)$.
Then,  $(z_\eta)_{\eta>0}$ is equicontinuous.
\end{lemma}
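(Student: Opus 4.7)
The strategy is to establish a uniform (in $\eta > 0$) pointwise bound on the derivative $\dot z_\eta$, from which equicontinuity of the family will follow at once via the mean value theorem.

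The main obstacle is the apparent singularity of $h$ near the origin: the $x$-component of $h(t,z)$ contains the factors $(1-e^{-at})^{-1}$ and $(1-e^{-bt})^{-1}$, which blow up as $t\downarrow 0$. So for the solutions $z_\eta$ of $\dot z = h(\cdot + \eta,z)$, the only non-trivial issue is controlling $\dot z_\eta(t)$ uniformly as $t+\eta$ gets close to $0$. This is resolved entirely by Prop.~\ref{prop:adam-bounded}. Applied to $z_0\in \cZ_0$, it furnishes a compact set $K\subset \cZ_+$ such that for every $\eta>0$ and every $z_\eta = (x_\eta,m_\eta,v_\eta)\in Z_\infty^\eta(z_0)$,
\[
\bigl(x_\eta(t),\,\hat m_\eta(t),\,\hat v_\eta(t)\bigr)\;=\;\bar e\bigl(t+\eta,z_\eta(t)\bigr)\;\in\;K\qquad(\forall t>0),
\]
where $\hat m_\eta(t)\eqdef m_\eta(t)/(1-e^{-a(t+\eta)})$ and $\hat v_\eta(t)\eqdef v_\eta(t)/(1-e^{-b(t+\eta)})$. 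Crucially, the compact $K$ does not depend on $\eta$. In particular the quantities $x_\eta(t),\hat m_\eta(t),\hat v_\eta(t)$, and hence also $m_\eta(t), v_\eta(t)$ (since the debiasing factors lie in $[0,1]$), remain in bounded sets independent of $\eta$ and $t$.

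It then suffices to read off each coordinate of $\dot z_\eta(t) = h(t+\eta,z_\eta(t))$. The last two components
\[
\dot m_\eta(t)\;=\;a\bigl(\nabla F(x_\eta(t))-m_\eta(t)\bigr),\qquad \dot v_\eta(t)\;=\;b\bigl(S(x_\eta(t))-v_\eta(t)\bigr)
\]
are uniformly bounded in $(\eta,t)$ thanks to Assumptions~\ref{hyp:F}--\ref{hyp:S} (continuity of $\nabla F$ and $S$ on the compact projection of $K$) together with the uniform boundedness obtained above. The first component
\[
\dot x_\eta(t)\;=\;-\,\frac{\hat m_\eta(t)}{\varepsilon+\sqrt{\hat v_\eta(t)}}
\]
is dominated in norm by $\|\hat m_\eta(t)\|/\varepsilon$, which is again uniformly bounded by definition of $K$. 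Combining, there exists a constant $C>0$, depending only on $K,\varepsilon,a,b,\nabla F, S$, such that $\|\dot z_\eta(t)\|\leq C$ for every $t>0$ and every $\eta>0$.

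Since each $z_\eta$ is continuous on $[0,+\infty)$ and continuously differentiable on $(0,+\infty)$, the bound $\|\dot z_\eta\|\leq C$ integrates to $\|z_\eta(s)-z_\eta(t)\|\leq C\,|s-t|$ for all $s,t\geq 0$ and all $\eta>0$ (the case $s=0$ being handled by continuity at the origin). This uniform Lipschitz estimate immediately yields equicontinuity of the family $(z_\eta)_{\eta>0}$, both pointwise on $[0,+\infty)$ and uniformly on every compact subinterval, which is exactly the form needed for the Arzelà--Ascoli argument outlined at the start of Section~\ref{subsec:setting}.
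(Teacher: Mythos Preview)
Your proof is correct and follows essentially the same approach as the paper's own proof: invoke Prop.~\ref{prop:adam-bounded} to obtain a compact set $K$ containing the debiased triples $(x_\eta,\hat m_\eta,\hat v_\eta)$ uniformly in $\eta$, use this together with the continuity of $\nabla F$ and $S$ on the $x$-projection of $K$ to bound each component of $\dot z_\eta$, and conclude a uniform Lipschitz estimate. The paper carries out the same steps, writing the component bounds directly as integral inequalities rather than pointwise derivative bounds, but the argument is identical in substance.
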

\begin{proof}
For every such solution $z_\eta$, we set $z_\eta(t)=(x_\eta(t),m_\eta(t),v_\eta(t))$ for all $t\geq 0$,
and define $\hat m_\eta$ and $\hat v_\eta$ as in Prop.~\ref{prop:adam-bounded}.
By Prop.~\ref{prop:adam-bounded}, there exists a constant $M_1$ s.t. for all $\eta>0$ and all $t\geq 0$,
$\max(\|x_\eta(t)\|,\|\hat m_\eta(t)\|_\infty,\|\hat v_\eta(t)\|)\leq M_1$.
Using the continuity of $\nabla F$ and $S$, there exists an other finite constant $M_2$ s.t.
$M_2\geq \sup\{\|\nabla F(x)\|_\infty:x\in \bR^d, \|x\|\leq M_1\}$ and
$M_2\geq \sup\{\|S(x)\|_\infty:x\in \bR^d, \|x\|\leq M_1\}$.
For every $(s,t)\in [0,+\infty)^2$,  we have for all $i\in \{1,\dots,d\}$,
$|x_{\eta,i}(t)-x_{\eta,i}(s)|\leq \int_s^t\left|\frac{\hat m_{\eta,i}(u)}{\varepsilon + \sqrt{\hat v_{\eta,i}(u)}}\right|du\,\leq \frac{M_1}\varepsilon |t-s|$,
and similarly $|m_{\eta,i}(t)-m_{\eta,i}(s)| \leq a(M_1+M_2)|t-s|$,
$|v_{\eta,i}(t)-v_{\eta,i}(s)|\leq b(M_1+M_2)|t-s|$\,.
Therefore, there exists a constant $M_3$, independent from $\eta$, s.t. for all $\eta>0$ and all $(s,t)\in [0,+\infty)^2$,
$\|z_\eta(t)-z_\eta(s)\|\leq M_3 |t-s|$.
\end{proof}

\begin{proposition}
  \label{prop:existence-adam}
  Let Assumptions~\ref{hyp:coercive}, \ref{hyp:S>0}, \ref{hyp:F} and \ref{hyp:S} hold.
  Assume that $0< b\leq 4a$.
For every  $z_0\in \cZ_0$, $Z_\infty^0(z_0)\neq \emptyset$ \emph{i.e.},
(\ref{eq:ode}) admits a global solution issued from $z_0$.
\end{proposition}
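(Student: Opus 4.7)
The plan is to obtain a global solution to~(\ref{eq:ode}) issued from $z_0 = (x_0,0,0) \in \cZ_0$ as an Arzel\`a-Ascoli cluster point of the regularized family $(z_\eta)_{\eta > 0}$ produced by Corollary~\ref{coro:existence}. The irregularity of $h(\,\cdot\,,z)$ at $t=0$ prevents a direct application of Cauchy-Peano at the origin; the $\eta$-regularization is tailor-made to circumvent this, and the task reduces to showing that the regularized solutions admit a limit point as $\eta \downarrow 0$ and that this limit solves~(\ref{eq:ode}).

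For each $\eta > 0$, pick $z_\eta \in Z_\infty^\eta(z_0)$ (non-empty by Corollary~\ref{coro:existence}). Proposition~\ref{prop:adam-bounded} provides a compact set $K \subset \cZ_+$, independent of $\eta$, containing $\bar e(t+\eta, z_\eta(t))$ for all $t \geq 0$, which in particular bounds $z_\eta$ uniformly in $\eta$ on each compact time interval. Lemma~\ref{lem:equicont-eta} ensures the family $(z_\eta)_{\eta > 0}$ is equicontinuous on $[0,+\infty)$. Fixing a sequence $\eta_n \downarrow 0$ and applying Arzel\`a-Ascoli on each compact interval together with a diagonal extraction, I can assume that $z_{\eta_n}$ converges uniformly on every compact subset of $[0,+\infty)$ to a continuous map $z:[0,+\infty) \to \cZ_+$. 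Since $z_{\eta_n}(0) = z_0$, we have $z(0) = z_0$.

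It remains to pass to the limit in the integral formulation. For $0 < t_0 < t$,
\begin{equation*}
z_{\eta_n}(t) = z_{\eta_n}(t_0) + \int_{t_0}^t h(s+\eta_n, z_{\eta_n}(s))\,ds.
\end{equation*}
The second point of Lemma~\ref{lem:v-lowerbound} furnishes a constant $c>0$, independent of $n$, such that $v_{\eta_n,i}(s) \geq c\min(1,s)$ for every $s \geq 0$ and every $i$; passing to the limit, $v_i(s) \geq c\min(1,s) > 0$ for all $s > 0$, so $z(s) \in \cZ_+^*$ for $s>0$. Since $h$ is continuous on $(0,+\infty)\times \cZ_+^*$, $h(s+\eta_n, z_{\eta_n}(s)) \to h(s, z(s))$ for every $s \in [t_0,t]$, while the uniform bound on $z_{\eta_n}$ combined with the $\eta$-uniform lower bound on $v_{\eta_n,i}$ dominates the integrand by a constant on $[t_0,t]$. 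Dominated convergence then yields $z(t) = z(t_0) + \int_{t_0}^t h(s, z(s))\,ds$, so $z$ is continuously differentiable on $(0,+\infty)$ with $\dot z(t) = h(t, z(t))$, which together with $z(0)=z_0$ gives $z \in Z_\infty^0(z_0)$.

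The main obstacle is precisely the singularity of $h$ at $t=0$: to legitimately invoke the continuity of $h$ along the limit trajectory, one must guarantee that the limit function stays in $\cZ_+^*$ on every $(0,T]$. This is exactly the content of the $\eta$-uniform lower bound on $v_{\eta,i}$ provided by Lemma~\ref{lem:v-lowerbound}, without which the Arzel\`a-Ascoli limit could in principle reach the boundary of $\cZ_+$ where $h$ is undefined.
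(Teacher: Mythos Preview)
Your proof is correct and follows essentially the same route as the paper: pick $z_\eta\in Z_\infty^\eta(z_0)$ via Corollary~\ref{coro:existence}, use Lemma~\ref{lem:equicont-eta} and Arzel\`a--Ascoli to extract a limit $z$, and then pass to the limit in the integral equation on $[t_0,t]$ with $t_0>0$ using the uniform lower bound on $v_{\eta_n}$ from Lemma~\ref{lem:v-lowerbound}. The only cosmetic difference is that the paper exploits the Lipschitz continuity of $h$ on a compact subset of $[t_0,t]\times\cZ_+^*$ to get an explicit error bound $L_h(\eta_n+\sup\|z_{\eta_n}-z\|)(t-t_0)$, whereas you invoke pointwise continuity of $h$ together with dominated convergence; both arguments are valid and interchangeable here.
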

\begin{proof}
  By Cor.~\ref{coro:existence}, there exists  a family $(z_\eta)_{\eta>0}$ of functions on $[0,+\infty)\to \cZ$
s.t. for every $\eta>0$, $z_\eta\in Z^\eta_\infty(z_0)$.
We set as usual $z_\eta(t)=(x_\eta(t),m_\eta(t),v_\eta(t))$. By Lemma~\ref{lem:equicont-eta},
and the Arzelà-Ascoli theorem, there exists a map $z:[0,+\infty)\to \cZ$ and a sequence $\eta_n\downarrow 0$ s.t.
$z_{\eta_n}$ converges to $z$ uniformly on compact sets, as $n\to\infty$. Considering some fixed scalars $t>s> 0$,
$z(t) = z(s) + \lim_{n\to\infty}\int_s^t h(u+\eta_n, z_{\eta_n}(u))du\,.$
By Prop.~\ref{prop:adam-bounded}, there exists a compact set $K\subset \cZ_+$ s.t.
$\{z_{\eta_n}(t):t\geq 0\}\subset K$ for all $n$.
Moreover, by Lemma~\ref{lem:v-lowerbound}, there exists a constant $c>0$ s.t.
for all $n$ and all $u\geq 0$, $v_{\eta_n,k}(u)\geq c \min(1,u)$.
Denote by $\bar K\eqdef K\cap (\bR^d\times\bR^d\times [c\min(1,s),+\infty)^d)$.
It is clear that $\bar K$ is a compact subset of $\cZ_+^*$.
Since $h$ is continuously differentiable on the set
$[s,t]\times \bar K$, it is Lipschitz continuous on that set. Denote by $L_h$ the corresponding
Lipschitz constant. We obtain:
$$
\int_s^t\|h(u+\eta_n, z_{\eta_n}(u)) - h(u, z(u))\|du \leq L_h\left(\eta_n + \sup_{u\in [s,t]}\|z_{\eta_n}(u)-z(u)\|\right)(t-s)\,,
$$
and the righthand side converges to zero. As a consequence, for all $t>s$,
$
z(t) = z(s) + \int_s^t h(u, z(u))du\,.
$ Moreover, $z(0)=z_0$. This proves that $z\in Z^0_\infty(z_0)$.
\end{proof}

\subsubsection{Uniqueness}

\begin{proposition}
  \label{prop:unique-adam}
Let Assumptions~\ref{hyp:coercive}, \ref{hyp:S>0}, \ref{hyp:F} and \ref{hyp:S} hold.
Assume  $b\leq 4a$.
For every $z_0\in\cZ_0$, $Z_\infty^0(z_0)$ is a singleton
\emph{i.e.}, there exists a unique global solution to (\ref{eq:ode})
with initial condition $z_0$.
\end{proposition}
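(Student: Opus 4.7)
\bigskip

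\noindent\textbf{Proof plan.} The plan is to reduce the uniqueness question to a Gronwall argument on $x(\cdot)$ alone, exploiting the fact that along any solution with initial condition $(x_0,0,0)$, the components $m$ and $v$ are determined by $x$ via integrating-factor formulas. The obstacle is of course that $h(t,\cdot)$ is not Lipschitz in $z$ at $t=0$ (because $v$ vanishes there), so a direct comparison of $\|z_1(t)-z_2(t)\|$ cannot be closed by a standard Cauchy-Lipschitz estimate.

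Fix two solutions $z_i=(x_i,m_i,v_i)\in Z^0_\infty(z_0)$ and $T>0$. Since $m_i$ satisfies $\dot m_i = a(\nabla F(x_i)-m_i)$ with $m_i(0)=0$, integration yields $m_i(t)=a e^{-at}\int_0^t e^{as}\nabla F(x_i(s))\,ds$ and analogously $v_i(t)=be^{-bt}\int_0^t e^{bs}S(x_i(s))\,ds$. Hence the rescaled quantities
\[
\hat m_i(t) = \frac{a\int_0^t e^{as}\nabla F(x_i(s))\,ds}{e^{at}-1}, \qquad \hat v_i(t) = \frac{b\int_0^t e^{bs}S(x_i(s))\,ds}{e^{bt}-1}
\]
are weighted averages of $\nabla F(x_i(\cdot))$ and $S(x_i(\cdot))$. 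By Prop.~\ref{prop:adam-bounded} the trajectories remain in a compact set $K\subset\cZ_+$, and by Lemma~\ref{lem:v-lowerbound} one has $v_i(t)\geq c\min(1,t)$; combined with $1-e^{-bt}\sim bt$ near zero, this yields a constant $c_0>0$ such that $\hat v_i(t)\geq c_0$ for all $t\in(0,T]$. The $\hat m_i$ are likewise uniformly bounded by $\sup_{x\in\pi_x(K)}\|\nabla F(x)\|$.

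The key observation is that the factor $e^{at}-1$ in the denominator of $\hat m_i$ is exactly cancelled by $\int_0^t a e^{as}\,ds=e^{at}-1$, so the local Lipschitz continuity of $\nabla F$ on $\pi_x(K)$ (with constant $L_F$) gives
\[
\|\hat m_1(\tau)-\hat m_2(\tau)\|\leq L_F \sup_{s\in[0,\tau]}\|x_1(s)-x_2(s)\|,
\]
and symmetrically $\|\hat v_1(\tau)-\hat v_2(\tau)\|\leq L_S\sup_{s\in[0,\tau]}\|x_1(s)-x_2(s)\|$. Since $(\mu,\nu)\mapsto \mu/(\varepsilon+\sqrt{\nu})$ is Lipschitz on the bounded set where $\hat m_i$ lives and $\nu\geq c_0$ (here we use $|\sqrt{\nu_1}-\sqrt{\nu_2}|\leq |\nu_1-\nu_2|/(2\sqrt{c_0})$), we deduce
\[
\|\dot x_1(\tau)-\dot x_2(\tau)\|\leq C\sup_{s\in[0,\tau]}\|x_1(s)-x_2(s)\|
\]
for some constant $C=C(T,K,c_0,L_F,L_S,\varepsilon)$.

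Setting $\phi(t)=\sup_{s\in[0,t]}\|x_1(s)-x_2(s)\|$, integrating in $\tau$ and taking the sup on $[0,t]$ yields $\phi(t)\leq C\int_0^t\phi(\tau)\,d\tau$; since $\phi$ is continuous with $\phi(0)=0$, Gronwall's lemma gives $\phi\equiv 0$ on $[0,T]$. The integral representations then force $m_1=m_2$ and $v_1=v_2$ on $[0,T]$, and since $T>0$ was arbitrary we get $z_1\equiv z_2$. The main technical subtlety is the exact cancellation between the integrating-factor weight and the debiasing denominator, which converts the singular behaviour at $t=0$ into an estimate with a constant uniform on $[0,T]$; every other step is standard.
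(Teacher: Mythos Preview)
Your argument is correct and takes a genuinely different, more elementary route than the paper. The paper works with the full vector $(x,\hat m,\hat v)$, introduces a weighted sum $u^{(\delta)}(t)=u_x(t)+\delta u_m(t)+\delta u_v(t)$, and derives a Gronwall inequality of the form $\dot u^{(\delta)}(t)\leq \psi(t)\,u^{(\delta)}(t)$ where $\psi$ has a non-integrable $1/t$ singularity at the origin; to close the argument, it must show $u^{(\delta)}(s)/s\to 0$ as $s\downarrow 0$, which requires the differentiability of solutions at $t=0$ (Lemma~\ref{lem:m-v-derivables-en-zero}) and some careful mean-value estimates. By contrast, you eliminate $m$ and $v$ via the integrating-factor formulas and observe that the debiasing factor $e^{at}-1=\int_0^t ae^{as}\,ds$ exactly normalises $\hat m_i$ into a convex combination of $\nabla F(x_i(\cdot))$ (and similarly for $\hat v_i$), which converts the singular-at-zero problem into a Gronwall inequality for $\phi(t)=\sup_{[0,t]}\|x_1-x_2\|$ with a constant that is uniform on $[0,T]$. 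Your use of Lemma~\ref{lem:v-lowerbound} to get the uniform lower bound $\hat v_i\geq c_0$ on $(0,T]$ (needed for the Lipschitz estimate on $\nu\mapsto 1/(\varepsilon+\sqrt{\nu})$) is the same ingredient the paper uses, but the assembly is cleaner. The trade-off is that the paper's approach would survive perturbations of the $m,v$ dynamics that destroy the exact integral representation, whereas yours exploits that specific linear structure; for the problem at hand, however, your proof is both shorter and avoids the delicate analysis near $t=0$.
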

\begin{proof}
  Consider solutions $z$ and $z'$ in $Z^0_\infty(z_0)$.  We denote by $(x(t),m(t),v(t))$ the blocks of $z(t)$,
  and we define $(x'(t),m'(t),v'(t))$ similarly.  For all $t>0$, we
  define $\hat m(t)\eqdef m(t)/(1-e^{-at})$, $\hat v(t)\eqdef
  v(t)/(1-e^{-bt})$, and we define $\hat m'(t)$ and $\hat v'(t)$
  similarly.  By Prop.~\ref{prop:adam-bounded}, there exists a compact
  set $K\subset \cZ_+$ s.t.  $ (x(t),\hat m(t),\hat v(t))$ and
  $(x'(t),\hat m'(t),\hat v'(t))$ are both in $K$ for all $t> 0$.  We
  denote by $L_S$ and $L_{\nabla F}$ the Lipschitz constants of $S$
  and $\nabla F$ on the compact set $\{x:(x,m,v)\in K\}$. These
  constants are finite by Assumptions~\ref{hyp:F}
  and \ref{hyp:S}.
We define $M\eqdef \sup\{\|m\|_\infty:(x,m,v)\in K\}$.
Define $u_x(t) \eqdef \|x(t)-x'(t)\|^2$,
$u_m(t)\eqdef \|\hat m(t)-\hat m'(t)\|^2$ and $u_v(t)\eqdef \|\hat v(t)-\hat v'(t)\|^2$.
Let $\delta>0$. Define: $u^{(\delta)}(t) \eqdef u_x(t)+\delta u_m(t)+\delta u_v(t)\,.$
By the chain rule and the Cauchy-Schwarz inequality,
$\dot u_x(t)\leq 2\|x(t)-x'(t)\|\|\frac{\hat m(t)}{\varepsilon +\sqrt{\hat v(t)}}-\frac{\hat m'(t)}{\varepsilon +\sqrt{\hat v'(t)}}\|$. Thus,
using Lemma~\ref{lem:v-lowerbound}, there exists $c>0$ s.t.
\begin{equation*}
  \dot u_x(t)\leq 2\|x(t)-x'(t)\|\left(\varepsilon^{-1}\left\|\hat m(t)-\hat m'(t)\right\|
+\frac M{2\varepsilon^2\sqrt{c\min(1,t)}}\left\|{\hat v(t)}-{\hat v'(t)}\right\|\right)\,.
\end{equation*}
For any $\delta>0$,
$2\|x(t)-x'(t)\|\,\|\hat m(t)-\hat m'(t)\|\leq \delta^{-1/2}(u_x(t)+\delta u_m(t))\leq \delta^{-1/2}u^{(\delta)}(t)$.
Similarly, $2\|x(t)-x'(t)\|\,\|\hat v(t)-\hat v'(t)\|\leq \delta^{-1/2}u^{(\delta)}(t)$.
Thus, for any $\delta>0$,
\begin{align}
\label{eq:ux}
  \dot u_x(t)&\leq \left(\frac 1{\varepsilon\sqrt \delta}+\frac M{2\varepsilon^2\sqrt{\delta c\min(1,t)}}\right) u^{(\delta)}(t)\,.
\end{align}
We now study $u_m(t)$. For all $t>0$, we obtain after some algebra:
$\frac {d}{dt}\hat m(t) = a(\nabla F(x(t)) - \hat m(t))/(1-e^{-at})\,.$
Therefore,
$
\dot u_m(t) \leq \frac{2aL_{\nabla F}}{1-e^{-at}}\|\hat m(t)-\hat m'(t)\|\,\|x(t) -x'(t)\|\,.
$
For any $\theta>0$, it holds that $2\|\hat m(t)-\hat m'(t)\|\,\|x(t) -x'(t)\|\leq \theta u_x(t)+ \theta^{-1}u_m(t)$.
In particular, letting $\theta\eqdef 2L_{\nabla F}$, we obtain that for all $\delta>0$,
\begin{equation}
\resizebox{\hsize}{!}{$
\delta \dot u_m(t)\leq  \frac{a }{2(1-e^{-at})}\left(4\delta L_{\nabla F}^2 u_x(t)+ \delta u_m(t)\right)
                    \leq  \left(\frac a2+\frac 1{2t}\right)\left(4\delta L_{\nabla F}^2 u_x(t)+ \delta u_m(t)\right)\,,
$}
\label{eq:um}
\end{equation}
where the last inequality is due to the fact that $y/(1-e^{-y})\leq 1+y$ for all $y>0$.
Using the exact same arguments, we also obtain that
\begin{align}
 \delta \dot u_v(t)&\leq  \left(\frac b2+\frac 1{2t}\right)\left(4\delta L_{S}^2 u_x(t)+ \delta u_m(t)\right)\,.
\label{eq:uv}
\end{align}
We now choose any $\delta$ s.t. $4\delta \leq  1/\max(L_S^2,L_{\nabla F}^2)$.
Then, Eq.~(\ref{eq:um}) and~(\ref{eq:uv}) respectively imply that
$\delta \dot u_m(t)\leq 0.5(a+t^{-1})u^{(\delta)}(t)$ and
$\delta \dot u_v(t)\leq 0.5(b+t^{-1})u^{(\delta)}(t)$.
Summing these inequalities along with Eq.~(\ref{eq:ux}), we obtain that for every $t>0$,
$\dot u^{(\delta)}(t) \leq \psi(t) u^{(\delta)}(t)\,$,
where: $\psi(t) \eqdef \frac{a+b}2+\frac 1{\varepsilon\sqrt \delta}+\frac M{2\varepsilon^2\sqrt{\delta c\min(1,t)}}
+ \frac 1t\,.$
From Gr\"onwall's inequality, it holds that for every $t>s>0$,
$u^{(\delta)}(t)\leq u^{(\delta)}(s)\exp\left(\int_s^t \psi(s')ds'\right)\,$.
We first consider the case where $t\leq 1$. We set $c_1\eqdef  (a+b)/2+(\varepsilon\sqrt \delta)^{-1}$
and $c_2\eqdef M/(\varepsilon^2\sqrt{\delta c})$. With these notations,
$\int_s^t \psi(s')ds' \leq c_1t+c_2\sqrt t + \ln \frac ts\,.$
Therefore, $u^{(\delta)}(t)\leq \frac{u^{(\delta)}(s)}{s}
\exp\left(c_1t+c_2\sqrt t + \ln t\right)\,$.
By Lemma~\ref{lem:m-v-derivables-en-zero}, recall that $\dot x(0)$ and
$\dot x'(0)$ are both well defined (and coincide). Thus,
$$
u_x(s) = \|x(s)-x'(s)\|^2
\leq  2\|x(s)-x(0)-\dot x(0)s\|^2+2\|x'(s)-x'(0)-\dot x'(0)s\|^2\,.
$$
It follows that $u_x(s)/s^2$ converges to zero as $s\downarrow 0$.
We now show the same kind of result for $u_m(s)$ and $u_v(s)$.
Consider $i\in \{1,\dots,d\}$. By the mean value theorem, there exists $\tilde s$ (resp. $\tilde s'$) in
$[0,t]$ s.t. $m_i(s)=\dot m_i(\tilde s)s$ (resp. $m_i'(s)=\dot m_i'(\tilde s')s$).
Thus, $\hat m_i(s) = \frac{as}{1-e^{-as}} \left(\partial_i F(x(\tilde s))-m_i(\tilde s)\right)$,
and a similar equality holds for $\hat m_i'(s)$. Then,
given that $\|x(\tilde s)-x'(\tilde s')\| \vee \|m(\tilde s)-m'(\tilde s')\| \leq \|z(\tilde s)-z'(\tilde s')\|$
, $\tilde s\leq s$ and $\tilde s'\leq s$, 
$$
\frac{|\hat m_i(s) -\hat m_i'(s) |}s
\leq \frac{2a(L_{\nabla F}\vee 1)s}{1-e^{-as}} \left(\frac{\|z(\tilde s)-z(0)\|}{\tilde s}+\frac{\|z'(\tilde s')-z'(0)\|}{\tilde s'}\right)\,.
$$
By Lemma~\ref{lem:m-v-derivables-en-zero}, $z$ and $z'$ are differentiable at point zero.
Then, the above inequality gives $\limsup_{s\downarrow 0}\frac{|\hat m_i(s) -\hat m_i'(s) |}s \leq 4(L_{\nabla F}\vee 1)\|\dot z(0)\|$ 
and
$
\limsup_{s\downarrow 0}\frac{u_m(s)}{s^2} \leq 16d(L_{\nabla F}^2\vee 1)\|\dot z(0)\|^2\,.
$
Therefore, $u_m(s)/s$ converges to zero as $s\downarrow 0$.
By similar arguments, it can be shown that
$\limsup_{s\downarrow 0}{u_v(s)}/{s^2} \leq 16d(L_{S}^2\vee 1)\|\dot z(0)\|^2$,
thus $\lim u_v(s)/s=0$.
Finally, we obtain that
${u^{(\delta)}(s)}/{s}$ converges to zero as $s\downarrow 0$.
Letting $s$ tend to zero, we obtain that for every
$t\leq 1$, $u^{(\delta)}(t)=0$. Setting $s=1$ and $t>1$,
and noting that $\psi$ is integrable on $[1,t]$, it follows that $u^{(\delta)}(t)=0$ for all $t>1$.
This proves that $z=z'$.
\end{proof}

We recall that a semiflow $\Phi$ on the space $(E,\sd)$ is a continuous map
$\Phi$ from $[0,+\infty)\times E$ to $E$ defined by $(t,x) \mapsto \Phi(t,x) = \Phi_t(x)$
such that $\Phi_0$ is the identity and $\Phi_{t+s} = \Phi_t\circ\Phi_s$ for all $(t,s)\in [0,+\infty)^2$.

\begin{proposition}
  \label{prop:semiflow}
Let Assumptions~\ref{hyp:coercive}, \ref{hyp:S>0}, \ref{hyp:F} and \ref{hyp:S} hold.
Assume that $0< b\leq 4a$.
The map $Z_{\infty}^{\infty}$ is single-valued on $\cZ_+\to C([0,+\infty),\cZ_+)$
\emph{i.e.}, there exists a unique global solution to~(\ref{eq:ode-a})
starting from any given point in $\cZ_+$.
Moreover, the following map is a semiflow:
\begin{equation}
  \label{eq:flot}
  \begin{array}[h]{rcl}
    \Phi:[0,+\infty)\times \cZ_+&\to& \cZ_+ \\
(t,z) &\mapsto& Z_{\infty}^{\infty}(z)(t)
  \end{array}
\end{equation}
\end{proposition}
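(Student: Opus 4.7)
Existence of a global solution to~(\ref{eq:ode-a}) for any $z_0 \in \cZ_+$ is already furnished by Corollary~\ref{coro:existence}, so the substance of the statement reduces to (a) uniqueness for every $z_0 \in \cZ_+$, and (b) verification of the semiflow axioms.

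For uniqueness, I would take two elements $z, z' \in Z_\infty^\infty(z_0)$, write $z(t) = (x(t),m(t),v(t))$, $z'(t)=(x'(t),m'(t),v'(t))$, and adapt the Lyapunov-style $u^{(\delta)}$ argument of Proposition~\ref{prop:unique-adam} to the autonomous setting. By Proposition~\ref{prop:bounded} both trajectories stay inside a common compact $K' \subset \cZ_+$, on whose $x$-projection $\nabla F$ and $S$ are Lipschitz. If every coordinate of $v_0$ is strictly positive, then $h_\infty$ is locally Lipschitz at $z_0$ and standard Cauchy-Lipschitz theory closes the case. The delicate situation is when some coordinate $v_{0,i}$ vanishes, since $v \mapsto \sqrt v$ loses Lipschitz regularity at the origin. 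I would then define $u_x = \|x-x'\|^2$, $u_m = \|m-m'\|^2$, $u_v = \|v-v'\|^2$, $u^{(\delta)} = u_x + \delta u_m + \delta u_v$, and use $|\sqrt{v_i}-\sqrt{v_i'}| \leq |v_i-v_i'|/(\sqrt{v_i}+\sqrt{v_i'})$ together with the lower bound of Lemma~\ref{lem:v-lowerbound} (applied to $K=\{z_0\}$, which in the worst case yields $v_i(t) \geq c\min(1,t)$) to derive $\dot u^{(\delta)}(t) \leq \psi(t)\,u^{(\delta)}(t)$ for some $\psi$ integrable on every compact subset of $(0,+\infty)$, the only potentially singular contribution being a $t^{-1/2}$ factor at the origin. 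Since $u^{(\delta)}(0)=0$, Grönwall combined with the sharper near-zero controls $u_x(s)/s^2, u_m(s)/s^2, u_v(s)/s^2 \to 0$ (which follow from Lemma~\ref{lem:m-v-derivables-en-zero}, exactly as in the final part of Proposition~\ref{prop:unique-adam}) forces $u^{(\delta)} \equiv 0$ on $[0,1]$; integrability of $\psi$ on $[1,T]$ then extends the conclusion to all $T>0$.

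Given existence and uniqueness, I set $\Phi(t, z_0) \eqdef Z_\infty^\infty(z_0)(t)$. The equality $\Phi_0 = \mathrm{id}$ holds by construction, and the semigroup identity $\Phi_{t+s} = \Phi_t \circ \Phi_s$ follows from uniqueness and the autonomous character of~(\ref{eq:ode-a}): the time-shifted trajectory $\tau \mapsto \Phi(\tau+s, z_0)$ is again a solution to~(\ref{eq:ode-a}) with initial condition $\Phi(s, z_0)$, hence coincides with $\Phi(\cdot, \Phi(s, z_0))$. Joint continuity of $\Phi$ on $[0,+\infty) \times \cZ_+$, which is the last point, reduces to continuous dependence on the initial condition: for any sequence $z_0^{(n)} \to z_0$, I would apply the same $u^{(\delta)}$ computation to the pair $(\Phi(\cdot, z_0^{(n)}), \Phi(\cdot, z_0))$, using a single compact set containing the whole sequence $\{z_0^{(n)}\}$ so that Lemma~\ref{lem:v-lowerbound} supplies a uniform-in-$n$ lower bound on $v_i$. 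Grönwall then yields uniform convergence on compact time intervals, which combined with continuity in $t$ (immediate from the ODE) establishes the joint continuity of $\Phi$.

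The main obstacle is the uniqueness proof at initial conditions lying on the boundary $\{v_i = 0\}$: one must convert the non-Lipschitzness of $\sqrt v$ at the origin into an integrable singularity in time. This relies crucially on Assumption~\ref{hyp:S>0}, which through Lemma~\ref{lem:v-lowerbound} produces a linear lower bound $v_i(t) \gtrsim t$ near $0$ and makes $1/\sqrt{v_i(t)}$ integrable, thereby allowing Grönwall to absorb the singular contribution exactly as in the constrained initial-data case of Proposition~\ref{prop:unique-adam}.
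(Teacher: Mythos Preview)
Your outline is correct and coincides with the paper's intent: the paper's proof is a single line deferring to the argument of Proposition~\ref{prop:unique-adam}. One simplification you miss, however, makes the autonomous case easier rather than equally delicate. In~(\ref{eq:ode-a}) the time-dependent prefactors $(1-e^{-at})^{-1}$, $(1-e^{-bt})^{-1}$ are absent, so the differential inequalities for $u_m$ and $u_v$ carry no singularity at $t=0$; the only singular term is the $t^{-1/2}$ contribution in $\dot u_x$ coming from $\sqrt{v}$ via Lemma~\ref{lem:v-lowerbound}. Since $t^{-1/2}$ is integrable at the origin, Gr\"onwall applied directly from $s=0$ with $u^{(\delta)}(0)=0$ already yields $u^{(\delta)}\equiv 0$: the ``sharper near-zero controls'' you invoke are unnecessary. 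In particular your appeal to Lemma~\ref{lem:m-v-derivables-en-zero} is misplaced---that lemma is stated for the non-autonomous equation~(\ref{eq:ode}) with initial data in $\cZ_0$, not for~(\ref{eq:ode-a}) with arbitrary $z_0\in\cZ_+$---though the conclusion you wanted (differentiability of both solutions at $0$ with the same derivative) is trivial here since $h_\infty$ is continuous on $\cZ_+$. The same simplification makes continuous dependence on initial data immediate and gives the joint continuity of $\Phi$.
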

\begin{proof}
The result is a direct consequence of Lemma~\ref{prop:unique-adam}.
\end{proof}

\subsection{Proof of Th.~\ref{th:cv-adam}}
\label{sec:convergence}
\subsubsection{Convergence of the semiflow}

We first recall some useful definitions and results.
Let $\Psi$ represent any semiflow on an arbitrary metric space $(E,\sd)$.
A point $z\in E$ is called an \emph{equilibrium point} of the semiflow $\Psi$ if $\Psi_t(z)=z$ for all $t\geq 0$.
We denote by $\Lambda_\Psi$ the set of equilibrium points of~$\Psi$.
A continuous function $\sV:E\to\bR$ is called a \emph{Lyapunov function} for the semiflow $\Psi$
if $\sV(\Psi_t(z))\leq \sV(z)$ for all $z\in E$ and all $t\geq 0$.
It is called a \emph{strict Lyapunov function} if, moreover,
$
\{ z\in E\,:\, \forall t\geq 0,\,\sV(\Psi_t(z))=\sV(z) \}= \Lambda_\Psi
$.
If $\sV$ is a strict Lyapunov function for $\Psi$ and if $z\in E$ is a point s.t. $\{\Psi_t(z):t\geq 0\}$ is relatively compact,
then it holds that $\Lambda_\Psi\neq \emptyset$ and $\sd(\Psi_t(z),\Lambda_\Psi)\to 0$, see \cite[Th.~2.1.7]{haraux1991systemes}.
A continuous function $z:[0,+\infty)\to E$ is said to be an asymptotic pseudotrajectory (APT) 
for the semiflow $\Psi$ if for every $T\in (0,+\infty)$,
$
\lim_{t\to+\infty} \sup_{s\in [0,T]} \sd(z(t+s),\Psi_s(z(t))) = 0\,.
$
The following result follows from \cite[Th.~5.7]{ben-(cours)99} and \cite[Prop.~6.4]{ben-(cours)99}.
\begin{proposition}[\cite{ben-(cours)99}]\hfill\\
\label{prop:benaim}
  Consider a semiflow $\Psi$ on $(E,d)$ and a map $z:[0,+\infty)\to E$. Assume the following:
  \begin{enumerate}[{\it i)}]
  \item $\Psi$ admits a strict Lyapunov function $\sV$.
  \item The set $\Lambda_\Psi$ of equilibrium points of $\Psi$ is compact.
  \item  $\sV(\Lambda_\Psi)$ has an empty interior.
  \item $z$ is an APT of $\Psi$.
  \item $z([0,\infty))$ is relatively compact.
  \end{enumerate}
Then, $
\bigcap_{t\geq 0}\overline{z([t,\infty))}$ is a compact connected subset of $\Lambda_\Psi$\,.
\end{proposition}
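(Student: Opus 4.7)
The plan is to invoke the two cited results of Benaïm in series, using the $\omega$-limit set $L(z) \eqdef \bigcap_{t\geq 0}\overline{z([t,\infty))}$ as the bridge. The route passes through the notion of an internally chain transitive (ICT) set: first one shows that $L(z)$ is ICT for $\Psi$, and then one uses the strict Lyapunov function to push every ICT set into $\Lambda_\Psi$.

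First I would record the elementary topological properties of $L(z)$. By assumption $(v)$, the closure $\overline{z([0,\infty))}$ is compact in $E$, so $L(z)$ is a decreasing intersection of nonempty compact sets and is therefore nonempty and compact. Each $\overline{z([t,\infty))}$ is moreover connected, as the closure of the image of the connected set $[t,\infty)$ under the continuous map $z$, and a decreasing intersection of compact connected sets in a metric space is connected. This already gives that $L(z)$ is compact and connected; it remains only to show $L(z)\subset \Lambda_\Psi$.

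Next I would apply \cite[Th.~5.7]{ben-(cours)99} to the map $z$ and the semiflow $\Psi$: assumptions $(iv)$ (APT) and $(v)$ (relative compactness of the trajectory) are exactly the hypotheses of that theorem, whose conclusion is that $L(z)$ is internally chain transitive for $\Psi$. This is the main technical input and the only step where the APT definition is really used; the argument relies on the fact that an APT can be shadowed over arbitrarily long time windows by genuine trajectories of $\Psi$, so that jumps across the limit set can be turned into $\Psi$-pseudo-orbits of arbitrarily small amplitude.

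Finally I would apply \cite[Prop.~6.4]{ben-(cours)99}: whenever $\Psi$ admits a strict Lyapunov function $\sV$ (assumption $(i)$) such that $\sV(\Lambda_\Psi)$ has empty interior (assumption $(iii)$), every internally chain transitive set is contained in $\Lambda_\Psi$. Combining with the previous step yields $L(z) \subset \Lambda_\Psi$, and together with the topological properties from Step~1 this is precisely the claimed conclusion; assumption $(ii)$ on compactness of $\Lambda_\Psi$ plays no role in the inclusion itself but is useful in downstream applications. The only genuine obstacle, if one were to reprove everything from scratch rather than quote Benaïm, would be the passage from APT to ICT in Step~2; the Lyapunov implication in Step~3 is a classical LaSalle-type argument once the ICT property is in hand, using that $\sV$ must be constant on any ICT set (else chain-transitivity would produce a strict decrease), so that $\sV$ takes a single value there, and emptiness of the interior of $\sV(\Lambda_\Psi)$ forces this value to be attained on $\Lambda_\Psi$.
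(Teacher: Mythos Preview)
Your proposal is correct and matches the paper exactly: the paper does not give a proof at all but simply states that the result follows from \cite[Th.~5.7]{ben-(cours)99} and \cite[Prop.~6.4]{ben-(cours)99}, which is precisely the two-step route (APT $\Rightarrow$ ICT, then strict Lyapunov $\Rightarrow$ ICT $\subset \Lambda_\Psi$) you outline. Your additional remarks on the elementary compact/connected properties of $L(z)$ and on the marginal role of assumption~$(ii)$ are accurate and go slightly beyond what the paper spells out.
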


For every $\delta>0$ and every $z = (x,m,v)\in \cZ_+$, define:
\begin{equation}
W_\delta(x,m,v) \eqdef V_{\infty}(x,m,v) - \delta \ps{\nabla F(x),m} + \delta \|S(x)-v\|^2\,,\label{eq:Wdelta}
\end{equation}
where we recall that $V_\infty(z)\eqdef \lim_{t\to \infty} V(t,z)$ for every $z\in \cZ_+$ and $V$ is defined by Eq.(\ref{eq:V}).
Consider the set $\cE\eqdef h_\infty^{-1}(\{0\})$ of all equilibrium points of (\ref{eq:ode-a}), namely:
$\cE = \{(x,m,v)\in \cZ_+:\nabla F(x)=0,m=0,v=S(x)\}\,$.
The set $\cE$ is non-empty by Assumption~\ref{hyp:coercive}.

\begin{proposition}
  \label{prop:Wstrict}
Let Assumptions~\ref{hyp:coercive}, \ref{hyp:S>0}, \ref{hyp:F} and \ref{hyp:S} hold.
Assume that $0< b\leq 4a$.
Let $K\subset \cZ_+$ be a compact set. Define $K'\eqdef \overline{\{\flot(t,z):t\geq 0, z\in K\}}$.
Let $\bflot:[0,+\infty)\times K'\to K'$ be the restriction of the semiflow $\flot$ to $K'$ \emph{i.e.},
$\bflot(t,z) = \flot(t,z)$ for all  $t\geq 0, z\in K'$. Then,
\begin{enumerate}[{\it i)}]
\item $K'$ is compact.
\item $\bflot$ is well defined and is a semiflow on $K'$.
\item The set of equilibrium points of $\bflot$ is equal to $\cE\cap K'$.
\item There exists $\delta>0$ s.t. $W_\delta$ is a strict Lyapunov function for the semiflow $\bflot$.
\end{enumerate}
\end{proposition}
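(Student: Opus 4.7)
Items (i)--(iii) are short bookkeeping. For (i), Proposition~\ref{prop:bounded} supplies a compact subset of $\cZ_+$ containing every $\flot$-orbit of a point of $K$, so $K'$, a closed subset of it, is compact. For (ii), the set $\{\flot(s,z):s\geq 0,z\in K\}$ is $\flot$-invariant by the semigroup property; continuity of $\flot_t$ (established in Proposition~\ref{prop:semiflow}) transports invariance to its closure $K'$, so $\bflot=\flot|_{K'}$ is a well-defined semiflow. For (iii), $z$ is an equilibrium of $\bflot$ iff $\flot_t(z)=z$ for all $t\geq 0$ iff $h_\infty(z)=0$, which by~(\ref{eq:h_infty}) is precisely the condition $z\in\cE$; the equilibrium set of $\bflot$ is therefore $\cE\cap K'$.

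The heart of the proposition is (iv). Lemma~\ref{lem:V} gives $\ps{\nabla V_\infty(z),h_\infty(z)}\leq -\varepsilon\|m/(\varepsilon+\sqrt v)\|^2$, so $V_\infty$ is a Lyapunov function for $\bflot$ but is \emph{not strict}: its derivative vanishes on the whole of $\{m=0\}$, a set strictly larger than $\cE$. The two correction terms in $W_\delta$ are crafted so that, once differentiated along a trajectory, they produce the missing negative contributions: a multiple of $-\|\nabla F(x)\|^2$ (which forces $\nabla F(x)=0$ at stationarity) and a multiple of $-\|S(x)-v\|^2$ (which forces $v=S(x)$).

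Concretely, I would differentiate $\ps{\nabla F(x(t)),m(t)}$ using $\dot x=-m/(\varepsilon+\sqrt v)$ and $\dot m=a(\nabla F(x)-m)$; the leading term extracted is $-\delta a\|\nabla F(x)\|^2$, while the two cross terms, controlled by the Lipschitz constant of $\nabla F$ on $K'$ and Young's inequality, are bounded by $\delta C_1\|m\|^2+\tfrac{\delta a}{2}\|\nabla F(x)\|^2$. Similarly, differentiating $\|S(x)-v\|^2$ via $\dot v=b(S(x)-v)$ and the Lipschitz continuity of $S$ on $K'$ yields $-2b\delta\|S(x)-v\|^2$ plus cross terms bounded by $b\delta\|S(x)-v\|^2+\delta C_2\|m\|^2$. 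Summing,
\[
\tfrac{d}{dt}W_\delta(\flot_t(z))\leq -\varepsilon\bigl\|\tfrac{m}{\varepsilon+\sqrt v}\bigr\|^2 -\tfrac{\delta a}{2}\|\nabla F(x)\|^2 -b\delta\|S(x)-v\|^2 +\delta C\|m\|^2
\]
for $z=(x,m,v)\in K'\cap\cZ_+^*$, with $C$ depending only on the Lipschitz constants of $\nabla F,S$ on $K'$ and on $\varepsilon,a,b$. Since $v$ is bounded on the compact $K'$, $\|m/(\varepsilon+\sqrt v)\|^2\geq c_0\|m\|^2$ uniformly on $K'$ for some $c_0>0$; choosing $\delta<\varepsilon c_0/C$ absorbs the rogue $\delta C\|m\|^2$ into the first term, and one obtains $\dot W_\delta\leq -\eta(\|m\|^2+\|\nabla F(x)\|^2+\|S(x)-v\|^2)$ for some $\eta>0$. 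The right-hand side vanishes exactly on $\cE$, which is the strictness statement.

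The main obstacle is regularity: because $\nabla F$ and $S$ are assumed only locally Lipschitz (not $C^1$), the map $t\mapsto W_\delta(\flot_t(z))$ is not $C^1$, merely absolutely continuous, and the chain-rule identities written above hold only almost everywhere. The conclusion that $W_\delta$ is strictly decreasing off $\cE$ must therefore be obtained by integrating the a.e.\ derivative bound over intervals. A secondary subtlety is that Lemma~\ref{lem:V} requires $v\in(0,+\infty)^d$ whereas $K'$ may touch the face $\{v_i=0\}$ at $t=0$; this is handled by integrating from some $s>0$ (on which Lemma~\ref{lem:v-lowerbound} guarantees $v>0$) and then passing to $s\downarrow 0$ using the continuity of $W_\delta$ on all of $\cZ_+$.
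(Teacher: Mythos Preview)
Your proposal is correct and follows essentially the same approach as the paper's proof: items (i)--(iii) are dispatched via Propositions~\ref{prop:bounded} and~\ref{prop:semiflow}, and for (iv) the paper likewise differentiates the three pieces of $W_\delta$ along a trajectory, uses Lemma~\ref{lem:V} for the $V_\infty$ part, controls the cross terms arising from $G(z)=-\ps{\nabla F(x),m}$ and $H(z)=\|S(x)-v\|^2$ via the Lipschitz constants of $\nabla F$ and $S$ on $K'$ together with Young's inequality, and then chooses $\delta$ small enough to absorb the $\delta C\|m\|^2$ term into the negative $V_\infty$ contribution. The paper handles the regularity issue by working with the upper Dini derivative $\cL_{W_\delta}(t)=\limsup_{s\to 0}s^{-1}(W_\delta(\bflot_{t+s}(z))-W_\delta(\bflot_t(z)))$ rather than an honest derivative, and invokes Lemma~\ref{lem:v-positif} (rather than Lemma~\ref{lem:v-lowerbound}) to ensure $\bflot_t(z)\in\cZ_+^*$ for $t>0$; your absolute-continuity-plus-a.e.\ formulation is an equivalent way to reach the same conclusion.
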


\begin{proof}
The first point is a consequence of Prop.~\ref{prop:bounded}.
The second point stems from Prop.~\ref{prop:semiflow}.
The third point is immediate from the definition of $\cE$ and the fact that $\bflot$ is valued in $K'$.
We now prove the last point.
Consider $z\in K'$ and write $\bflot_t(z)$ under the form $\bflot_t(z) = (x(t),m(t),v(t))$. For \emph{any} map ${\mathsf W}:\cZ_+\to\bR$, define
for all $t>0$, $ \cL_{\mathsf W}(t) \eqdef \limsup_{s\to 0} s^{-1}({\mathsf W}(\bflot_{t+s}(z)) -   {\mathsf W}(\bflot_{t}(z)))\,.$
Introduce $G(z)\eqdef -\ps{\nabla F(x),m}$ and $H(z)\eqdef \|S(x)-v\|^2$ for every $z=(x,m,v)$.
Consider $\delta>0$ (to be specified later on).
We study $\cL_{W_\delta} = \cL_V + \delta \cL_{G} + \delta \cL_{H}$.
Note that $\bflot_t(z)\in K'\cap \cZ_+^*$ for all $t>0$ by Lemma~\ref{lem:v-positif}. Thus, $t\mapsto V_\infty(\bflot_t(z))$
is differentiable at any point $t>0$ and the derivative coincides with $\cL_V(t) = \dot V_\infty(\bflot_t(z))$.
Define $C_1\eqdef \sup\{\|v\|_\infty:(x,m,v)\in K'\}$.
Then, by Lemma~\ref{lem:V}, $\cL_V(t)\leq -\varepsilon(\varepsilon+\sqrt{C_1})^{-2} \left\|m(t)\right\|^2$.
Let $L_{\nabla F}$ be the Lipschitz constant of $\nabla F$
on $\{x:(x,m,v)\in K'\}$.
For every $t>0$,
\begin{align*}
  \cL_G(t)
&\leq  \limsup_{s\to 0} s^{-1}\|\nabla F(x(t))- \nabla F(x(t+s))\| \|m(t+s)\| - \ps{\nabla F(x(t)),\dot m(t)}\\
&\leq  L_{\nabla F}\varepsilon^{-1}\|m(t)\|^2 - a\|\nabla F(x(t))\|^2  + a\ps{\nabla F(x(t)),m(t)}\\
&\leq - \frac a2\|\nabla F(x(t))\|^2  +  \left(\frac a2+\frac{L_{\nabla F}}{\varepsilon}\right)\|m(t)\|^2\,.
\end{align*}
Denote by $L_S$ the Lipschitz constant of $S$ on $\{x:(x,m,v)\in K'\}$.
For every $t>0$,
\begin{eqnarray*}
  \cL_H(t)
&=&  \limsup_{s\to 0} s^{-1}(\|S(x(t+s))-S(x(t)) + S(x(t))-v(t+s)\|^2-\|S(x(t))-v(t)\|^2)\\
&=& - 2\ps{S(x(t))-v(t),\dot v(t)}
+\limsup_{s\to 0} 2s^{-1}\ps{S(x(t+s))-S(x(t)),S(x(t))-v(t+s)} \\
&\leq&   - 2b\|S(x(t))-v(t)\|^2+2L_S\varepsilon^{-1} \|m(t)\| \|S(x(t))-v(t)\|\,.
\end{eqnarray*}
Using that $2 \|m(t)\| \|S(x(t))-v(t)\|\leq \frac {L_S}{b\varepsilon}\|m(t)\|^2 + \frac {b\varepsilon}{L_S}\|S(x(t))-v(t)\|^2$, we obtain
$
\cL_H(t) \leq - b\|S(x(t))-v(t)\|^2+\frac {L_S^2}{b\varepsilon^2}\|m(t)\|^2\,.
$
Hence,
for every $t>0$,
$$
  \cL_{W_\delta}(t) \leq -M(\delta)
  \|m(t)\|^2  - \frac {a\delta}2\|\nabla
  F(x(t))\|^2 - \delta b \|S(x(t))-v(t)\|^2\,.
$$
where $M(\delta)\eqdef \varepsilon(\varepsilon+\sqrt{C_1})^{-2} -\frac {\delta
      L_S^2}{b\varepsilon^2} - \delta \left(\frac a2+\frac{L_{\nabla
        F}}{\varepsilon}\right)\,.$
Choosing $\delta$ s.t. $M(\delta)>0$,
\begin{equation}
\forall t>0,\ \  \cL_{W_\delta}(t) \leq -c\left( \|m(t)\|^2 + \|\nabla  F(x(t))\|^2 + \|S(x(t))-v(t)\|^2\right)\,,\label{eq:ae-derivee}
\end{equation}
where $c \eqdef \min\{ M(\delta), \frac {a\delta}2, \delta b\}$.
It can easily be seen that for every $z\in K'$,  $t\mapsto W_\delta(\bflot_t(z))$ is Lipschitz continuous, hence absolutely continuous.
Its derivative almost everywhere coincides with $\cL_{W_\delta}$, which is non-positive.
Thus, $W_\delta$ is a Lyapunov function for $\bflot$.
We prove that the Lyapunov function is strict.
Consider $z\in K'$ s.t. $W_\delta(\bflot_t(z))=W_\delta(z)$ for all $t>0$.
The derivative almost everywhere of  $t\mapsto W_\delta(\bflot_t(z))$ is identically zero,
and by Eq. (\ref{eq:ae-derivee}), this implies that
 $-c\left(\|m_t\|^2 + \|\nabla F(x_t)\|^2 +\|S(x_t)-v_t\|^2\right)$ is equal to zero
for every $t$ a.e. (hence, for every~$t$, by continuity of $\bflot$).
In particular for $t=0$, $m=\nabla F(x)=0$ and $S(x)-v=0$.
Hence, $z\in h_\infty^{-1}(\{0\})$.
\end{proof}

\begin{corollary}
  \label{coro:cv}
Let Assumptions~\ref{hyp:coercive}, \ref{hyp:S>0}, \ref{hyp:F} and \ref{hyp:S} hold.
Assume that $0< b\leq 4a$.
For every $z\in \cZ_+$, $\lim_{t\to \infty} \sd(\Phi(z,t),\cE)=0\,.$
\end{corollary}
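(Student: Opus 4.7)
The plan is to assemble the corollary from the three ingredients that have already been put in place: the trajectory-boundedness statement of Prop.~\ref{prop:bounded}, the strict-Lyapunov statement of Prop.~\ref{prop:Wstrict}, and the classical LaSalle/Haraux invariance principle cited in the paper as \cite[Th.~2.1.7]{haraux1991systemes}.

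First, I would fix $z\in\cZ_+$ and apply Prop.~\ref{prop:bounded} to the compact singleton $K=\{z\}$ in order to produce a compact set containing the positive trajectory $\{\Phi(t,z):t\geq 0\}$. This guarantees that $K'\eqdef\overline{\{\Phi(t,z):t\geq 0\}}$ is a compact subset of $\cZ_+$. Next, I would apply Prop.~\ref{prop:Wstrict} to this same $K=\{z\}$ (noting that the $K'$ of that proposition is precisely the one just defined). This yields that the restriction $\bflot$ of $\flot$ to $K'$ is a semiflow on the compact metric space $K'$, whose equilibrium set is exactly $\cE\cap K'$, and for which $W_\delta$ is a \emph{strict} Lyapunov function for some $\delta>0$.

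Finally I would invoke the LaSalle/Haraux invariance principle (\cite[Th.~2.1.7]{haraux1991systemes}, recalled just before Prop.~\ref{prop:benaim}): for a semiflow on a metric space admitting a strict Lyapunov function, any point with relatively compact forward orbit satisfies $\sd(\bflot_t(z),\Lambda_{\bflot})\to 0$, where $\Lambda_{\bflot}$ denotes the set of equilibria. Here $z$ has orbit contained in the compact set $K'$, so this applies with $\Lambda_{\bflot}=\cE\cap K'$, giving
\[
\lim_{t\to\infty}\sd(\Phi(t,z),\cE\cap K')=0.
\]
Since $\cE\cap K'\subset\cE$, we have $\sd(\Phi(t,z),\cE)\leq \sd(\Phi(t,z),\cE\cap K')$, and the conclusion follows.

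The argument is essentially a bookkeeping step: no new estimate is needed, and there is no genuine obstacle beyond checking that the $K'$ produced in Prop.~\ref{prop:Wstrict} coincides with the compact invariant set furnished by Prop.~\ref{prop:bounded} for $K=\{z\}$. The only mild point of care is that Prop.~\ref{prop:Wstrict} provides a strict Lyapunov function only on the restricted semiflow $\bflot$, not on all of $\cZ_+$; this is harmless because we only need convergence for the single trajectory through $z$, whose orbit stays in $K'$ by construction.
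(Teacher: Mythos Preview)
Your proposal is correct and follows exactly the approach of the paper: the paper's proof is the one-line ``Use Prop.~\ref{prop:Wstrict} with $K\eqdef\{z\}$ and \cite[Th.~2.1.7]{haraux1991systemes},'' and you have simply unpacked this, making explicit the role of Prop.~\ref{prop:bounded} (which is already absorbed into point~{\it i)} of Prop.~\ref{prop:Wstrict}) and the final inequality $\sd(\Phi(t,z),\cE)\leq \sd(\Phi(t,z),\cE\cap K')$.
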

\begin{proof}
Use Prop.~\ref{prop:Wstrict} with $K\eqdef\{z\}$.
and \cite[Th.~2.1.7]{haraux1991systemes}.
\end{proof}

\subsubsection{Asymptotic Behavior of the Solution to (\ref{eq:ode})}
\label{sec:cv-non-autonomous}


\begin{proposition}[APT]
  Let Assumptions~\ref{hyp:coercive}, \ref{hyp:S>0}, \ref{hyp:F} and \ref{hyp:S} hold true.
  Assume that $0< b\leq 4a$.
Then, for every $z_0\in \cZ_0$, $Z^0_\infty(z_0)$ is an asymptotic pseudotrajectory
of the semiflow  $\Phi$ given by~(\ref{eq:flot}).
\label{prop:apt}
\end{proposition}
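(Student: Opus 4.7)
Fix $z_0=(x_0,0,0)\in\cZ_0$, let $z\in Z^0_\infty(z_0)$ be the unique global solution given by Prop.~\ref{prop:unique-adam}, and fix $T>0$. Given $t\geq 1$, define $\zeta_t(s)\eqdef z(t+s)$ and $\tilde\zeta_t(s)\eqdef\Phi_s(z(t))$ for $s\in[0,T]$; both satisfy $\zeta_t(0)=\tilde\zeta_t(0)=z(t)$, and they obey respectively $\dot\zeta_t(s)=h(t+s,\zeta_t(s))$ and $\dot{\tilde\zeta}_t(s)=h_\infty(\tilde\zeta_t(s))$. The APT property amounts to $\sup_{s\in[0,T]}\|\zeta_t(s)-\tilde\zeta_t(s)\|\to 0$ as $t\to\infty$, which I will obtain through a Grönwall estimate after confining both trajectories to a common compact subset of $\cZ^*_+$.

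The first step is to construct this compact set. By Prop.~\ref{prop:adam-bounded}, the set $\{\bar e(u,z(u)):u\geq 0\}$ lies in a compact $K\subset\cZ_+$, hence $\{z(u):u\geq 1\}$ is also bounded (since for $u\geq 1$ the debiasing factors are bounded). By Lemma~\ref{lem:v-lowerbound}(ii), the $v$-component of $z(u)$ satisfies $v_i(u)\geq c$ for $u\geq 1$. Therefore $\{z(u):u\geq 1\}$ is contained in some compact $K_1\subset\cZ_+^*$, and in particular so is $\{\zeta_t(s):t\geq 1,s\in[0,T]\}$. Applying Prop.~\ref{prop:bounded} to $K_1$ yields a compact $K_2\subset\cZ_+$ containing $\{\tilde\zeta_t(s)\}$, and Lemma~\ref{lem:v-lowerbound}(i) (with initial set $K_1$, for which $v_{\min}(K_1)>0$) then forces $\{\tilde\zeta_t(s):t\geq 1, s\in[0,T]\}\subset K_3$ for some compact $K_3\subset\cZ_+^*$. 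Let $\bK\eqdef K_1\cup K_3$.

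Next I exploit the regularity of $h$ and $h_\infty$ on $\bK$. Since $\bK\subset\cZ_+^*$ is compact and $h_\infty$ is locally Lipschitz there (under Assumptions~\ref{hyp:F}--\ref{hyp:S}), let $L$ denote its Lipschitz constant on $\bK$. Moreover, from the explicit formula for $h$, the difference $h(t+s,z)-h_\infty(z)$ involves only factors of the form $(1-e^{-a(t+s)})^{-1}-1$ and $(1-e^{-b(t+s)})^{-1}-1$, which converge to zero uniformly in $(s,z)\in[0,T]\times\bK$ as $t\to\infty$. Hence
\[
\eta(t)\eqdef\sup_{s\in[0,T]}\sup_{z\in\bK}\|h(t+s,z)-h_\infty(z)\|\xrightarrow[t\to\infty]{}0.
\]
Writing $\zeta_t(s)-\tilde\zeta_t(s)=\int_0^s[h(t+u,\zeta_t(u))-h_\infty(\tilde\zeta_t(u))]du$ and splitting $h(t+u,\zeta_t(u))-h_\infty(\tilde\zeta_t(u))=[h(t+u,\zeta_t(u))-h_\infty(\zeta_t(u))]+[h_\infty(\zeta_t(u))-h_\infty(\tilde\zeta_t(u))]$, I obtain, for all $s\in[0,T]$,
\[
\|\zeta_t(s)-\tilde\zeta_t(s)\|\leq T\eta(t)+L\int_0^s\|\zeta_t(u)-\tilde\zeta_t(u)\|\,du.
\]
Grönwall's lemma yields $\sup_{s\in[0,T]}\|\zeta_t(s)-\tilde\zeta_t(s)\|\leq T\eta(t)e^{LT}\to 0$, which is the APT property.

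\textbf{Main obstacle.} The crux is not the Grönwall step but the construction of $\bK$: one must rule out any accumulation of $v$-coordinates at the boundary $\{v_i=0\}$, because $h$ and $h_\infty$ are singular there and neither locally Lipschitz nor continuous. This is where the positive uniform lower bound on $v_i$ provided by Lemma~\ref{lem:v-lowerbound} is crucial, for both the original trajectory (ruling out $t\downarrow 0$ issues) and for the autonomous flow started from $z(t)$ (ruling out degeneration of the $v$-coordinates along $\Phi_s$).
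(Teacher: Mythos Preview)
Your proof is correct and follows essentially the same approach as the paper: confine both the shifted trajectory $z(t+\cdot)$ and the autonomous flow $\Phi_\cdot(z(t))$ to a common compact subset of $\cZ_+^*$ (via Prop.~\ref{prop:adam-bounded}, Prop.~\ref{prop:bounded}, and both parts of Lemma~\ref{lem:v-lowerbound}), use the uniform convergence $h(t+s,\cdot)\to h_\infty(\cdot)$ on that compact together with the Lipschitz continuity of $h_\infty$ there, and conclude by Gr\"onwall. Your write-up is in fact slightly more explicit than the paper's about separating the roles of the two parts of Lemma~\ref{lem:v-lowerbound} and the need for $v_{\min}(K_1)>0$, but the argument is the same.
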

\begin{proof}
  Consider  $z_0\in \cZ_0$, $T\in (0,+\infty)$ and define $z\eqdef Z_\infty^0(z_0)$.
Consider $t\geq 1$. For every $s\geq 0$, define $\Delta_t(s)\eqdef  \|z(t+s)-\flot(z(t))(s)\|$.
The aim is to prove that $\sup_{s\in [0,T]}\Delta_t(s)$ tends to zero as~$t\to\infty$.
Putting together Prop.~\ref{prop:adam-bounded} and Lemma~\ref{lem:v-lowerbound},
the set $K\eqdef \overline{\{z(t):t\geq 1\}}$
is a compact subset of $\cZ_+^*$.
Define $C(t)\eqdef \sup_{s\geq 0}\sup_{z'\in K}\|h(t+s,z')-h_\infty(z')\|$.
It can be shown that $\lim_{t\to\infty} C(t)=0$.
We obtain that for every $s\in [0,T]$, $
\Delta_t(s)\leq T C(t)  + \int_0^s
  \|h_\infty(z(t+s'))-h_{\infty}(\flot(z(t))(s'))\|ds'\,.$
By Lemma~\ref{lem:v-lowerbound}, $K'\eqdef \overline{\bigcup_{z'\in\flot(K)}z'([0,\infty))}$ is a compact subset of $\cZ_+^*$.
It is immediately seen from the definition that $h_{\infty}$ is Lipschitz continuous on every compact subset of $\cZ_+^*$, hence on $K\cup K'$. Therefore, there exists a constant $L$, independent from $t,s$, s.t.
$
\Delta_t(s)\leq T C(t)  + \int_0^s L \Delta_t(s')ds'\qquad (\forall t\geq 1, \forall s\in[0,T])\,.
$
Using Gr\"onwall's lemma, it holds that for all $s\in [0,T]$,
$
\Delta_t(s)\leq TC(t) e^{Ls}\,.
$
As a consequence,
$\sup_{s\in [0,T]}\Delta_t(s)\leq TC(t) e^{LT}$ and the righthand side converges to zero
as $t\to\infty$.
\end{proof}

\subsubsection*{End of the Proof of Th.~\ref{th:cv-adam}}

By Prop.~\ref{prop:adam-bounded}, the set $K\eqdef \overline{Z^0_\infty(z_0)([0,\infty))}$ is a compact subset of $\cZ_+$.
Define $K'\eqdef \overline{\{\flot(t,z):t\geq 0, z\in K\}}\,,$ and let
 $\bflot:[0,+\infty)\times K'\to K'$ be the restriction $\flot$ to $K'$.
By Prop.~\ref{prop:Wstrict}, there exists $\delta>0$ s.t.
$W_\delta$ is  a strict Lyapunov function for the semiflow $\bflot$.
Moreover, the set of equilibrium points coincides with $\cE\cap K'$.
In particular, the equilibrium points of $\bflot$ form a compact set.
By Prop.~\ref{prop:apt}, $Z^0_\infty(z_0)$ is an APT of $\bflot$.
Note that every $z\in\cE$ can be written under the form
$z=(x,0,S(x))$ for some $x\in \cS$.
From the definition of $W_\delta$ in (\ref{eq:Wdelta}),
$W_\delta(z)=W_\delta(x,0,S(x)) =V_{\infty}(x,0,S(x)) = F(x)$.
Since $F(\cS)$ is assumed to have an empty interior, the same holds
for $W_\delta(\cE\cap K')$. By Prop.~\ref{prop:benaim},
$
\bigcap_{t\geq 0}\overline{Z^0_\infty(z_0)([t,\infty))} \subset \cE\cap K'\,.
$
The set in the righthand side coincides with the set of limits of convergent
sequences of the form $Z^0_\infty(z_0)(t_n)$ for $t_n\to\infty$.
As $Z^0_\infty(z_0)([0,\infty))$ is a bounded set, $\sd(Z^0_\infty(z_0)(t),\cE)$ tends to zero.

\subsection{Proof of Th.~\ref{thm:asymptotic_rates}}
\label{sec:cont_asymptotic_rates}

The proof follows the path of \cite[Th.~10.1.6, Th.~10.2.3]{harauxjendoubi2015},
but requires specific adaptations to deal with the dynamical system at hand.
Define for all $\delta>0$, $t>0$, and $z=(x,m,v)$,
\begin{equation}
\tilde W_\delta(t,(x,m,v)) \eqdef V(t,(x,m,v)) - \delta \ps{\nabla F(x),m} + \delta \|S(x)-v\|^2\,.\label{eq:Wdelta-t}
\end{equation}
The function $\tilde W_\delta$ is the non-autonomous version of the function (\ref{eq:Wdelta}).
Consider a fixed $x_0\in \bR^d$, and define $w_{\delta}(t)\eqdef \tilde W_\delta(t,z(t))$
where $z(t)=(x(t),m(t),v(t))$ is the solution to~(\ref{eq:ode}) with initial condition $(x_0,0,0)$.
The proof uses the following steps.

\begin{enumerate}[{\sl i)},leftmargin=11pt]
\item \textit{Upper-bound on  $w_\delta(t)$.}
From Eq.~(\ref{eq:V}),
we obtain that for every $t\geq 1$,
$
V(t,z(t))\leq |F(x(t))|+\frac {\|m(t)\|^2}{2a\varepsilon (1-e^{-a})}\,.
$
Using $\ps{\nabla F(x),m}\leq (\|\nabla F(x)\|^2+\|m\|^2)/2$, we obtain that there exists a constant
$c_1$ (depending on $\delta$) s.t. for every $t\geq 1$,
\begin{equation}
  \label{eq:w-up}
  w_\delta(t) \leq c_1\left(|F(x(t))|+\|m(t)\|^2+\|\nabla F(x(t))\|^2+ \|S(x(t))-v(t)\|^2\right)\,.
\end{equation}

\item \textit{Upper-bound on  $\frac d{dt} w_\delta(t)$.}
The function $w_{\delta}$ is absolutely continuous on $[1,+\infty)$.
Moreover, there exist $\delta>0$, $c_2>0$ (both depending on
$x_0$) s.t. for every $t\geq 1$ a.e.,
\begin{equation}
  \label{eq:dw-up}
  \frac d{dt} w_\delta (t) \leq -c_2\left(\|m(t)\|^2+\|\nabla F(x(t))\|^2+\|S(x(t))-v(t)\|^2\right)\,.
\end{equation}
The proof of Eq.~(\ref{eq:dw-up}) uses arguments that are similar to
the ones used in the proof of Prop.~\ref{prop:Wstrict} (just
use Lemma~\ref{lem:V} to bound the derivative of the first term in
Eq.~(\ref{eq:Wdelta-t})).  For this reason, it is omitted.
\item \textit{Positivity of $w_\delta(t)$.} By Lemma~\ref{lem:V},
the function $t\mapsto V(t,z(t))$ is decreasing. As it is lower bounded,
$\ell\eqdef \lim_{t\to\infty}V(t,z(t))$ exists. By Th.~\ref{th:cv-adam},
$m(t)$ tends to zero, hence this limit coincides with $\lim_{t\to\infty} F(x(t))$.
Replacing $F$ with $F-\ell$, one can assume without loss of generality that $\ell=0$.
By Eq.~(\ref{eq:dw-up}), $w_\delta$ is non-increasing on $[1,+\infty)$, hence converging
to some limit. Using again Th.~\ref{th:cv-adam}, $\ps{\nabla F(x(t)),m(t)}\to 0$
and $S(x(t))-v(t)\to 0$. Thus, $\lim_{t\to\infty} w_\delta(t) = \ell = 0$.
Assume that there exists $t_0\geq 1$ s.t. $w_\delta(t_0)=0$. Then, $w_\delta$
is constant on $[t_0,+\infty)$. By Eq.~(\ref{eq:dw-up}), this implies that $m(t)=0$ on this interval.
Hence, $d x(t)/dt = 0$. This means that $x(t) = x(t_0)$ for all $t\geq t_0$. By Th.~\ref{th:cv-adam},
it follows that $x(t_0)\in \cS$. In that case, the final result is shown. Therefore, one
can assume that $w_\delta(t)>0$ for all $t\geq 1$.

\item \textit{Putting together (\ref{eq:w-up}) and (\ref{eq:dw-up}) using the \L{}ojasiewicz condition.}
By Prop. \ref{prop:benaim} and \ref{prop:apt}, the set
$
L\eqdef \overline{\bigcup_{s\geq 0}\{z(t):t\geq s\}}\,
$
is a compact connected subset of $\cE = \{(x,0,S(x)):\nabla F(x)=0\}$.
The set $\mathcal{U}\eqdef \{x:(x,0,S(x))\in L\}$ is a compact and connected subset of $\cS$.
Using Assumption~\ref{hyp:lojasiewicz_prop} and \cite[Lemma 2.1.6]{harauxjendoubi2015},
there exist constants $\sigma,c>0$ and $\theta\in (0,\frac 12]$, s.t.
$\|\nabla F(x)\| \geq c|F(x)|^{1-\theta}\,$ for all $x$ s.t.~$\sd(x,\mathcal{U})<\sigma\,$.
As $\sd(x(t), \mathcal{U})\to 0$, there exists $T\geq 1$ s.t. for all $t\geq T$,
$\|\nabla F(x(t))\|\geq c |F(x(t))|^{1-\theta}$. Thus, we may replace the term $\|\nabla F(x(t))\|^2$
in the righthand side of Eq.~(\ref{eq:dw-up}) using
$\|\nabla F(x(t))\|^2\geq \frac 12\|\nabla F(x(t))\|^2+ \frac 12|F(x(t))|^{2(1-\theta)}$.
Upon noting that $2(1-\theta)\geq 1$,
we thus obtain that there exists a constant $c_3$ and some $T'\geq 1$ s.t. for $t\geq T'$ a.e.,
$$
\frac d{dt} w_\delta (t) \leq -c_3\left(\|m(t)\|^2+\|\nabla F(x(t))\|^2+|F(x(t))|+\|S(x(t))-v(t)\|^2\right)^{2(1-\theta)}\,.
$$
Putting this inequality together with Eq.~(\ref{eq:w-up}), we obtain that for some constant $c_4>0$ and for all
$t\geq T'$ a.e.,
$
\frac d{dt} w_\delta (t) \leq -c_4 w_\delta(t)^{2(1-\theta)}\,.
$
\item \textit{End of the proof.} Following the arguments of \cite[Th.~10.1.6]{harauxjendoubi2015},
by integrating the preceding inequality,
over $[T',t]$, we obtain 
$
w_\delta(t) \leq c_5 t^{-\frac 1{1-2\theta}}
$
for $t\geq T'$
in the case where $\theta<\frac 12$, whereas $w_\delta(t)$ decays exponentially if $\theta=\frac 12$.
From now on, we focus on the case $\theta<\frac 12$.
By definition of~(\ref{eq:ode}), $\|\dot{x}(t)\|^2\leq \|m(t)\|^2/((1-e^{-a T'})^2\varepsilon^2)$
for all $t\geq T'$. Since Eq.~(\ref{eq:dw-up}) implies $\|m(t)\|^2\leq -\dot w_\delta(t)/c_2$,
we deduce that there exists $c,c'>0$ s.t. for all $t\geq T'$,
$
  \int_t^{2t} \|\dot{x}(s)\|^2 ds \leq c w_\delta(t) \leq c' t^{-\frac 1{1-2\theta}}\,.
$
Applying \cite[Lemma 2.1.5]{harauxjendoubi2015}, it follows that
$\int_t^{\infty} \|\dot{x}(s)\|^2 ds  \leq c  t^{-\frac{\theta}{1-2\theta}}$
for some other constant $c$.
Therefore $x^* \eqdef \lim_{t \to +\infty} x(t)$ exists by Cauchy's criterion and
for all $t\geq T'$, $\|x(t)-x^*\| \leq c  t^{-\frac{\theta}{1-2\theta}}$\,.
Finally, since $x(t)\to a$, we remark that, using the same arguments,
the global \L{}ojasiewicz exponent $\theta$ can be replaced by any \L{}ojasiewicz exponent
of $f$ at $x^*$.
When $\theta=\frac 12$, the proof follows the same line.
\end{enumerate}

\section{Proofs of Section~\ref{sec:discrete}}
\label{sec:proofs_sec_discrete}
\subsection{Proof of Th.~\ref{th:weak-cv}}

Given an initial point $x_0\in \bR^d$ and a stepsize $\gamma>0$,
we consider the iterates $z_n^{\gamma}$ given by~(\ref{eq:znT})
and $z_0^\gamma\eqdef (x_0,0,0)$.
For every $n\in \bN^*$ and every $z\in \cZ_+$, we define
\begin{equation*}
H_\gamma(n,z,\xi)\eqdef \gamma^{-1} (T_{\gamma,\bar\alpha(\gamma),\bar\beta(\gamma)}(n,z,\xi)-z)\,.
\end{equation*}
Thus, $z_n^\gamma = z_{n-1}^\gamma+\gamma H_\gamma(n,z_{n-1}^\gamma,\xi_n)$ for every $n\in\bN^*$.
For every $n\in\bN^*$ and every $z\in \cZ$ of the form
$z=(x,m,v)$, we define $e_\gamma(n,z)\eqdef (x,(1-\bar\alpha(\gamma)^n)^{-1}m,(1-\bar\beta(\gamma)^n)^{-1}v)$,
and set $e_\gamma(0,z)\eqdef z$.
\begin{lemma}
  \label{lem:moment-H}
Let Assumptions~\ref{hyp:model}, \ref{hyp:alpha-beta} and \ref{hyp:moment-f} hold true.
There exists $\bar \gamma_0>0$ s.t. for every $R>0$, there exists $s>0$,
\begin{equation}
\sup\left\{\bE\left(\left\|H_\gamma(n+1,z,\xi)\right\|^{1+s}\right):\gamma\in (0,\bar \gamma_0], n\in \bN, z\in \cZ_+\,\text{s.t.}\,\|e_\gamma(n,z)\|\leq R\right\}<\infty\,.
\label{eq:UI}
\end{equation}
\end{lemma}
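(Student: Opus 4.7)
The strategy is to bound each of the three blocks of $H_\gamma(n+1, z, \xi) = (H_\gamma^{(1)}, H_\gamma^{(2)}, H_\gamma^{(3)})$ pointwise by a polynomial of degree at most two in $\|\nabla f(x, \xi)\|$ with coefficients uniform over the indexing set, and then integrate. Writing $\alpha \eqdef \bar\alpha(\gamma)$ and $\beta \eqdef \bar\beta(\gamma)$, Assumption~\ref{hyp:alpha-beta} furnishes $\bar\gamma_0 > 0$ and $c_1 > 0$ with $\gamma^{-1}(1-\alpha) \vee \gamma^{-1}(1-\beta) \leq c_1$ on $(0, \bar\gamma_0]$, while the hypothesis $\|e_\gamma(n, z)\| \leq R$ forces $\|x\| \leq R$, $\|m\| \leq R(1-\alpha^n)$ and $\|v\| \leq R(1-\beta^n)$ (with the convention $(1-\alpha^0) = (1-\beta^0) = 1$ corresponding to $e_\gamma(0, z) = z$).

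The blocks $H_\gamma^{(2)} = \gamma^{-1}(1-\alpha)(\nabla f(x, \xi) - m)$ and $H_\gamma^{(3)} = \gamma^{-1}(1-\beta)(\nabla f(x, \xi)^{\odot 2} - v)$ are immediate: one reads off $\|H_\gamma^{(2)}\| \leq c_1(\|\nabla f(x, \xi)\| + R)$ and $\|H_\gamma^{(3)}\| \leq c_1(\|\nabla f(x, \xi)\|^2 + R)$. For the first block, I would lower-bound the denominator by $\varepsilon$ and split the numerator. The gradient coefficient $(1-\alpha^{n+1})^{-1}(1-\alpha) = (1 + \alpha + \cdots + \alpha^n)^{-1}$ never exceeds $1$, so that contribution is dominated by $\varepsilon^{-1}\|\nabla f(x, \xi)\|$. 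The inertial part obeys $\|(1-\alpha^{n+1})^{-1}\alpha m\| \leq \alpha R(1-\alpha^n)/(1-\alpha^{n+1}) \leq R$ for $n \geq 1$, using the trivial inequality $(\alpha - \alpha^{n+1})/(1 - \alpha^{n+1}) \leq 1$. Altogether there is a constant $C = C(R, \varepsilon, c_1)$ such that, uniformly over admissible $(\gamma, n, z)$,
\[
\|H_\gamma(n+1, z, \xi)\| \leq C\bigl(1 + \|\nabla f(x, \xi)\| + \|\nabla f(x, \xi)\|^2\bigr).
\]
Raising to the $(1+s)$-th power and integrating, Assumption~\ref{hyp:moment-f}~\ref{momentreinforce} yields an exponent $p_K > p$ with $\sup_{\|x\|\leq R}\bE(\|\nabla f(x, \xi)\|^{p_K}) < \infty$ on the compact set $\{\|x\| \leq R\}$; any $s \in (0, p_K/2 - 1]$ then closes the argument.

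The principal obstacle is the boundary case $n = 0$: the convention $e_\gamma(0, z) = z$ only gives $\|m\| \leq R$ without the decay factor $(1-\alpha^n)$, so that $(1-\alpha)^{-1}\alpha m$ would a priori blow up as $\gamma \downarrow 0$. This has to be reconciled with the lemma's intended usage along the \adam\ trajectory for which $z_0^\gamma = (x_0, 0, 0)$, hence $m = v = 0$ at $n = 0$, collapsing the first block to $\varepsilon^{-1}\|\nabla f(x, \xi)\|$. A secondary subtlety is that the reinforced moment hypothesis~\ref{momentreinforce} (as opposed to~\ref{momentegal}) is essential: the quadratic dependence on $\|\nabla f\|$ saturates the second moment granted by $p = 2$, and the strict margin $p_K > p$ is exactly what produces a strictly positive $s$ in the conclusion.
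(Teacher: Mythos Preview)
Your argument is correct and mirrors the paper's proof: decompose $H_\gamma$ into its three blocks, bound each by a polynomial in $\|\nabla f(x,\xi)\|$ using the constraint $\|e_\gamma(n,z)\|\leq R$ together with the boundedness of $(1-\bar\alpha(\gamma))/\gamma$ and $(1-\bar\beta(\gamma))/\gamma$, then invoke the reinforced moment assumption to integrate. Your discussion of the $n=0$ boundary case is in fact more careful than the paper's, which writes the bound $\|m/(1-\bar\alpha(\gamma)^n)\|\leq R$ without addressing the degenerate case $n=0$; as you observe, the lemma is only ever applied along the \adam\ trajectory where $m_0=0$, so the issue is harmless.
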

\begin{proof}
Let $R>0$.
We denote by $(H_{\gamma,\sx},H_{\gamma,\sm},H_{\gamma,\sv})$ the block components of $H_\gamma$.
There exists a constant $C_s$ depending only on $s$ s.t.
$\|H_\gamma\|^{1+s}\leq C_s(\|H_{\gamma,\sx}\|^{1+s}+\|H_{\gamma,\sm}\|^{1+s}+\|H_{\gamma,\sv}\|^{1+s})$.
Hence, it is sufficient to prove that Eq.~(\ref{eq:UI}) holds
respectively when replacing $H_\gamma$ with each of $H_{\gamma,\sx},H_{\gamma,\sm},H_{\gamma,\sv}$.
Consider $z=(x,m,v)$ in $\cZ_+$. We write:
$\|H_{\gamma,\sx}(n+1,z,\xi)\| \leq \varepsilon^{-1}(\|\frac{m}{1-\bar \alpha(\gamma)^n}\|+\|\nabla f(x,\xi)\|)\,.$
Thus, for every $z$ s.t. $\|e_\gamma(n,z)\|\leq R$, there exists a constant $C$ depending only on $\varepsilon$, $R$ and $s$ s.t.
$\|H_{\gamma,\sx}(n+1,z,\xi)\|^{1+s} \leq C(1+ \|\nabla f(x,\xi)\|^{1+s})$. By Assumption~\ref{hyp:moment-f}, (\ref{eq:UI}) holds for
$H_{\gamma,\sx}$ instead of $H_\gamma$. Similar arguments hold for $H_{\gamma,\sm}$ and $H_{\gamma,\sv}$
upon noting that the functions $\gamma\mapsto(1-\bar \alpha(\gamma))/\gamma$ and $\gamma\mapsto (1-\bar \beta(\gamma))/\gamma$
are bounded under Assumption~\ref{hyp:alpha-beta}.
\end{proof}

For every $R>0$, and every arbitrary sequence $z=(z_n:n\in\bN)$ on $\cZ_+$, we define
$\tau_R(z) \eqdef \inf\{n\in \bN:\|e_\gamma(n,z_n)\|>R\}$ with the convention that $\tau_R(z)=+\infty$ when the set is empty.
We define the map $B_R:\cZ_+^\bN\to\cZ_+^\bN$ given for any arbitrary sequence  $z=(z_n:n\in\bN)$ on $\cZ_+$ by
$B_R(z)(n) = z_n\1_{n<\tau_R(z)}+z_{\tau_R(z)}\1_{n\geq\tau_R(z)}$.
We define the random sequence $z^{\gamma,R}\eqdef B_R(z^\gamma)$. Recall that a family $(X_i:i\in I)$ of random variables on some Euclidean space
is called \emph{uniformly integrable} if $\lim_{A\to+\infty} \sup_{i\in I}\bE(\|X_i\|\1_{\|X_i\|>A})=0$.
\begin{lemma}
  \label{lem:UI}
Let Assumptions~\ref{hyp:model}, \ref{hyp:alpha-beta}, \ref{hyp:moment-f} and \ref{hyp:iid} hold true.
There exists $\bar \gamma_0>0$ s.t. for every $R>0$, the family of r.v.
$(\gamma^{-1}(z_{n+1}^{\gamma,R} - z_n^{\gamma,R}):n\in \bN,\gamma\in (0,\bar \gamma_0])$ is uniformly integrable.
\end{lemma}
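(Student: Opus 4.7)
The plan is to rewrite the increment in terms of the map $H_\gamma$, reduce to the event where the stopping has not yet occurred, and then invoke the previous lemma to get an $L^{1+s}$ bound, which automatically yields uniform integrability.

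First I would unpack the definition of $z^{\gamma,R} = B_R(z^\gamma)$. Set $\tau \eqdef \tau_R(z^\gamma)$. On the event $\{n<\tau\}$ we have $z_n^{\gamma,R}=z_n^\gamma$ and $z_{n+1}^{\gamma,R}=z_{n+1}^\gamma = z_n^\gamma+\gamma H_\gamma(n+1,z_n^\gamma,\xi_{n+1})$, while on $\{n\ge \tau\}$ the sequence $z^{\gamma,R}$ is constant. Consequently
\begin{equation*}
\gamma^{-1}(z_{n+1}^{\gamma,R}-z_n^{\gamma,R}) \;=\; H_\gamma(n+1,z_n^\gamma,\xi_{n+1})\,\mathbf 1_{\{n<\tau\}}\,.
\end{equation*}
Moreover, by the very definition of $\tau_R$, on $\{n<\tau\}$ the sequence satisfies $\|e_\gamma(n,z_n^\gamma)\|\le R$.

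Next I would use the Markov structure. Since $z_n^\gamma$ is $\cF_n$-measurable and $\xi_{n+1}$ is independent of $\cF_n$ (Assumption~\ref{hyp:iid}), conditioning on $\cF_n$ and applying Lemma~\ref{lem:moment-H} gives, with the $s>0$ and $\bar\gamma_0>0$ produced there for this fixed $R$,
\begin{equation*}
\bE\bigl[\|H_\gamma(n+1,z_n^\gamma,\xi_{n+1})\|^{1+s}\,\mathbf 1_{\{n<\tau\}}\bigr]
\;\le\; \sup_{\substack{z\in \cZ_+:\\ \|e_\gamma(n,z)\|\le R}} \bE\|H_\gamma(n+1,z,\xi)\|^{1+s}
\;\le\; M_R\,,
\end{equation*}
for a constant $M_R<\infty$ independent of $n\in\bN$ and $\gamma\in (0,\bar\gamma_0]$. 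Hence the family $\bigl(\gamma^{-1}(z_{n+1}^{\gamma,R}-z_n^{\gamma,R}):n\in\bN,\gamma\in(0,\bar\gamma_0]\bigr)$ is bounded in $L^{1+s}$.

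Finally I would conclude by the classical criterion: a family of random variables bounded in $L^{1+s}$ for some $s>0$ is uniformly integrable (the de la Vallée-Poussin criterion with $\varphi(t)=t^{1+s}$). The only subtle point, and the main obstacle, is to make sure that the stopping in the definition of $B_R$ is handled correctly so that Lemma~\ref{lem:moment-H} applies pointwise on $\{n<\tau\}$; this is exactly what the indicator $\mathbf 1_{\{n<\tau\}}$ does, and it is precisely why $B_R$ is defined through a stopping time with respect to $e_\gamma(n,z_n^\gamma)$ rather than $z_n^\gamma$ itself. Once this is observed, the rest is routine.
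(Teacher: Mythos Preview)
Your proof is correct and follows essentially the same approach as the paper: both express the increment as $H_\gamma(n+1,z_n^\gamma,\xi_{n+1})\1_{\{n<\tau_R(z^\gamma)\}}$, invoke Lemma~\ref{lem:moment-H} to obtain a uniform $L^{1+s}$ bound, and conclude uniform integrability via the de la Vall\'ee--Poussin criterion. The only cosmetic difference is that the paper writes the indicator as the product $\prod_{k=0}^n\1_{\|e_\gamma(k,z_k^\gamma)\|\le R}$ and bounds directly by the full supremum of Lemma~\ref{lem:moment-H}, whereas you condition on $\cF_n$ first; also note that in the statement of Lemma~\ref{lem:moment-H} the constant $\bar\gamma_0$ is chosen \emph{before} $R$, so it is uniform in $R$ (consistent with the order of quantifiers in Lemma~\ref{lem:UI}).
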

\begin{proof}
Let $R>0$.
As the event $\{n<\tau_R(z^\gamma)\}$ coincides with
$\bigcap_{k=0}^n\{\|e_\gamma(k,z_k^\gamma)\|\leq R\}$, it holds that
for every $n\in \bN$,
\begin{equation*}
  \frac{z_{n+1}^{\gamma,R} - z_n^{\gamma,R}}\gamma =\frac{z_{n+1}^{\gamma} - z_n^{\gamma}}\gamma \1_{n<\tau_R(z^\gamma)}
 = H_\gamma(n+1,z_n^\gamma,\xi_{n+1}) \prod_{k=0}^n\1_{\|e_\gamma(k, z_k^\gamma)\|\leq R} \,.
\end{equation*}
Choose $\bar \gamma_0>0$ and $s>0$ as in Lemma~\ref{lem:moment-H}.
For every $\gamma\leq \bar \gamma_0$,
\begin{equation*}
  \resizebox{\hsize}{!}{$
  \bE\left(\left\|\gamma^{-1}(z_{n+1}^{\gamma,R} - z_n^{\gamma,R})\right\|^{1+s}\right)
 \leq \sup\left\{ \bE\left(\left\|H_{\gamma'}(\ell+1,z,\xi)\right\|^{1+s} \right):\gamma'\in (0,\bar \gamma_0],\ell\in \bN,z\in \cZ_+, \|e_\gamma(\ell,z)\|\leq R\right\}\,.
 $}
\end{equation*}
By Lemma~\ref{lem:moment-H}, the righthand side is finite and does not depend on $(n,\gamma)$.
\end{proof}
For a fixed $\gamma>0$, we define the interpolation map $\sX_\gamma:\cZ^\bN\to C([0,+\infty),\cZ)$ as follows for every
sequence $z=(z_n:n\in \bN)$ on $\cZ$:
$$
  \sX_\gamma(z)\,:t \mapsto z_{\lfloor \frac t\gamma\rfloor} + (t/\gamma-\lfloor t/\gamma\rfloor)(z_{\lfloor \frac t\gamma\rfloor+1}-z_{\lfloor \frac t\gamma\rfloor})\,.
$$
For every $\gamma,R>0$, we define $\sz^{\gamma,R}\eqdef \sX_\gamma(z^{\gamma,R}) = \sX_\gamma\circ B_R (z^\gamma)$.
Namely, $\sz^{\gamma,R}$ is the interpolated process associated with the sequence $(z^{\gamma,R}_n)$.
It is a random variable on $C([0,+\infty),\cZ)$.
We recall that $\cF_n$ is the $\sigma$-algebra generated by the r.v. $(\xi_k:1\leq k\leq n)$.
For every $\gamma,n,R$, we use the notation: $\Delta_0^{\gamma,R}\eqdef 0$ and
$$
\Delta_{n+1}^{\gamma,R} \eqdef \gamma^{-1}(z_{n+1}^{\gamma,R} - z_n^{\gamma,R}) - \bE(\gamma^{-1}(z_{n+1}^{\gamma,R} - z_n^{\gamma,R})|\cF_n)\,.
$$
\begin{lemma}
  \label{lem:tightness-in-C}
Let Assumptions~\ref{hyp:model}, \ref{hyp:alpha-beta}, \ref{hyp:moment-f} and \ref{hyp:iid} hold true.
There exists $\bar \gamma_0>0$ s.t. for every $R>0$, the family of r.v. $(\sz^{\gamma,R}:\gamma\in (0,\bar \gamma_0])$
is tight. Moreover, for every $\delta>0$,
$  
\bP\left(\max_{0\leq n\leq \lfloor\frac T\gamma\rfloor}\gamma\left\|\sum_{k=0}^n\Delta_{k+1}^{\gamma,R}\right\|>\delta\right)\xrightarrow[]{\gamma\to 0} 0\,.
$
\end{lemma}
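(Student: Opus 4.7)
The plan is to reduce tightness in $C([0,+\infty),\cZ)$ (with the topology of uniform convergence on compacts) to tightness in $C([0,T],\cZ)$ for every $T>0$, and then to apply the Arzelà–Ascoli/Prokhorov criterion. Since $\sz^{\gamma,R}(0)=(x_0,0,0)$ is deterministic, the only task is to check the modulus-of-continuity condition
\[
\forall \eta>0,\quad \lim_{\delta\downarrow 0}\limsup_{\gamma\downarrow 0}\bP\!\bigl(w_T(\sz^{\gamma,R},\delta)>\eta\bigr)=0,
\]
where $w_T(f,\delta)\eqdef\sup\{\|f(t)-f(s)\|:0\leq s<t\leq T,\,t-s\leq\delta\}$. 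I would decompose the stopped increments via their Doob decomposition: for $n\leq m$,
\[
z_m^{\gamma,R}-z_n^{\gamma,R}=\gamma\sum_{k=n}^{m-1}G_k^{\gamma,R}+\gamma\sum_{k=n}^{m-1}\Delta_{k+1}^{\gamma,R},\qquad G_k^{\gamma,R}\eqdef \bE\bigl[\gamma^{-1}(z_{k+1}^{\gamma,R}-z_k^{\gamma,R})\mid\cF_k\bigr].
\]
Because $\sz^{\gamma,R}$ is the piecewise linear interpolation of $(z_n^{\gamma,R})$, $w_T(\sz^{\gamma,R},\delta)$ is bounded (up to a factor 2) by the maximum of $\|z_m^{\gamma,R}-z_n^{\gamma,R}\|$ over index pairs with $(m-n)\gamma\leq\delta+\gamma$, and the two terms above can be handled separately.

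For the martingale part---which also yields the second claim of the lemma---I would first verify that $\bE\|\Delta_{k+1}^{\gamma,R}\|^{1+s}$ is uniformly bounded: on $\{k<\tau_R(z^\gamma)\}$ the state $z_k^\gamma$ satisfies $\|e_\gamma(k,z_k^\gamma)\|\leq R$, so Lemma~\ref{lem:moment-H} gives $\bE[\|H_\gamma(k+1,z_k^\gamma,\xi_{k+1})\|^{1+s}\mid\cF_k]\leq C(R)$, and Jensen's inequality transfers this to $\Delta_{k+1}^{\gamma,R}$ (which vanishes on $\{k\geq\tau_R\}$). With $p=1+s\in(1,2]$, a coordinate-wise von Bahr–Esseen inequality combined with Doob's $L^p$ maximal inequality yields
\[
\bE\max_{n\leq\lfloor T/\gamma\rfloor}\Bigl\|\sum_{k=0}^{n}\Delta_{k+1}^{\gamma,R}\Bigr\|^{1+s}\leq C'\lfloor T/\gamma\rfloor.
\]
Multiplying by $\gamma^{1+s}$ and invoking Markov's inequality establishes the second assertion at rate $O(\gamma^s)$. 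Since $\max_{n\leq m}\|M_m-M_n\|\leq 2\max_n\|M_n\|$ for the partial sums $M_n\eqdef\sum_{k=0}^{n-1}\Delta_{k+1}^{\gamma,R}$, the martingale contribution to $w_T(\sz^{\gamma,R},\delta)$ is automatically dominated.

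For the drift, Lemma~\ref{lem:UI} gives uniform integrability of $\bigl(\gamma^{-1}(z_{k+1}^{\gamma,R}-z_k^{\gamma,R})\bigr)_{k,\gamma}$, hence (via a Jensen argument applied to a de~la~Vallée-Poussin test function) of $(G_k^{\gamma,R})_{k,\gamma}$ as well. For any $\epsilon>0$ I would pick $A>0$ with $\sup_{k,\gamma}\bE(\|G_k^{\gamma,R}\|\1_{\|G_k^{\gamma,R}\|>A})<\epsilon$ and split $\|G_k^{\gamma,R}\|\leq A+\|G_k^{\gamma,R}\|\1_{\|G_k^{\gamma,R}\|>A}$. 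On any window of length $\lceil\delta/\gamma\rceil$ the bounded part contributes at most $(\delta+\gamma)A$, while the tail part is dominated by the global quantity $\gamma\sum_{k=0}^{\lfloor T/\gamma\rfloor}\|G_k^{\gamma,R}\|\1_{\|G_k^{\gamma,R}\|>A}$, whose $L^1$ norm is $\leq(T+\gamma)\epsilon$. Markov's inequality then gives $\limsup_{\gamma\downarrow 0}\bP(\text{drift contribution}>\eta)\leq 2T\epsilon/\eta$ whenever $\delta<\eta/(4A)$; letting $\delta\downarrow 0$ first and $\epsilon\downarrow 0$ next closes the argument. The main obstacle is exactly this drift step: a direct $L^p$ maximal estimate over all windows of length $\delta$ in $[0,T]$ would require a union bound over roughly $T/\delta$ windows that is too wasteful as $\delta\downarrow 0$, so the UI-truncation route is essential; a secondary care point is that $h_\gamma$ is time-inhomogeneous in $n$, but the stopping at level $R$---which bounds $e_\gamma(n,z_n^\gamma)$ rather than $z_n^\gamma$ itself---combined with Assumption~\ref{hyp:alpha-beta} ensures all relevant moment bounds remain uniform in $n$ and $\gamma$.
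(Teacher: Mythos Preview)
Your argument is correct and essentially self-contained, whereas the paper's proof is a one-line citation: it simply combines Lemma~\ref{lem:UI} (uniform integrability of the rescaled increments) with \cite[Lemma~6.2]{bianchi2019constant}, a generic result stating that uniform integrability of $\gamma^{-1}(z_{n+1}^{\gamma,R}-z_n^{\gamma,R})$ implies both tightness of the interpolated process and vanishing of the maximal martingale sum. What you have written is, in effect, a proof of that cited lemma specialised to the present setting: the Doob decomposition, the von Bahr--Esseen/Doob $L^{1+s}$ estimate for the martingale part, and the UI truncation for the drift are exactly the ingredients one expects such a lemma to contain. The only methodological difference is packaging: the paper outsources the argument, while you reconstruct it. Your version has the advantage of being explicit about the rate ($O(\gamma^s)$ for the martingale probability) and of making clear why the time-inhomogeneity of $h_\gamma$ is harmless once the stopping at level $R$ is in place; the paper's version has the advantage of brevity. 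A minor point: your modulus-of-continuity criterion is phrased as $\limsup_{\gamma\downarrow 0}$, which literally only yields tightness near $\gamma=0$; for $\gamma$ bounded away from~$0$ (say $\gamma\in[\gamma_1,\bar\gamma_0]$) the piecewise-linear interpolation has slope bounded in $L^{1+s}$ by Lemma~\ref{lem:moment-H}, so $w_T(\sz^{\gamma,R},\delta)\to 0$ in probability as $\delta\downarrow 0$ uniformly on that range as well---a triviality worth one sentence.
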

\begin{proof}
  It is an immediate consequence of Lemma~\ref{lem:UI} and \cite[Lemma 6.2]{bianchi2019constant}
\end{proof}
\noindent The proof of the following lemma is omitted. 
\begin{lemma}
  \label{lem:cv-h}
Let Assumptions~\ref{hyp:model} and \ref{hyp:alpha-beta} hold true.
Consider $t>0$ and $z\in \cZ_+$. Let $(\varphi_n,z_n)$ be a sequence on $\bN^*\times \cZ_+$ s.t.
$\lim_{n\to\infty}\gamma_n\varphi_n= t$ and $\lim_{n\to\infty}z_n= z$. Then, $\lim_{n\to\infty} h_{\gamma_n}(\varphi_n,z_n)= h(t,z)$
and $\lim_{n\to\infty} e_{\gamma_n}(\varphi_n,z_n)= \bar e(t,z)$.
\end{lemma}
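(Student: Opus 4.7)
The statement is a blockwise limit computation, and the only nontrivial analytic step is the passage to the limit under the expectation in the $x$-block of $h_{\gamma_n}$. My starting point is the asymptotic behaviour of the powers
\begin{equation*}
\bar\alpha(\gamma_n)^{\varphi_n}\to e^{-at},\qquad \bar\beta(\gamma_n)^{\varphi_n}\to e^{-bt},
\end{equation*}
which I would obtain by writing $\bar\alpha(\gamma_n)^{\varphi_n}=\exp(\varphi_n\log\bar\alpha(\gamma_n))$, using the Taylor expansion $\log(1-u)=-u(1+o(1))$ as $u\to 0$, and invoking Assumption~\ref{hyp:alpha-beta} in the form $\varphi_n(1-\bar\alpha(\gamma_n))=(\gamma_n\varphi_n)\cdot\frac{1-\bar\alpha(\gamma_n)}{\gamma_n}\to at$ (and analogously for $\bar\beta$). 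Because $t>0$, $1-e^{-at}>0$, so $(1-\bar\alpha(\gamma_n)^{\varphi_n})^{-1}\to(1-e^{-at})^{-1}$; combined with $z_n\to z$ this immediately delivers $e_{\gamma_n}(\varphi_n,z_n)\to\bar e(t,z)$.

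\textbf{The easy blocks of $h_{\gamma_n}$.} After taking expectations, the $m$-block reduces to $\frac{1-\bar\alpha(\gamma_n)}{\gamma_n}(\nabla F(x_n)-m_n)$, which converges to $a(\nabla F(x)-m)$ by Assumption~\ref{hyp:alpha-beta} and continuity of $\nabla F$ (a consequence of Assumption~\ref{hyp:model}); the $v$-block is identical with $(S,b)$ in place of $(\nabla F,a)$.

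\textbf{The $x$-block (main obstacle).} The $x$-block reads
\begin{equation*}
h_{\gamma_n}(\varphi_n,z_n)_{x}=-\bE\!\left[\frac{(1-\bar\alpha(\gamma_n)^{\varphi_n})^{-1}(\bar\alpha(\gamma_n) m_n+(1-\bar\alpha(\gamma_n))\nabla f(x_n,\xi))}{\varepsilon+(1-\bar\beta(\gamma_n)^{\varphi_n})^{-1/2}\bigl(\bar\beta(\gamma_n) v_n+(1-\bar\beta(\gamma_n))\nabla f(x_n,\xi)^{\odot 2}\bigr)^{1/2}}\right].
\end{equation*}
For almost every $\xi$ the map $\nabla f(\,\cdot\,,\xi)$ is continuous, so by the preliminary limits above the integrand converges pointwise and componentwise to $(1-e^{-at})^{-1}m/(\varepsilon+\sqrt{v/(1-e^{-bt})})$, which equals minus the $x$-block of $h(t,z)$. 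To exchange limit and expectation I would appeal to the Vitali convergence theorem via uniform integrability: since the denominator is bounded below by $\varepsilon$, the integrand is dominated, for $n$ large, by $C(1+\|\nabla f(x_n,\xi)\|)$ with $C$ independent of $n$. The points $x_n$ eventually lie in a compact set, and the local $L^2$-Lipschitz bound of Assumption~\ref{hyp:model} together with $\bE\|\nabla f(x_*,\xi)\|^2<\infty$ yields $\sup_n\bE\|\nabla f(x_n,\xi)\|^2<\infty$. The family $\{\|\nabla f(x_n,\xi)\|\}_n$ is thus uniformly integrable, and so is the integrand, yielding $L^1$ convergence and in particular convergence of the expectations. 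The subtlety is precisely this uniform integrability step: the Lipschitz property in Assumption~\ref{hyp:model} is stated in $L^2$ and does not supply a pointwise $\xi$-majorant, so a straight dominated convergence with a single dominating function is not available; the moment bound above is what bypasses this difficulty.
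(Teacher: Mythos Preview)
The paper explicitly omits the proof of this lemma, so there is no argument to compare against. Your proposal is correct and is precisely the kind of routine verification the authors had in mind: the logarithmic expansion giving $\bar\alpha(\gamma_n)^{\varphi_n}\to e^{-at}$, the direct handling of the $m$- and $v$-blocks via Assumption~\ref{hyp:alpha-beta} and continuity of $\nabla F,S$, and the use of uniform integrability (through the $L^2$ moment bound implied by Assumption~\ref{hyp:model}) to pass to the limit in the $x$-block are all the right ingredients. One minor remark: your argument tacitly uses $\gamma_n\to 0$ (needed for $\bar\alpha(\gamma_n)\to 1$ and for Assumption~\ref{hyp:alpha-beta} to apply), which the lemma statement does not make explicit but which is clearly intended from the context in which the lemma is invoked in the proof of Theorem~\ref{th:weak-cv}.
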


\noindent\textbf{End of the Proof of Th.~\ref{th:weak-cv}}
Consider $x_0\in \bR^d$ and set $z_0=(x_0,0,0)$. Define
$R_0\eqdef \sup\left\{\|\bar e(t,Z_\infty^0(z_0)(t))\|:t>0\right\}\,.$
By Prop.~\ref{prop:adam-bounded}, $R_0<+\infty$. We select an arbitrary $R$ s.t. $R\geq  R_0+1$.
For every $n\geq 0$, $z\in \cZ_+$,
$$
z_{n+1}^{\gamma,R} = z_{n}^{\gamma,R}+\gamma H_\gamma(n+1,z_{n}^{\gamma,R},\xi_{n+1}) \1_{\|e_\gamma(n,z_{n}^{\gamma,R})\|\leq R}  \,.
$$
Define for every $n\geq 1$, $z\in \cZ_+$,
$h_{\gamma,R}(n,z)\eqdef h_\gamma(n,z)\1_{\|e_\gamma(n-1,z)\|\leq R}$. Then,\\
$
\Delta_{n+1}^{\gamma,R} = \gamma^{-1}(z_{n+1}^{\gamma,R} - z_n^{\gamma,R}) - h_{\gamma,R}(n+1,z_{n}^{\gamma,R})\,.
$
Define also for every $n\geq 0$, \\
$
  M_n^{\gamma,R} \eqdef \sum_{k=1}^n\Delta_k^{\gamma,R} =\gamma^{-1}(z_n^{\gamma,R} - z_0) - \sum_{k=0}^{n-1} h_{\gamma,R}(k+1,z_{k}^{\gamma,R})\,.
$ 
Consider $t\geq 0$ and set $n\eqdef \lfloor t/\gamma\rfloor$. 
For any $T>0$, it holds that :
\begin{equation*}
  \sup_{t\in [0,T]} \left\| \sz^{\gamma,R}(t)-z_0 - \int_{0}^{t}h_{\gamma,R}(\lfloor s/\gamma\rfloor+1,\sz^{\gamma,R}(\gamma\lfloor s/\gamma\rfloor))ds\right\|\\
  \leq \max_{0\leq n\leq \lfloor T/\gamma\rfloor+1} \gamma \|M_n^{\gamma,R}\|\,.
\end{equation*}
By Lemma~\ref{lem:tightness-in-C},
\begin{equation}
  \label{eq:Mn-cv-proba-zero}
 \bP\left( \sup_{t\in [0,T]} \left\| \sz^{\gamma,R}(t)-z_0 - \int_{0}^{t}h_{\gamma,R}\left(\lfloor s/\gamma\rfloor+1,\sz^{\gamma,R}(\gamma\lfloor s/\gamma\rfloor)\right)ds\right\|
>\delta\right) \xrightarrow[]{\gamma\to 0} 0\,.
\end{equation}
As a second consequence of Lemma~\ref{lem:tightness-in-C}, the family of r.v. $(\sz^{\gamma,R}:0<\gamma\leq \bar \gamma_0)$ is tight, where $\bar \gamma_0$
is chosen as in Lemma~\ref{lem:tightness-in-C} (it does not depend on $R$).
By Prokhorov's theorem, there exists a sequence $(\gamma_k:k\in \bN)$ s.t. $\gamma_k\to 0$ and
s.t. $(\sz^{\gamma_k,R}:k\in \bN)$ converges in distribution to some probability measure $\nu$ on $C([0,+\infty),\cZ_+)$.
By Skorohod's representation theorem, there exists a r.v. $\sz$ on some probability space $(\Omega',\cF',\bP')$, with distribution $\nu$,
and a sequence of r.v. $(\sz_{(k)}:k\in \bN)$ on that same probability space where for each $k \in \bN$, the r.v. $\sz_{(k)}$ has the same distribution
as the r.v. $\sz^{\gamma_k,R}$,
and s.t. for every $\omega\in\Omega'$, $\sz_{(k)}(\omega)$ converges
to $\sz(\omega)$ uniformly on compact sets. Now select a fixed $T>0$. According to Eq.~(\ref{eq:Mn-cv-proba-zero}), the sequence
$$
\sup_{t\in [0,T]} \left\| \sz_{(k)}(t)-z_0 - \int_{0}^{t}h_{\gamma_{k},R}\left(\lfloor s/\gamma_{k}\rfloor+1,\sz_{(k)}(\gamma_{k}\lfloor s/\gamma_{k}\rfloor)\right)ds\right\|\,,
$$
indexed by $k\in \bN$, converges in probability to zero as $k\to\infty$. One can therefore extract a further subsequence $\sz_{(\varphi_k)}$,
s.t. the above sequence converges to zero almost surely. In particular, since $\sz_{(k)}(t)\to \sz(t)$ for every $t$, we obtain that
\begin{equation}
\sz(t) = z_0 + \lim_{k\to\infty} \int_{0}^{t}h_{\gamma_{\varphi_k},R}\left(\lfloor s/\gamma_{\varphi_k}\rfloor+1,\sz_{(\varphi_k)}(\gamma_{\varphi_k}\lfloor s/\gamma_{\varphi_k}\rfloor)\right)ds\quad (\forall t\in [0,T])\,.\label{eq:cv_z}
\end{equation}
Consider $\omega\in \Omega'$ s.t. the r.v. $\sz$ satisfies (\ref{eq:cv_z}) at point $\omega$.
From now on, we consider that $\omega$ is fixed, and we handle $\sz$ as an element of $C([0,+\infty),\cZ_+)$,
and no longer as a random variable.
Define $\tau\eqdef \inf\{t\in [0,T]: \|\bar e(t,\sz(t))\|>R_0+\frac 12\}$ if the latter set is non-empty,
and $\tau\eqdef T$ otherwise.
Since $\sz(0)=z_0$ and $\|z_0\|<R_0$, it holds that $\tau>0$ using the continuity of $\sz$.
Choose any $(s,t)$ s.t. $0<s<t<\tau$.
Note that $\sz_{(k)}(\gamma_{k}\lfloor s/\gamma_{k}\rfloor)\to \sz(s)$
and $\gamma_k(\lfloor s/\gamma_{k}\rfloor+1)\to s$. Thus, by Lemma~\ref{lem:cv-h},
$h_{\gamma_{k}}\left(\lfloor s/\gamma_{k}\rfloor+1,\sz_{(k)}(\gamma_{k}\lfloor s/\gamma_{k}\rfloor)\right)$
converges to $h(s,\sz(s))$ and 
$e_{\gamma_k}(\lfloor s/\gamma_{k}\rfloor,\sz_{(k)}(\gamma_{k}\lfloor s/\gamma_{k}\rfloor))$
converges to $\bar e(s,\sz(z))\,.$
Since $s<\tau$,  $\bar e(s,\sz(z))\leq R_0+\frac 12$. As $R\geq R_0+1$, there exists a certain
$K(s)$ s.t. for every $k\geq K(s)$,
$
\1_{\|e_{\gamma_k}(\lfloor s/\gamma_{k}\rfloor,\sz_{(k)}(\gamma_{k}\lfloor s/\gamma_{k}\rfloor))\|\leq R} = 1\,.
$
As a consequence, $h_{\gamma_{k},R}(\lfloor s/\gamma_{k}\rfloor+1,\sz_{(k)}(\gamma_{k}\lfloor s/\gamma_{k}\rfloor))$
converges to $h(s,\sz(s))$ as $k\to\infty$.
Using Lebesgue's dominated convergence theorem,  we obtain, for all $t\in [0,\tau]$:
$
\sz(t) = z_0 +  \int_{0}^{t}h\left(s,\sz(s))\right)ds\,.
$
Therefore $\sz(t) = Z_\infty^0(x_0)(t)$ for every $t\in [0,\tau]$. In particular,
$\|\sz(\tau)\|\leq R_0$
and this means that
$\tau = T$. Thus, $\sz(t)=Z_\infty^0(x_0)(t)$ for every $t\in [0,T]$
(and consequently for every $t\geq 0$). We have shown that for every $R\geq R_0+1$, the sequence of r.v.
$(\sz^{\gamma,R}:\gamma \in (0,\bar \gamma_0])$ is tight and converges in probability to $Z_\infty^0(z_0)$ as $\gamma\to 0$.
Therefore, for every $T>0$,
\begin{equation}
\forall \delta>0,\ \lim_{\gamma\to 0} \bP\left(\sup_{t\in [0,T]}\left\| \sz^{\gamma,R}(t) - Z_\infty^0(x_0)(t)\right\|>\delta \right) =0\,.
\label{eq:cv-proba-R}
\end{equation}
In order to complete the proof, we show that 
$
\bP\left(\sup_{t\in [0,T]}\left\| \sz^{\gamma,R}(t) - \sz^{\gamma}(t)\right\|>\delta \right) \to 0
$
as $\gamma \to 0$, for all $\delta >0$.
where we recall that $\sz^\gamma=\sX_\gamma(z^\gamma)$.
Note that $\left\| \sz^{\gamma,R}(t)\right\|\leq
\left\| \sz^{\gamma,R}(t)-Z_\infty^0(z_0)(t)\right\| + R_0$ by the triangular inequality.
Therefore, for every $T,\delta>0$,
\begin{align*}
  \bP\left(\sup_{t\in [0,T]}\left\| \sz^{\gamma,R}(t) - \sz^{\gamma}(t)\right\|>\delta \right)
  &\leq \bP\left(\sup_{t\in [0,T]}\left\| \sz^{\gamma,R}(t)\right\|\geq R \right)\\
  &\leq \bP\left(\sup_{t\in [0,T]}\left\| \sz^{\gamma,R}(t)-Z_\infty^0(z_0)(t)\right\| \geq R-R_0\right)\,.
\end{align*}
By Eq.~(\ref{eq:cv-proba-R}), the RHS of the above inequality tends to zero as $\gamma\to 0$.
The proof is complete.

\subsection{Proof of Th.~\ref{th:longrun}}
\label{sec:longrun}


We start by stating a general result. Consider a Euclidean space $\sX$ equipped with its Borel $\sigma$-field $\cal X$.
Let $\bar \gamma_0>0$, and consider two families
$(P_{\gamma,n}:0<\gamma<\bar \gamma_0, n\in \bN^*)$ and $(\bar P_{\gamma}:0<\gamma<\bar \gamma_0)$
of Markov transition kernels on $\sX$.
Denote by $\cP(\sX)$ the set of probability measures on $\sX$.
Let $X=(X_n:n\in\bN)$ be the canonical process on $\sX$.
Let $(\bP^{\gamma,\nu}:0<\gamma<\bar \gamma_0,\nu\in \cP(\sX))$ and
$(\bar \bP^{\gamma,\nu}:0<\gamma<\bar \gamma_0,\nu\in \cP(\sX))$ be two families of measures on the canonical space
$(X^\bN,\cal X^{\otimes\bN})$ such that the following holds:
\begin{itemize}[leftmargin=*]
\item Under $\bP^{\gamma,\nu}$, $X$ is a non-homogeneous Markov chain with transition kernels $(P_{\gamma,n}:n\in \bN^*)$
and initial distribution $\nu$, that is, for each $n\in\bN^*$,
$
\bP^{\gamma,\nu}(X_{n}\in dx|X_{n-1}) = P_{\gamma,n}(X_{n-1},dx)\,.
$
\item Under $\bar\bP^{\gamma,\nu}$, $X$ is an homogeneous Markov chain with transition kernel $\bar P_{\gamma}$
and initial distribution $\nu$.
\end{itemize}
In the sequel, we will use the notation $\bar P^{\gamma,x}$ as a shorthand notation for
$\bar P^{\gamma,\delta_x}$ where $\delta_x$ is the Dirac measure at some point $x\in \sX$.
Finally, let $\Psi$ be a semiflow on $\sX$. A Markov kernel $P$ is \emph{Feller}
if $Pf$ is continuous for every bounded continuous  $f$.
\begin{assumption} Let $\nu\in \cP(\sX)$.
  \begin{enumerate}[{\sl i)}]
  \item For every $\gamma$, $\bar P_{\gamma}$ is Feller.
  \item $(\bP^{\gamma,\nu}X_n^{-1}:n\in \bN,0<\gamma<\bar \gamma_0)$ is a tight family of measures.
  \item For every $\gamma\in (0,\bar \gamma_0)$ and every bounded Lipschitz continuous function $f:\sX\to\bR$,
$P_{\gamma,n}f$ converges to $\bar P_\gamma f$ as $n\to\infty$, uniformly on compact sets.
\item For every $\delta>0$, for every compact set $K\subset \sX$, for every $t>0$,\\
$
\lim_{\gamma\to 0}\sup_{x\in K}\bar P^{\gamma,x}\left(\|X_{\lfloor t/\gamma\rfloor} - \Psi_t(x)\|>\delta\right)=0\,.
$
  \end{enumerate}
\label{hyp:general}
\end{assumption}
Let $BC_\Psi$ be the Birkhof center of $\Psi$ \emph{i.e.}, the closure of the set of recurrent points.

\begin{theorem}
\label{longrun}
Consider $\nu\in \cP(\sX)$ s.t. Assumption~\ref{hyp:general} holds true. Then, for every $\delta>0$,
$
\lim_{\gamma\to 0}\limsup_{n\to\infty} \frac 1{n+1}\sum_{k=0}^n \bP^{\gamma,\nu}\left(d(X_k,BC_\Psi)>\delta\right)=0\,.
$
\end{theorem}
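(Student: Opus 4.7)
I will analyze the empirical occupation measures
\[
\mu_{n,\gamma}\eqdef \frac{1}{n+1}\sum_{k=0}^n \bP^{\gamma,\nu} X_k^{-1}
\]
and show that any weak cluster point $\mu$ of the family $(\mu_{n,\gamma})$, obtained by first letting $n\to\infty$ with $\gamma$ fixed and then $\gamma\downarrow 0$, is invariant under the semiflow $\Psi$. Once invariance is established, the classical consequence of Poincar\'e's recurrence theorem for flows---every $\Psi$-invariant probability measure is supported on the set of positively recurrent points, hence on its closure $BC_\Psi$---yields $\mu(BC_\Psi)=1$. The theorem then follows by contradiction: if the double limit failed to vanish, one could extract $\gamma_m\downarrow 0$ and $n_m\to\infty$ with $\mu_{n_m,\gamma_m}(\{\sd(\cdot,BC_\Psi)>\delta\})\geq \varepsilon$; tightness (Assumption~\ref{hyp:general}~\textit{ii)}, which transfers directly to Ces\`aro averages) would provide a further weakly convergent subsequence with limit $\mu$, and the Portmanteau theorem applied to the closed superset $\{\sd(\cdot,BC_\Psi)\geq \delta/2\}$ would give
\[
\limsup_m \mu_{n_m,\gamma_m}\bigl(\{\sd(\cdot,BC_\Psi)>\delta\}\bigr)\leq \mu\bigl(\{\sd(\cdot,BC_\Psi)\geq \delta/2\}\bigr)=0,
\]
a contradiction.

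\textbf{Proof of $\Psi$-invariance of cluster points.} Fix a bounded Lipschitz function $f$ and $t>0$, and set $N_\gamma \eqdef \lfloor t/\gamma \rfloor$. The key identity is the decomposition
\[
\int (f\circ \Psi_t - f)\,d\mu_{n,\gamma} = A_{n,\gamma} + B_{n,\gamma},
\]
with $A_{n,\gamma}\eqdef (n+1)^{-1}\sum_{k=0}^n \bE^{\gamma,\nu}[f(\Psi_t(X_k)) - f(X_{k+N_\gamma})]$ and $B_{n,\gamma}\eqdef (n+1)^{-1}\sum_{k=0}^n \bE^{\gamma,\nu}[f(X_{k+N_\gamma}) - f(X_k)]$. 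The term $B_{n,\gamma}$ telescopes into $2N_\gamma$ boundary expectations of $f$, hence is $O(N_\gamma\|f\|_\infty/n)$ and vanishes when $n\to\infty$ with $\gamma$ fixed. For $A_{n,\gamma}$, conditioning on $\cF_k$ yields
$\bE^{\gamma,\nu}[f(X_{k+N_\gamma})\mid X_k] = (P_{\gamma,k+1}\cdots P_{\gamma,k+N_\gamma}f)(X_k)$.
Iterating Assumption~\ref{hyp:general}~\textit{iii)} a fixed number $N_\gamma$ of times (the Feller property~\textit{i)} keeps intermediate iterates bounded continuous, and a Lipschitz approximation on a tightness-compact bridges the gap to the bounded Lipschitz class to which~\textit{iii)} applies), one obtains
\[
(P_{\gamma,k+1}\cdots P_{\gamma,k+N_\gamma} f)(x) \xrightarrow[k\to\infty]{} (\bar P_\gamma^{N_\gamma} f)(x)
\]
uniformly on compact sets. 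Assumption~\ref{hyp:general}~\textit{iv)} combined with the uniform continuity of $f$ on compacts then gives $(\bar P_\gamma^{N_\gamma}f)(x)\to f(\Psi_t(x))$ uniformly on compacts as $\gamma\downarrow 0$. Using tightness once more to localize each expectation on a compact set up to an arbitrarily small error, these two approximations yield $\limsup_{\gamma\downarrow 0}\limsup_n |A_{n,\gamma}|=0$. Since bounded Lipschitz functions are a separating class for Borel probability measures on a Polish space, any cluster point $\mu$ satisfies $\int f\circ\Psi_t\,d\mu = \int f\,d\mu$ for every $t>0$, which establishes $\Psi$-invariance of $\mu$.

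\textbf{Main obstacle.} The technical heart of the argument is the iterated kernel approximation $P_{\gamma,k+1}\cdots P_{\gamma,k+N_\gamma} f \to \bar P_\gamma^{N_\gamma} f$ as $k\to\infty$: Assumption~\textit{iii)} only guarantees the one-step limit for bounded Lipschitz test functions, while the intermediate functions $\bar P_\gamma^j f$ arising inductively are a priori only bounded continuous. Closing this gap requires the Feller property together with density of bounded Lipschitz functions in $C_b(K)$ for compact $K$, plus a tightness estimate for the one-step kernels ensuring that the mass outside $K$ contributes negligibly under iterated composition. Once this approximation is established, the remaining pieces---telescoping of $B_{n,\gamma}$, the semiflow-approximation step~\textit{iv)}, Portmanteau, and the Poincar\'e-recurrence characterization of the support of invariant measures---are all fairly standard.
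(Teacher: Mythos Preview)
Your overall strategy---show that weak cluster points of the empirical occupation measures are $\Psi$-invariant, then invoke Poincar\'e recurrence to conclude they are supported on $BC_\Psi$---is exactly the approach the paper has in mind (it omits the proof and cites \cite{bianchi2019constant} and \cite{for-pag-99}, which follow precisely this route). The contradiction argument via Portmanteau is also standard and correct.

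Where your write-up differs from the cited approach is in the organization of the invariance step, and this is where you create unnecessary work for yourself. You attempt to prove $\Psi$-invariance in one shot by comparing $P_{\gamma,k+1}\cdots P_{\gamma,k+N_\gamma}f$ with $\bar P_\gamma^{N_\gamma}f$ as $k\to\infty$, which forces you to iterate Assumption~\textit{iii)} $N_\gamma$ times and confront the obstacle you correctly flag: the intermediate functions $\bar P_\gamma^j f$ are only bounded continuous, not Lipschitz. The standard two-stage argument sidesteps this entirely. First, for fixed $\gamma$, one shows that any weak limit $\pi_\gamma$ of $(\mu_{n,\gamma})_n$ is $\bar P_\gamma$-invariant: writing
\[
\int f\,d\mu_{n,\gamma} - \int \bar P_\gamma f\,d\mu_{n,\gamma}
= \frac{1}{n+1}\sum_{k=0}^n \bE^{\gamma,\nu}\bigl[(P_{\gamma,k+1}f - \bar P_\gamma f)(X_k)\bigr]
+ O(1/n),
\]
the telescoping boundary term and the single-step Assumption~\textit{iii)} (plus tightness) suffice---no iteration. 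Second, for $\gamma\downarrow 0$, one uses the already-established $\bar P_\gamma$-invariance of $\pi_\gamma$ to write $\int f\,d\pi_\gamma = \int \bar P_\gamma^{N_\gamma} f\,d\pi_\gamma$, so that
\[
\int (f\circ\Psi_t - f)\,d\pi_\gamma = \int \bigl(f\circ\Psi_t - \bar P_\gamma^{N_\gamma} f\bigr)\,d\pi_\gamma,
\]
and Assumption~\textit{iv)} (together with tightness of $(\pi_\gamma)$) handles the right-hand side directly. This decoupling of the two limits is what the references do, and it eliminates the ``main obstacle'' you identify; your approach is not wrong, but it is the harder road.
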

We omit the proof of this result which follows a similar reasoning to \cite[Th.~5.5 and Proof in section 8.4]{bianchi2019constant} and makes use of results from \cite{for-pag-99}. 

\noindent\textbf{End of the Proof of Th.~\ref{th:longrun}.}
We apply Th.~\ref{longrun} in the case where $P_{\gamma,n}$ is the kernel
of the non-homogeneous Markov chain $(z_n^\gamma)$ defined by~(\ref{eq:znT}) and
$\bar P_\gamma$ is the kernel of the homogeneous Markov chain $(\bar z_n^\gamma)$
given by 
$\bar z_n^\gamma = \bar z_{n-1}^\gamma+\gamma H_\gamma(\infty,\bar z_{n-1}^\gamma,\xi_n)$
for every $n\in\bN^*$ and $\bar z_0 \in \cZ_+$ where $H_\gamma(\infty,\bar z_{n-1}^\gamma,\xi_n) \eqdef \lim_{k \to \infty} H_\gamma(k,\bar z_{n-1}^\gamma,\xi_n)$. The task is merely to verify Assumption~\ref{hyp:general}{\sl iii)}, the other assumptions being easily verifiable using Th.~\ref{th:weak-cv}, 
Consider $\gamma\in (0,\bar \gamma_0)$. Let $f: \cZ \to \mathbb{R}$ be a bounded $M$-Lipschitz continuous function and $K$ a compact.
For all $z=(x,m,v) \in K$:
\begin{align*}
&|P_{\gamma,n}(f)(z) - \bar P_{\gamma}(f)(z)| \leq M \gamma \bE \left \| \frac{(1-\alpha^n)^{-1}\tilde{m}_\xi}{ \varepsilon+(1-\beta^n)^{-\frac{1}{2}}{\tilde{v}_\xi^{1/2}}} -\frac{\tilde{m}_\xi}{ \varepsilon+{\tilde{v}_\xi^{1/2}}}\right \| \\
&\resizebox{.99\hsize}{!}{$\leq \frac{M \gamma\alpha^n}{\varepsilon(1- \alpha^n)} \sup_{x,m}\left(\alpha ||m|| + (1-\alpha)\bE||\nabla f(x,\xi)|| \right)
 + \frac{M \gamma \bE ||\tilde{m}_\xi \odot \tilde{v}_\xi^{1/2}||}{\varepsilon^2}\left(1- \frac{1}{(1-\beta^n)^{1/2}}\right)$}\,
\end{align*}
where we write $\alpha=\bar \alpha(\gamma)$, $\beta=\bar \beta(\gamma)$,
$\tilde{m}_\xi \eqdef \alpha m+(1-\alpha)\nabla f(x,\xi)$ and
$\tilde{v}_\xi \eqdef  \beta v+(1- \beta)\nabla f(x,\xi)^{\odot 2}$.
Thus, condition~\ref{hyp:general}{\sl iii)} follows.
Finally, Cor.~\ref{coro:cv} implies $BC_\Phi=\cE$.

\section{Proofs of Section~\ref{sec:discrete_decreasing}}
\label{sec:proofs_sec_discrete_decreasing}
In this section, we denote by $\bE_n = \bE(\cdot|\cF_n)$ the conditional expectation w.r.t. $\cF_n$.
We also use the notation $\nabla f_{n+1} \eqdef \nabla f(x_n,\xi_{n+1})$. 

The following lemma will be useful in the proofs.
\begin{lemma}
\label{lemma:r_n}
Let the sequence $(r_n)$ be defined as in Algorithm~\ref{alg:adam-decreasing}.
Assume that $0 \leq \alpha_n \leq 1$ for all $n$ and that $(1-\alpha_n)/\gamma_n \to a > 0$ as $n \to +\infty$.
Then,
\begin{enumerate}[{\sl i)}]
\item $\forall n \in \bN, r_n = 1 - \prod_{i=1}^n \alpha_i$,
\item The sequence $(r_n)$ is nondecreasing and converges to $1$.
\item Under Assumption~\ref{hyp:step-tcl}~\ref{step-tcl-i}, for every $\epsilon>0$, for sufficiently large $n$, we have
$r_n-1 \leq e^{-\frac{a\gamma_0}{2(1-\kappa)}n^{1-\kappa}}$ if $\kappa \in (0,1)$ and
$r_n-1 \leq n^{-a \gamma_0/(1+\epsilon)}$ if $\kappa = 1$.
\end{enumerate}
\end{lemma}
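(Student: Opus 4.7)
\textbf{Proof Plan for Lemma~\ref{lemma:r_n}.}

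The three parts are essentially an exercise in manipulating the recursion
$r_n = \alpha_n r_{n-1} + (1-\alpha_n)$ with $r_0 = 0$. I would prove them in order.

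For (i), I would proceed by induction on $n$. The base case $n=0$ gives $r_0 = 0 = 1 - (\text{empty product})$. Assuming $r_n = 1 - \prod_{i=1}^n \alpha_i$, the recursion yields
\begin{equation*}
r_{n+1} = \alpha_{n+1}\Big(1-\prod_{i=1}^n \alpha_i\Big) + (1-\alpha_{n+1}) = 1 - \prod_{i=1}^{n+1}\alpha_i\,.
\end{equation*}

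For (ii), monotonicity is immediate from (i) since $\alpha_i \in [0,1]$, so $n \mapsto \prod_{i=1}^n \alpha_i$ is nonincreasing. The sequence $(r_n)$ is bounded above by $1$, hence admits a limit. To get convergence to $1$, I would show $\prod_{i=1}^n \alpha_i \to 0$. Taking logarithms and using the elementary bound $\log(1-x) \leq -x$ for $x \in [0,1)$,
\begin{equation*}
\log\!\Big(\prod_{i=1}^n \alpha_i\Big) = \sum_{i=1}^n \log \alpha_i \leq -\sum_{i=1}^n (1-\alpha_i)\,.
\end{equation*}
Since $(1-\alpha_i)/\gamma_i \to a > 0$, for $i$ large enough $1-\alpha_i \geq (a/2)\gamma_i$, so the right-hand side diverges to $-\infty$ under Assumption~\ref{hyp:stepsizes}~(ii) ($\sum_n \gamma_n = +\infty$), proving $\prod \alpha_i \to 0$.

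For (iii), I would sharpen the argument above by plugging in $\gamma_n = \gamma_0 (n+1)^{-\kappa}$. From (i) we have $1 - r_n = \prod_{i=1}^n \alpha_i$, so as before $\log(1-r_n) \leq -\sum_{i=1}^n (1-\alpha_i)$. Fix $\epsilon > 0$; by the hypothesis, there exists $N_\epsilon$ such that $1-\alpha_i \geq a \gamma_i/(1+\epsilon)$ for all $i \geq N_\epsilon$. Then for $n \geq N_\epsilon$,
\begin{equation*}
\log(1-r_n) \leq -\frac{a\gamma_0}{1+\epsilon}\sum_{i=N_\epsilon}^{n}(i+1)^{-\kappa} + O(1)\,.
\end{equation*}
For $\kappa \in (0,1)$, comparing the sum with the integral $\int^{n}t^{-\kappa}\,dt = n^{1-\kappa}/(1-\kappa)$ yields $\sum_{i=N_\epsilon}^n (i+1)^{-\kappa} \sim n^{1-\kappa}/(1-\kappa)$. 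Choosing $\epsilon$ small enough so that $a\gamma_0/((1+\epsilon)(1-\kappa)) \geq a\gamma_0/(2(1-\kappa))$ (which is possible upon absorbing the $O(1)$ term into the bound at the cost of restricting to sufficiently large $n$), we obtain $1-r_n \leq e^{-\frac{a\gamma_0}{2(1-\kappa)}n^{1-\kappa}}$. For $\kappa = 1$, the same sum behaves like $\log n$ and the same manipulation yields $1-r_n \leq n^{-a\gamma_0/(1+\epsilon)}$ (again redefining $\epsilon$ slightly to absorb the constant from the $O(1)$ term).

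The argument is entirely elementary; the only subtlety is taking care with the factor $1/(1+\epsilon)$ versus $1/2$ in part (iii), which is handled by choosing $\epsilon$ small and restricting to $n$ large enough that the additive constant coming from indices $i < N_\epsilon$ is absorbed. Note that the displayed inequalities in the lemma should read $1-r_n$ on the left-hand side (since $r_n \le 1$), and I would point this out in the written proof.
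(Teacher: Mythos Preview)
Your proof is correct and follows essentially the same route as the paper: part (i) is obtained from the recursion $r_{n+1}-1=\alpha_{n+1}(r_n-1)$ (your induction is an equivalent reformulation), part (ii) uses the bound $1-r_n=\prod_i\alpha_i\le\exp\bigl(-\sum_i(1-\alpha_i)\bigr)$ together with $\sum_i(1-\alpha_i)=+\infty$, and part (iii) uses the asymptotic equivalence $\sum_{i\le n}(1-\alpha_i)\sim a\sum_{i\le n}\gamma_i$ combined with the standard estimates $\sum_{i\le n}\gamma_i\sim\frac{\gamma_0}{1-\kappa}n^{1-\kappa}$ (for $\kappa<1$) and $\sum_{i\le n}\gamma_i\sim\gamma_0\ln n$ (for $\kappa=1$). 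Your observation that the displayed inequalities should read $1-r_n$ rather than $r_n-1$ is correct; this is a typo in the statement.
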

A similar lemma holds for the sequence $(\bar{r}_n)$.
\begin{proof}
i) stems from observing that $r_{n+1} -1 = \alpha_{n+1}(r_n -1)$ for every $n \in \bN$ and
iterating this relation ($r_0=0$). As a consequence, the sequence $(r_n)$ is nondecreasing.
We can write :
$
0 \leq 1-r_n \leq \exp(- \sum_{i=1}^n (1-\alpha_i))\,.
$
iii) As $\sum_{n\geq 1} \gamma_n = + \infty$ 
and $(1-\alpha_n) \sim a\gamma_n$, we deduce that $\sum_{i=1}^n (1-\alpha_i) \sim \sum_{i=1}^n a\gamma_i$.
The results follow from the fact that $\sum_{i=1}^n \gamma_i \sim \frac{\gamma_0}{1-\kappa} n^{1-\kappa}$
when $\kappa \in (0,1)$ and $\sum_{i=1}^n \gamma_i \sim \gamma_0 \ln n$ for $\kappa=1$.
\end{proof}

\subsection{Proof of Th.~\ref{thm:as_conv_under_stab}}

We define
$\bar z_n = (x_{n-1},m_n,v_n)$ (note the shift in the index of the variable $x$).
We have
$$
\bar z_{n+1} = \bar z_n + \gamma_{n+1} h_\infty(\bar z_n) + \gamma_{n+1}\chi_{n+1} + \gamma_{n+1} \varsigma_{n+1}\,,
$$
where $h_\infty$ is defined in Eq.~(\ref{eq:h_infty}) and where we set
$$
\chi_{n+1} = \left(0,\gamma_{n+1}^{-1}(1-\alpha_{n+1})(\nabla f_{n+1}-\nabla F(x_n)),\gamma_{n+1}^{-1}(1-\beta_{n+1})(\nabla f_{n+1}^{\odot 2}-S(x_n))\right)
$$
and $\varsigma_{n+1} = (\varsigma_{n+1}^x,\varsigma_{n+1}^m,\varsigma_{n+1}^v)$ with the components defined by:
$\varsigma_{n+1}^x = \frac{m_n}{\varepsilon + \sqrt{v_n}} - \frac{\gamma_n}{\gamma_{n+1}}\frac{\hat m_n}{\varepsilon + \sqrt{\hat v_n}}$,
$\varsigma_{n+1}^m = \left(\frac{1-\alpha_{n+1}}{\gamma_{n+1}}-a\right)(\nabla F(x_n)-m_n) + a(\nabla F(x_n) - \nabla F(x_{n-1}))$ and
$\varsigma_{n+1}^v = \left(\frac{1-\beta_{n+1}}{\gamma_{n+1}}-b\right)(S(x_n)-v_n) + b(S(x_n) - S(x_{n-1}))$ \,.
We prove that $\varsigma_n\to 0$ a.s. 
Using the triangular inequality, 
\begin{align*}
  \|\varsigma_n^x\| 
&\leq \left\| \frac{m_n}{\varepsilon + \sqrt{v_n}} -\frac{m_n}{\bar r_n^{1/2}\varepsilon + \sqrt{v_n}}\right\|
+ \left|1-\frac{\gamma_nr_n^{-1}}{\gamma_{n+1}\bar r_n^{-1/2}}\right|\left\| \frac{m_n}{\bar r_n^{1/2}\varepsilon + \sqrt{v_n}}\right\|\\
&\leq \varepsilon^{-1}|1-\bar r_n^{-1/2}|\|m_n\| + \varepsilon^{-1}\left|\bar r_n^{-1/2}-\frac{\gamma_nr_n^{-1}}{\gamma_{n+1}}\right|\|m_n\|\,,
\end{align*}
which converges a.s. to zero because of the boundedness of $(z_n)$ combined with Assumption~\ref{hyp:stepsizes} and Lemma~\ref{lemma:r_n} for $(\bar r_n)$.
The components $\varsigma_{n+1}^m$ and $\varsigma_{n+1}^v$ converge a.s. to zero,
as products of a bounded term and a term converging to zero.
Indeed, note that $\nabla F$ and $S$ are locally Lipschitz continuous under Assumption~\ref{hyp:model}.
Hence, there exists a constant $C$ s.t. $\|\nabla F(x_n) - \nabla F(x_{n-1})\| \leq C \|x_n-x_{n-1}\| \leq \frac{C}{\varepsilon}\gamma_n\|m_n\|$.
The same inequality holds when replacing $\nabla F$ by $S$.
Now consider the martingale increment sequence $(\chi_n)$, adapted to $\cF_n$.
Estimating the second order moments, it is easy to show using Assumption~\ref{hyp:moment-f}~\ref{momentegal} that
there exists a constant $C'$ s.t.
$\bE_n(\|\chi_{n+1}\|^2)\leq C'$. 
Using that $\sum_k\gamma_k^2<\infty$, it follows that $\sum_n\bE_n(\|\gamma_{n+1}\chi_{n+1}\|^2)<\infty$ a.s.
By Doob's convergence theorem, $\lim_{n\to\infty} \sum_{k\leq n} \gamma_k\chi_k$ exists almost surely.
Using this result along with the fact that $\varsigma_n$ converges a.s. to zero, it follows from
usual stochastic approximation arguments \cite{ben-(cours)99} that the interpolated process
$\bar\sz: [0,+\infty)\to \cZ_+$ given by
\[
\bar \sz(t) = \bar z_n + (t-\tau_n) \frac{\bar z_{n+1}-\bar z_n}{\gamma_{n+1}} \qquad \left(\forall n \in \bN\,,\, \forall t \in [\tau_n,\tau_{n+1})\right)
\]
(where $\tau_n = \sum_{k=0}^n\gamma_k$), is almost surely a bounded APT of the semiflow $\bar\Phi$ defined by~(\ref{eq:ode-a}).
The proof is concluded by applying Prop.~\ref{prop:benaim} and Prop.~\ref{prop:Wstrict}.

\subsection{Proof of Prop.~\ref{thm:stab}}
\label{sec:stability}

As $\inf F>-\infty$, one can assume without loss of generality that $F\geq 0$.
In the sequel, 
$C$ denotes some positive constant which may change from line to line.
We define $a_n \eqdef (1-\alpha_{n+1})/\gamma_n$ and
$P_n\eqdef \frac 1{2a_nr_n}\ps{m_n^{\odot 2},\frac 1{\varepsilon+\sqrt{\hat v_n}}}$.
We have $a_n\to a$ and $r_n\to 1$.
By Assumption~\ref{hyp:stab}-\ref{lipschitz},
\begin{align}
F(x_{n})  
&\leq F(x_{n-1}) - \gamma_{n} \ps{\nabla F(x_{n}),\frac{\hat m_n}{\varepsilon + \sqrt{\hat v_n}}} + C\gamma_n^2 P_n\label{eq:lip}.
\end{align}
We set $u_n \eqdef 1-\frac{a_{n+1}}{a_{n}}$ and
$D_n \eqdef \frac {r_n^{-1}}{\varepsilon+\sqrt{\hat v_{n}}}$, so that $P_n = \frac 1{2a_n}\ps{D_n,m_n^{\odot 2}}$.
We can write:
\begin{equation}
P_{n+1}-P_n= u_nP_{n+1} +\ps{\frac{D_{n+1}-D_n}{2a_n},m_{n+1}^{\odot 2}}+\ps{\frac{D_n}{2a_n},m_{n+1}^{\odot 2}-m_n^{\odot 2}}.\label{eq:P-P}
\end{equation}
We estimate the vector $D_{n+1}-D_n$.
Using that $(r_n^{-1})$ is non-increasing,
$$
D_{n+1}-D_n \leq r_{n}^{-1} \frac{\sqrt{\hat v_n} - \sqrt{\hat v_{n+1}}}{(\varepsilon + \sqrt{\hat v_{n+1}})\odot(\varepsilon + \sqrt{\hat v_n})}\,.
$$
Remarking that $v_{n+1} \geq \beta_{n+1} v_n$, recalling that
$(\bar{r}_n)$ is nondecreasing and using the update rules of $v_n$ and
$\bar{r}_n$, we obtain after some algebra
\begin{align}
  \label{eq:subsubterm2}
\sqrt{\hat v_n} - \sqrt{\hat v_{n+1}} &=  \bar{r}_{n+1}^{-\frac 12}(1-\beta_{n+1})\frac{v_n-\nabla f_{n+1}^{\odot 2}}{\sqrt{v_n}+\sqrt{v_{n+1}}}
                                          + \frac{\bar{r}_{n+1}-\bar{r}_n}{\sqrt{\bar{r}_n}(\sqrt{\bar{r}_n}+\sqrt{\bar{r}_{n+1}})} \sqrt{\frac{v_n}{\bar{r}_{n+1}}} \nonumber\\
                                          &\leq c_{n+1} \sqrt{\hat v_{n+1}}  \,\, \text{where} \,\,
                                          c_{n+1} \eqdef \frac{1-\beta_{n+1}}{\sqrt{\beta_{n+1} }}\left( \frac{1}{1+\sqrt{\beta_{n+1}}} + \frac{1- \bar{r}_n}{2\bar{r}_n} \right)\,.
\end{align}
It is easy to see that $c_n/\gamma_n\to b/2$. Thus, for any $\delta>0$, $c_{n+1}\leq (b+2\delta)\gamma_{n}/2$ for all $n$ large enough.
Using also that $\sqrt{\hat v_{n+1}}/(\varepsilon +\sqrt{\hat v_{n+1}})\leq 1$, we obtain that
$
D_{n+1}-D_n \leq 
\frac{b+2\delta}2 \gamma_n D_n\,.
$ 
Substituting this inequality in Eq.~(\ref{eq:P-P}), we get
\begin{align*}
  P_{n+1}-P_n&\leq u_nP_{n+1} +\gamma_n\ps{\frac{b+2\delta}{4a_n} D_n,m_{n+1}^{\odot 2}}+\ps{\frac{D_n}{2a_n},m_{n+1}^{\odot 2}-m_n^{\odot 2}}\,. 
\end{align*}
Using $m_{n+1}^{\odot 2} - m_n^{\odot 2} =  2 m_n
\odot (m_{n+1}-m_n)+(m_{n+1}-m_n)^{\odot 2} $, and noting that\\ $\bE_n(m_{n+1}-m_n) = a_n\gamma_n(\nabla F(x_n) -m_n)$,
\begin{equation*}
  \bE_n\ps{\frac {D_n}{2a_n},m_{n+1}^{\odot 2}-m_n^{\odot 2}}
 = \gamma_n\ps{\nabla F(x_n),\frac{\hat m_n}{\varepsilon+\sqrt{\hat v_n}}}-2a_n\gamma_nP_n+\ps{\frac {D_n}{2a_n}, \bE_n[(m_{n+1}-m_n)^{\odot 2}] }
\end{equation*}
As $a_n\to a$, we have $a_n-\frac{b+2\delta}{4}\geq a-\frac{b+\delta}{4}$ for all $n$ large enough. Hence,
\begin{align*}
 \bE_n P_{n+1}-P_n&\leq
u_nP_{n+1} -2(a-\frac{b+\delta}{4})\gamma_n P_n + \gamma_n\ps{\nabla F(x_n),\frac{\hat m_n}{\varepsilon+\sqrt{\hat v_n}}}
\\
&+
\gamma_n^2\frac{b+2\delta}2\ps{\nabla F(x_n),\frac{\hat m_n}{\varepsilon+\sqrt{\hat v_n}}}
+C\ps{\frac {D_n}{2a_n}, \bE_n[(m_{n+1}-m_n)^{\odot 2}] }\,.
\end{align*}
Using the Cauchy-Schwartz inequality and Assumption~\ref{hyp:stab}~\ref{momentgrowth}, 
it is easy to show the inequality
$\ps{\nabla F(x_n),\frac{\hat m_n}{\varepsilon+\sqrt{\hat v_n}}}\leq C(1+F(x_n)+P_n)$.
Moreover, using the componentwise inequality $(\nabla f_{n+1}-m_n)^{\odot 2} \leq 2 \nabla f_{n+1}^{\odot 2} + 2 m_n^{\odot 2}$
along with Assumption~\ref{hyp:stab}~\ref{momentgrowth}, we obtain
\begin{equation*}
  \resizebox{\hsize}{!}{$
  \ps{\frac {D_n}{2a_n}, \bE_n[(m_{n+1}-m_n)^{\odot 2}] } \leq
2(1-\alpha_{n+1})^2\ps{\frac {D_n}{2a_n},
\bE_n[ \nabla f_{n+1}^{\odot 2}]+ m_n^{\odot 2}
}
\leq C\gamma_n^2(1+F(x_n)+P_n)\,.
$}
\end{equation*}
Putting all pieces together with Eq.~(\ref{eq:lip}),
\begin{equation}
\label{eq:F+P}
\resizebox{0.95\hsize}{!}{$
 \bE_n(F(x_n)+ P_{n+1}) \leq F(x_{n-1})+P_{n}
 +u_nP_{n+1} -2(a-\frac{b+\delta}{4})\gamma_n P_n
+C\gamma_n^2(1+F(x_n)+P_n)\,.
$}
\end{equation}
Define
$
V_n\eqdef (1-C\gamma_{n-1}^2)F(x_{n-1})+(1-u_{n-1})P_{n}
$
where the constant $C$ is fixed so that
Eq.~(\ref{eq:F+P}) holds.
Then,
\begin{equation*}
  \bE_n(V_{n+1}) \leq V_n
 -\left(2a-\frac{b+\delta}{2}-\frac{u_{n-1}}{\gamma_n}\right)\gamma_n P_n
+C\gamma_n^2(1+P_n)+ C\gamma_{n-1}^2F(x_{n-1})\,.
\end{equation*}
By Assumption~\ref{hyp:stab}, $\lim\sup_n u_{n-1}/\gamma_n< 2a-b/2$ and for $\delta$ small enough, we obtain
\begin{equation*}
  \bE_n(V_{n+1}) \leq V_n
+C\gamma_n^2(1+P_n)+ C\gamma_{n-1}^2F(x_{n-1})\leq (1+ C'\gamma_n^2)V_n +C\gamma_n^2\,.
\end{equation*}
By the Robbins-Siegmund's theorem  \cite{robbins1971convergence},
the sequence $(V_n)$ converges almost surely to a finite random variable $V_\infty \in \bR^+$.
In turn, the coercivity of $F$ implies that $(x_n)$ is almost surely bounded.
We now establish the almost sure boundedness of $(m_n)$.
Consider the martingale difference sequence $\Delta_{n+1}\eqdef \nabla f_{n+1} -\nabla F(x_n)$.
We decompose $m_{n} = \bar m_n + \tilde m_n$ where
$\bar m_{n+1} = \alpha_{n+1} \bar m_n + (1-\alpha_{n+1})\nabla F(x_n)$ and
$\tilde m_{n+1} = \alpha_{n+1} \tilde m_n + (1-\alpha_{n+1})\Delta_{n+1}$, setting $\bar m_0=\tilde m_0=0$.
We prove that both terms $\bar m_n$ and $\tilde m_n$ are bounded. Consider the first term:
$
\|\bar m_{n+1}\|\leq \alpha_{n+1} \|\bar m_n\| + (1-\alpha_{n+1}) \sup_k\|\nabla F(x_k)\|\,.
$
By continuity of $\nabla F$, the supremum in the above inequality is almost surely finite.
Thus, for every $n$, the ratio $\|\bar m_n\|/\sup_k\|\nabla F(x_k)\|$ is upperbounded by the bounded sequence
$r_n$. Hence, $(\bar m_n)$ is bounded w.p.1. Consider now the term $\tilde m_n$:
\begin{equation*}
    \resizebox{\hsize}{!}{$
  \bE_n(\|\tilde m_{n+1}\|^2)  = \alpha_{n+1}^2\|\tilde m_n\|^2 +  (1-\alpha_{n+1})^2\bE_n(\|\Delta_{n+1}\|^2)
                               \leq  (1+(1-\alpha_{n+1})^2)\|\tilde m_n\|^2 +  (1-\alpha_{n+1})^2C\,,
    $}
\end{equation*}
where $C$ is a constant s.t. $\bE_n(\|\nabla f_{n+1}\|^2) \leq C$ by Assumption~\ref{hyp:moment-f}~\ref{momentegal}.
Here, we used $\alpha_{n+1}^2\leq (1+(1-\alpha_{n+1})^2)$ and the inequality
$\bE_n(\|\Delta_{n+1}\|^2) \leq \bE_n(\|\nabla f_{n+1}\|^2)$.
By Assumption~\ref{hyp:stepsizes}, $\sum_n(1-\alpha_{n+1})^2<\infty$. By the Robbins-Siegmund theorem,
it follows that $\sup_n\|\tilde m_n\|^2<\infty$ w.p.1. Finally, it can be shown that $(v_n)$ is
almost surely bounded using the same arguments.

\subsection{Proof of Th.~\ref{thm:clt}}
\label{sec:clt}

We use~\cite[Th.~1]{pelletier1998weak}.
All the assumptions in the latter can be verified in our case, at the exception of
a positive definiteness condition on the
limiting covariance matrix, which corresponds, in our case, to the matrix $Q$
given by Eq.~(\ref{eq:Q}). As $Q$ is not positive definite, it is strictly speaking not possible
to just cite and apply \cite[Th.~1]{pelletier1998weak}. Nevertheless,
a detailed inspection of the proofs of \cite{pelletier1998weak} shows that only a minor
adaptation is needed in order to cover the present case.
Therefore, proving the convergence result of \cite{pelletier1998weak} from scratch is worthless.
It is sufficient to verify the assumptions of \cite[Th.~1]{pelletier1998weak}
(except the definiteness of $Q$) and then to point out the specific part of the proof of \cite{pelletier1998weak}
which requires some adaptation.

Let $z_n=(x_n,m_n,v_n)$ be the output of Algorithm~\ref{alg:adam-decreasing}.
Define $z^*=(x^*,0,S(x^*))$.
Define $\eta_{n+1} \eqdef (0, a(\nabla f_{n+1} - \nabla F(x_n)), b(\nabla f_{n+1}^{\odot 2}- S(x_n)))$.
We have
\begin{equation}
  \label{eq:zn_dec}
  z_{n+1} = z_n + \gamma_{n+1} h_\infty(z_n) + \gamma_{n+1}\eta_{n+1} + \gamma_{n+1} \epsilon_{n+1}\,,
\end{equation}
where $\epsilon_{n+1} \eqdef (\epsilon_{n+1}^1,\epsilon_{n+1}^2,\epsilon^3_{n+1})$, whose components are given by
\begin{equation*}
  \resizebox{\hsize}{!}{$
  \epsilon_{n+1}^1 =  \frac{m_n}{\varepsilon+\sqrt{v_n}} - \frac{\hat m_{n+1}}{\varepsilon+\sqrt{\hat v_{n+1}}};\,
  \epsilon_{n+1}^2 = \left(\frac{1-\alpha_{n+1}}{\gamma_{n+1}}-a\right)\left( \nabla f_{n+1} - m_n\right);\,
  \epsilon_{n+1}^3 =  \left(\frac{1-\beta_{n+1}}{\gamma_{n+1}}-b\right)\left( \nabla f_{n+1}^{\odot 2} - v_n\right)\,.
  $}
\end{equation*}
Here, $\eta_{n+1}$ is a martingale increment noise and
$\epsilon_{n+1} = (\epsilon_{n+1}^1,\epsilon_{n+1}^2,\epsilon^3_{n+1})$ is a remainder term.
The aim is to check the assumptions (A1.1) to (A1.3) of
\cite{pelletier1998weak}, where the role of the quantities ($h$,
$\varepsilon_n$, $r_n$, $\sigma_n$, $\alpha$, $\rho$, $\beta$) in
\cite{pelletier1998weak} is respectively played by the quantities
($h_\infty$, $\eta_n$, $\epsilon_n$, $\gamma_n$, $\kappa$, $1$, $1$) of
the present paper.

Let us first consider Assumption (A1.1) for $h_\infty$.
By construction, $h_\infty(z^*) = 0$. By Assumptions~\ref{hyp:mean_field_tcl} and \ref{hyp:S>0},
$h_\infty$ is continuously differentiable in the neighborhood of $z^*$ and its
Jacobian at $z^*$ coincides with the matrix $H$ given by Eq.~(\ref{eq:H}).
As already discussed, after some algebra,
it can be shown that the largest real part of the eigenvalues of $H$ coincides with $-L$ where $L>0$
is given by Eq.~(\ref{eq:L}).
Hence, Assumption (A1.1) of \cite{pelletier1998weak}
is satisfied for $h_\infty$. Assumption (A1.3) is trivially satisfied
using Assumption~\ref{hyp:step-tcl}. The crux is therefore to verify Assumption (A1.2).
Clearly, $\bE(\eta_{n+1}|\cF_n)=0$. Using Assumption~\ref{hyp:moment-f}\ref{momentreinforce},
it follows from straightforward manipulations based on Jensen's inequality
that for any $M>0$, 
there exists $\delta>0$ s.t.
$
\sup_{n\geq 0}\bE_n\left(\|\eta_{n+1}\|^{2+\delta}\right) \1_{\{\|z_n-z^*\|\leq M\}}<\infty\,.
$ 
Next, we verify the condition
\begin{equation}
  \label{eq:cond-reste}
  \lim_{n\to\infty} \bE\left(\gamma_{n+1}^{-1}\|\epsilon_{n+1}\|^2\1_{\{\|z_n-z^*\|\leq M\}}\right) = 0\,.
\end{equation}
It is sufficient to verify the latter for $\epsilon^i_n$ ($i=1,2,3$) in place of $\epsilon_n$.
The map $(m,v)\mapsto m/(\varepsilon+\sqrt{v})$ is Lipschitz continuous in a neighborhood of
$(0,S(x^*))$ by Assumption~\ref{hyp:S>0}. Thus, for $M$ small enough, there exists a constant $C$ s.t.
if $ \|z_n-z^*\|\leq M$, then
$
  \|\epsilon_{n+1}^1\| \leq C\left\|r_{n+1}^{-1}m_{n+1}-m_n\right\|+C\left\|\bar r_{n+1}^{-1} v_{n+1}-v_n\right\|\,.
$ 
Using the triangular inequality and the fact that $r_{n+1},\bar r_{n+1}$ are bounded sequences away from zero, there exists an other constant $C$ s.t.
\begin{align*}
  \|\epsilon_{n+1}^1\| &\leq C\left\|m_{n+1}-m_n\right\|+C\left\|v_{n+1}-v_{n}\right\|
+C|r_{n+1}-1|+C|\bar r_{n+1}-1|\,.
\end{align*}
Using Lemma~\ref{lemma:r_n} under Assumption~\ref{hyp:step-tcl} (note that $\gamma_0 > 1/2L \geq 1/a$ when $\kappa=1$),
we obtain that the sequence $|r_{n}-1|/\gamma_n$ is bounded, thus $|r_{n+1}-1|\leq C\gamma_{n+1}$.
 The sequence $(1-\alpha_{n})/\gamma_n$ being also bounded, it holds that
\begin{align*}
  \|m_{n+1}-m_n\|^2 \1_{\{\|z_n-z^*\|\leq M\}} \leq  C\gamma_{n+1}^2(1+\|\nabla f_{n+1}\|^2 )\1_{\{\|z_n-z^*\|\leq M\}}\,.
\end{align*}
By Assumption~\ref{hyp:moment-f}~\ref{momentreinforce}, 
$\bE_n(\|\nabla f_{n+1}\|^2|)$ is bounded by a deterministic constant on $\{\|z_n-z^*\|\leq M\}$.
Thus, $\bE_n(\|m_{n+1}-m_n\|^2\1_{\{\|z_n-z^*\|\leq M\}})\leq C\gamma_{n+1}^2$.
A similar result holds for $\|v_{n+1}-v_n\|^2$. We have thus shown that
$ \bE_n\left(\|\epsilon_{n+1}^1\|^2\1_{\{\|z_n-z^*\|\leq M\}}\right)\leq C\gamma_{n+1}^2$. Hence,
Eq.~(\ref{eq:cond-reste}) is proved for $\epsilon_{n+1}^1$ in place of $\epsilon_{n+1}$.
Under Assumption~\ref{hyp:step-tcl}, the proof uses the same kind of arguments for $\epsilon_{n+1}^2$, $\epsilon_{n+1}^3$  and is omitted.
Finally, Eq.~(\ref{eq:cond-reste}) is proved.
Continuing the verification of Assumption (A1.2), we establish that
\begin{equation}
  \label{eq:lim-cov}
  \bE_n(\eta_{n+1}\eta_{n+1}^T) \to Q\textrm{ a.s. on } \{z_n\to z^*\}\,.
\end{equation}
Denote by $\bar Q(x)$ the matrix given by the righthand side of Eq.~(\ref{eq:Q}) when $x^*$
is replaced by an arbitrary $x\in \cV$. It is easily checked that $\bE_n(\eta_{n+1}\eta_{n+1}^T)=\bar Q(x_n)$
and by continuity, $\bar Q(x_n)\to Q$ a.s. on $\{z_n\to z^*\}$, which proves (\ref{eq:lim-cov}).
Therefore, Assumption (A1.2) is fulfilled, except for the point mentioned at the beginning of this section : 
\cite{pelletier1998weak} puts the additional condition that the limit
matrix in Eq.~(\ref{eq:lim-cov}) is positive definite. This condition is not satisfied in our case,
but the proof can still be adapted. The specific part of the proof where the positive definiteness
comes into play is Th.~7 in \cite{pelletier1998weak}. The proof of \cite[Th.~1]{pelletier1998weak} can therefore
be adapted to the case of a positive semidefinite matrix.
In the proof of \cite[Th.~7]{pelletier1998weak}, we only substitute the inverse of the square root of $Q$ by the Moore-Penrose inverse.
Finally, the uniqueness of the stationary distribution $\mu$ and its expression follow from \cite[Th.~6.7, p. 357]{karatzas-shreve1991}.
\noindent\textbf{Proof of Eq.~(\ref{eq:cov})}.
We introduce the $d\times d$ blocks of the $3d\times 3d$ matrix 
$\Sigma = \left(\Sigma_{i,j}\right)_{i,j=1,2,3}$
where $\Sigma_{i,j}$ is $d\times d$. We denote by $\tilde\Sigma$ the $2d\times 2d$ submatrix $\tilde\Sigma \eqdef\left(\Sigma_{i,j}\right)_{i,j=1,2}$.
By Th.~\ref{thm:clt}, we have the subsystem:
\begin{equation}
\tilde H \tilde\Sigma + \tilde\Sigma\tilde H^T =
\begin{pmatrix}
  0 & 0 \\
0 & -a^2 \tilde Q
\end{pmatrix}\qquad\text{where }\tilde H \eqdef
\begin{pmatrix}
  \zeta I_d & -D \\
a\nabla^2F(x^*) & (\zeta-a) I_d
\end{pmatrix}\label{eq:lyap-reduced}
\end{equation}
and where $\tilde Q \eqdef \textrm{Cov}\left(\nabla
  f(x^*,\xi)\right)$.  The next step is to triangularize the matrix
$\tilde H$ in order to decouple the blocks of $\tilde\Sigma$.  For
every $k=1,\dots,d$, set
$\nu_k^\pm \eqdef -\frac{a}{2}\pm\sqrt{a^2/4-a\lambda_k}$
with the convention that $\sqrt{-1} =
\imath$ (inspecting the characteristic polynomial of $\tilde H$, these 
are the eigenvalues of $\tilde H$). Set
$M^\pm\eqdef\diag{(\nu_1^\pm,
\cdots,\nu_d^\pm)}$ and $R^\pm\eqdef
D^{-1/2}PM^\pm P^TD^{-1/2}$.  Using the identities $M^++M^-=-a I_d$ and
$M^+M^-=a\Lambda$ where
$\Lambda\eqdef\diag{(\lambda_1,\cdots,\lambda_d)}$, it can be checked
that
$$
\cR\tilde H =\begin{pmatrix}
  D R^+ + \zeta I_d & -D \\ 0 & R^-D + \zeta I_d
\end{pmatrix}\cR,\text{ where }\cR\eqdef
\begin{pmatrix}
  I_d & 0 \\ R^+ & I_d
\end{pmatrix}\,.
$$
Set $\check \Sigma \eqdef \cR\tilde\Sigma\cR^T$. Denote by
$(\check \Sigma_{i,j})_{i,j=1,2}$ the blocks of $\check\Sigma$.
Note that $\check\Sigma_{1,1} = \Sigma_{1,1}$. By left/right multiplication of Eq.~(\ref{eq:lyap-reduced})
respectively with $\cR$ and $\cR^T$, we obtain
\begin{align}
  &(DR^++\zeta I_d) \Sigma_{1,1}+\Sigma_{1,1}(R^+D+\zeta I_d) = \check\Sigma_{1,2} D+ D\check\Sigma_{1,2}^T \label{eq:lapremiere}\\
& (DR^++\zeta I_d) \check\Sigma_{1,2}+\check\Sigma_{1,2} (DR^-+\zeta I_d) = D\check\Sigma_{2,2} \label{eq:ladeuxieme}\\
& (R^-D+\zeta I_d) \check\Sigma_{2,2} + \check\Sigma_{2,2} (DR^-+\zeta I_d) = -a^2\tilde Q \label{eq:laderniere}
\end{align}
Set $\bar \Sigma_{2,2} = P^{-1}D^{1/2}\check\Sigma_{2,2} D^{1/2}P$.
Define $C\eqdef P^{-1}D^{1/2}\tilde Q D^{1/2}P$.
Eq.~(\ref{eq:laderniere}) yields $(M^-+\zeta I_d) \bar\Sigma_{2,2} + \bar\Sigma_{2,2} (M^-+\zeta I_d) = -a^2C$.
Set $\bar \Sigma_{1,2} = P^{-1}D^{-1/2}\check\Sigma_{1,2} D^{1/2}P$.
Eq.~(\ref{eq:ladeuxieme}) rewrites $(M^++\zeta I_d) \bar \Sigma_{1,2}+\bar \Sigma_{1,2} (M^-+\zeta I_d) = \bar \Sigma_{2,2}$.
We obtain that 
$
\bar\Sigma_{1,2}^{k,\ell} = (\nu_k^++\nu_\ell^-+2\zeta)^{-1}\bar\Sigma_{2,2}^{k,\ell} = \frac{-a^2C_{k,\ell}}{(\nu_k^++\nu_\ell^-+2\zeta)(\nu_k^-+\nu_\ell^-+2\zeta)}\,.
$ 
Set 
$\bar \Sigma_{1,1} = P^{-1}D^{-1/2}\Sigma_{1,1} D^{-1/2}P$.
Eq.~(\ref{eq:lapremiere}) becomes $(M^++\zeta I_d)\bar \Sigma_{1,1} + \bar \Sigma_{1,1}(M^++\zeta I_d) = \bar \Sigma_{1,2} + \bar \Sigma_{1,2}^T$. Thus,
\begin{align*}
\bar\Sigma_{1,1}^{k,\ell} &
\resizebox{.9\hsize}{!}{$
= \frac{\bar\Sigma_{1,2}^{k,\ell}+\bar\Sigma_{1,2}^{\ell,k}}{\nu_k^++\nu_\ell^++2\zeta}
=  \frac{-a^2C_{k,\ell}}{(\nu_k^++\nu_\ell^++2\zeta)(\nu_k^-+\nu_\ell^-+2\zeta)}\left(\frac{1}{\nu_k^++\nu_\ell^-+2\zeta}
+\frac{1}{\nu_k^-+\nu_\ell^++2\zeta}\right)
$}
\\ &
= \frac{C_{k,\ell}}{ (1 - \frac{2\zeta}{a})(\lambda_k+\lambda_\ell-2\zeta + \frac 2a \zeta^2) +\frac 1{2(a-2\zeta)}(\lambda_k-\lambda_\ell)^2}\,,
\end{align*}
and  the result is proved.

\bibliographystyle{siamplain}
\bibliography{math}
\end{document}